\newcommand\V[1]  { \mathbf{#1} }
\newcommand\B[1]  { \boldsymbol{#1} }
\newcommand\up[1] {\mathrm{#1}}
\newcommand\set[1] {\mathcal{#1}}
\newcommand\sset[1] {\mathsmaller{\mathcal{#1}}}
\newcommand{\pun}{\hspace{0.2mm}\B{\cdot}\hspace{0.2mm}}
\acrodef{SVM}{support vector machine}
\acrodef{SRM}{structural risk minimization}
\acrodef{NN}{neural network}
\acrodef{NB}{naive Bayes}
\acrodef{LR}{logistic regression}
\acrodef{CRF}{conditional random field}
\acrodef{LP}{linear program}
\acrodef{GP}{geometric program}
\acrodef{ERM}{empirical risk minimization}
\acrodef{RKHS}{reproducing kernel Hilbert space}
\acrodef{MRC}{minimax risk classifier}
\acrodef{RRM}{robust risk minimization}
\acrodef{LLN}{law of large numbers}
\acrodef{MMD}{maximum mean discrepancy}
\acrodef{DRLR}{distributionally robust logistic regression}
\acrodef{ASM}{accelerated subgradient method}
\acrodef{SM}{subgradient method}
\begin{document}

\title{Minimax Risk Classifiers with 0\,-1 Loss} 

\author{\name Santiago Mazuelas \email smazuelas@bcamath.org \\
       \addr Basque Center for Applied Mathematics (BCAM)\\
       IKERBASQUE-Basque Foundation for Science\\
       Bilbao 48009, Spain
       \AND
       \name Mauricio Romero \email msicre@ufba.br \\
       \addr Federal University of Bahia\\
       Ondina 40170, Brazil
       \AND
       \name Peter Gr\"{u}nwald \email Peter.Grunwald@cwi.nl\\
       \addr National Research Institute for Mathematics and Computer Science (CWI)\\
       Amsterdam 94079, Netherlands
}
\editor{Sivan Sabato}

\maketitle

\begin{abstract}
Supervised classification techniques use training samples to learn a classification rule with small expected 0\,-1 loss (error probability). Conventional methods enable tractable learning and provide out-of-sample generalization by using surrogate losses instead of the 0\,-1 loss and considering specific families of rules (hypothesis classes). This paper presents minimax risk classifiers (MRCs) that minimize the worst-case \mbox{0\,-1 loss} with respect to uncertainty sets of distributions that can include the underlying distribution, with a tunable confidence. We show that MRCs can provide tight performance guarantees at learning and are strongly universally consistent using feature mappings given by characteristic kernels. The paper also proposes efficient optimization techniques for MRC learning and shows that the methods presented can provide accurate classification together with tight performance guarantees in practice. 
\end{abstract}


\begin{keywords}
Supervised Classification, Robust Risk Minimization, Performance Guarantees, Generalized Maximum Entropy
\end{keywords}

\section{Introduction}

Supervised classification techniques use training samples to learn a classification rule that assigns labels to instances with small expected 0\,-1 loss (error probability). Conventional methods enable tractable learning and provide out-of-sample generalization by using surrogate losses instead of the 0\,-1 loss and considering specific families of rules (hypothesis classes). The surrogate losses are usually taken to be convex upper bounds of the 0\,-1 loss such as hinge loss, logistic loss, and exponential loss, see e.g., \cite{BarJorMca:06}. The families of rules considered are usually given by parametric functions such as those defined by \acp{NN} and those belonging to \acp{RKHS}, see e.g., \cite{ShaBen:14}.
Such techniques can result in strongly universally consistent methods using surrogate losses that are classification calibrated and Lipschitz \citep{BarJorMca:06,TewBar:07} together with rich families of rules such as those given by functions in \acp{RKHS} corresponding with universal kernels \citep{MicXuZha:06,Ste:05}. 

Most learning methods are based on the \ac{ERM} approach that minimizes the empirical expected loss of training samples (see e.g., \cite{Vap:98,MehRos:18,ShaBen:14}). Other methods are based on the \ac{RRM} approach that minimizes the worst-case expected loss with respect to an uncertainty set of distributions (see e.g., \cite{AsiXinBeh:15,ShaKuhMoh:19,DucNam:19}).  These methods correspond with generalized maximum entropy techniques as shown in \cite{MazShePer:22} using the theoretical framework in \cite{GruDaw:04}.

\ac{RRM} methods mainly differ in the type of uncertainty set considered. These sets are determined by constraints over probability distributions given in terms of metrics such as f-divergence \citep{DucNam:19}, Wasserstein distances \citep{ShaKuhMoh:19},  moments' fits \citep{AsiXinBeh:15}, and maximum mean discrepancies \citep{StaJeg:19}. In addition, the uncertainty sets considered often only include probability distributions with instances' marginal that coincides with the empirical marginal of training samples \citep{AsiXinBeh:15,FarTse:16,FatAnq:16,CorKuz:15}. An important advantage of \ac{RRM} methods is that the function minimized at learning can be an upper bound for the expected loss if the true underlying distribution is included in the uncertainty set considered. In such cases, \ac{RRM} techniques automatically ensure out-of-sample generalization and provide performance guarantees at learning. The uncertainty sets considered by existing techniques can include the true underlying distribution in methods that address a transductive setting \citep{BalFre:15, BalFre:16} or utilize Wasserstein distances \citep{ShaKuhMoh:19,LeeRag:18,FroClaChiSol:21} and maximum mean discrepancies \citep{StaJeg:19}. However, uncertainty sets formed by distributions with instances' marginal that coincides with the empirical do not include the true underlying distribution for finite sets of training samples.

The original 0\,-1 loss is utilized by certain \ac{RRM} techniques \citep{AsiXinBeh:15,FarTse:16,FatAnq:16,BalFre:15} which provided inspiration for the present work. Most of these pioneering techniques considered uncertainty sets of distributions  with instances' marginal that coincides with the empirical of training samples, and therefore they do not provide tight performance guarantees at learning.
Such performance guarantees are provided by PAC-Bayes methods \citep{AmbParSha:07,GerLacLavMarRoy:15,MhaGruLGue:19}  and \ac{RRM} techniques in transductive settings \citep{BalFre:15} or based on Wasserstein distances \citep{ShaKuhMoh:19,LeeRag:18}. PAC-Bayes methods consider specific classification rules such as margin-based and ensemble-based classifiers. In addition, the tightness of their performance bounds relies on that of multiple inequalities including Jensen's, Markov's, and Donsker-Varadhan's change of measure \citep{BegGerLavRoy:16}. Wasserstein-based \ac{RRM} techniques utilize surrogate losses and consider specific families of rules. In addition, the tightness of their performance bounds relies heavily on the adequacy of the Wasserstein radius used \citep{ShaKuhMoh:19,FroClaChiSol:21}.

This paper presents \acp{MRC} that minimize the worst-case \mbox{0\,-1 loss} over general classification rules and provide tight performance bounds at learning. Specifically, the main results presented in the paper are as follows.

\begin{itemize}
\item We develop learning methods that obtain classification rules with the smallest worst-case expected 0\,-1 loss with respect to uncertainty sets that can include the underlying distribution, with a tunable confidence (Section~\ref{sec-2.2}). 
In addition, we detail how to determine uncertainty sets from training data by estimating the expectation of a feature mapping and obtaining confidence vectors for such estimates (Section~\ref{sec-3}).
\item We characterize the performance guarantees of \acp{MRC} in terms of tight upper and lower bounds for the error probability and also in terms of generalization bounds with respect to the smallest minimax risk (Sections \ref{sec-4.1} and \ref{sec-4.2}). In addition, we show that \acp{MRC} are strongly universally consistent using feature mappings given by characteristic kernels (Section~\ref{sec-4.3}). 
\item We present efficient optimization techniques for \ac{MRC} learning that obtain the classifiers' parameters and the tight performance bounds using reduced sets of instances and efficient accelerated subgradient methods (Section~\ref{sec-5}). In addition, we quantify \acp{MRC}' performance with respect to existing techniques and show the suitability of the performance bounds presented (Section~\ref{sec-numerical}).
\end{itemize} 

Some of the results presented in this paper have appeared before in \cite{MazZanPer:20}. The main new results presented in this paper include: extension to instances' sets that are general Borel subsets; analysis of the generalization capabilities of the approach presented even in cases where the underlying distribution is not included in the uncertainty set considered; universal consistency of the methods proposed using rich feature mappings; and efficient optimization techniques based on accelerated subgradient methods. Specifically, the proofs of the theoretical results are extended to allow for infinite sets of instances instead of finite sets by using Fenchel duality instead of Lagrange duality. The \acp{MRC}' generalization bounds are extended in new Theorem~\ref{th_6} to cases when the uncertainty set does not include the underlying distribution, and new Theorem~\ref{th_7} shows the universal consistency of \acp{MRC} that use feature mappings given by characteristic kernels.  In addition, new Theorems \ref{th_8} and \ref{thm:comp.Tao-1} show that \ac{MRC} learning can be efficiently addressed using reduced sets of instances and subgradient methods that exploit the specific structure of the optimization problems in \ac{MRC} learning.

\emph{Notation:} calligraphic upper case letters denote sets, e.g., $\set{Z}$; $|\set{Z}|$ denotes de cardinality of set $\set{Z}$; vectors and matrices are denoted by bold lower and upper case letters, respectively, e.g., $\V{v}$ and $\V{M}$; for a vector $\V{v}$, $\|\V{v}\|$, $\|\V{v}\|_1$, and $\|\V{v}\|_\infty$ denote its L2-, L1-, and L-infinity norms, respectively,  $\V{v}^{\text{T}}$ denotes its transpose,  $v^{(i)}$ denotes its $i$-th component, $|\V{v}|$ and $\V{v}_+$ denote the vector given by the component-wise absolute value and positive part of $\V{v}$, respectively, and $\text{diag}(\V{v})$ denotes the diagonal matrix with diagonal given by $\V{v}$; probability distributions and classification rules are denoted by upright fonts, e.g., $\up{p}$ and $\up{h}$; $\mathbb{E}_{z\sim \up{p}}$ or simply $\mathbb{E}_{\up{p}}$ denotes the expectation w.r.t. probability distribution $\up{p}$ of random variable $z$; for a probability distribution $\up{p}$ over $\set{X}\times\set{Y}$, we denote by $\up{p}_x,\up{p}_y$ its corresponding marginals over $\set{X}$ and $\set{Y}$, respectively, and by $\up{p}_{x|y}$ the corresponding conditional distribution given $y\in\set{Y}$; $\preceq$ and $\succeq$ denote vector (component-wise) inequalities; $\V{1}$ denotes a vector with all components equal to $1$  and $\mathbb{I}\{\pun\}$ denotes the indicator function; finally, for a function of two variables $f:\set{X}\times\set{Y}\to\set{Z}$, $f(x,\pun)$ denotes the function $f(x,\pun):\set{Y}\to\set{Z}$ obtained by fixing $x\in\set{X}$.


\section{Minimax risk classifiers}

This section first states the problem of supervised classification and summarizes different learning approaches, then we describe \acp{MRC} with 0\,-1 loss and their relationship with existing techniques.

\subsection{Problem formulation and learning approaches}\label{sec-2.1}

Classification techniques assign instances in a set $\set{X}$ to labels in a set $\set{Y}$, where $\set{X}$ is a Borel subset of $\mathbb{R}^d$ and $\set{Y}$ is a finite set represented by $\{1,2,\ldots,|\set{Y}|\}$. We denote by $\Delta(\set{X}\times\set{Y})$ the set of Borel probability measures $\up{p}$ on $\set{X}\times\set{Y}$, also referred to as probability distributions. Both randomized and deterministic classification rules  are given by functions from instances to probability distributions on labels (Markov transitions). We denote the set of all classification rules by $\text{T}(\set{X},\set{Y})$, and for $\up{h}\in  \text{T}(\set{X},\set{Y})$ we denote by $\up{h}(y|x)$ the probability with which instance $x\in\set{X}$ is classified by label $y\in\set{Y}$ ($\up{h}(y|x)\in\{0,1\}$ if labels are deterministically assigned to instances).


Supervised classification techniques use $n$ training instance-label pairs \linebreak \mbox{$(x_1,y_1),(x_2,y_2),\ldots,(x_n,y_n)$} from the underlying distribution $\up{p}^*$ to find classification rules that assign labels to instances with small error probability. If $\up{p}^*\in\Delta(\set{X}\times\set{Y})$ is the true underlying distribution of instance-label pairs, the error probability of a classification rule $\up{h}\in \text{T}(\set{X},\set{Y})$ is its expected 0\,-1 loss denoted by $R(\up{h})$, that is
$$R(\up{h})=\mathbb{E}_{\up{p}^*}\{\ell(\up{h},(x,y))\}$$
with $\ell(\up{h},(x,y))=1-\up{h}(y|x)$ the  0\,-1 loss of rule $\up{h}$ at instance-label pair $(x,y)$. In the following, we overload the notation for the loss function and denote the expected loss of $\up{h}$ with respect to probability distribution $\up{p}$ as $\ell(\up{h},\up{p})\coloneqq\mathbb{E}_{\up{p}}\{\ell(\up{h},(x,y))\}$. 

The optimal classification rule is the solution of the optimization problem
\begin{align}\label{opt-bayes}\mathscr{P}^{*}:\  \inf_{\up{h}\in \text{T}(\set{X},\set{Y})} \ell(\up{h},\up{p}^*).\end{align}
This solution is known as Bayes classification rule and the minimum value above is known as Bayes risk. Optimization problem $\mathscr{P}^{*}$ cannot be addressed in practice since the underlying distribution $\up{p}^*$ is unknown and only training samples $(x_1,y_1),(x_2,y_2),\ldots,(x_n,y_n)$ from $\up{p}^*$ are available. 

Existing supervised classification techniques can be interpreted as approximations of $\mathscr{P}^{*}$ by an optimization problem of the form
\begin{align}\label{opt-approx}\mathscr{P}:\  \adjustlimits\inf_{\up{h}\in \set{F}\,\,}\sup_{\up{p}\in\set{U}}\,\, L(\up{h},\up{p})\end{align}
for $\set{F}$ a family of classification rules, $\set{U}$ an uncertainty set of distributions, and $L$ a surrogate loss function. 

The \ac{ERM} approach considers uncertainty sets of distributions that contain only the empirical distribution of training samples. In such an approach, the approximation of $\mathscr{P}^{*}$ by $\mathscr{P}$ is controlled by the choice of the family $\set{F}\subset \text{T}(\set{X},\set{Y})$ through a bias-complexity trade-off addressed by \ac{SRM} \citep{Vap:98}. Families of rules reduced enough to provide uniform convergence of empirical averages can ensure that the objective function in $\mathscr{P}$ uniformly approximates that of $\mathscr{P}^{*}$, i.e., empirically averaged losses accurately approximate expected losses for all the rules considered. If the family of rules is also general enough to contain a classification rule near the Bayes rule, then the minimum value of $\mathscr{P}$ is similar to the minimum value of $\mathscr{P}^{*}$ and the \ac{ERM} approach leads to near-optimal performance.  This trade-off for the generality of the family of rules considered is usually controlled by selecting a parameter that determines the size of the family of rules, e.g., the radius of the \ac{RKHS} ball of functions that defines the family of rules.

The \ac{RRM} approach considers uncertainty sets of distributions determined by constraints obtained from training samples. In such an approach, the approximation of $\mathscr{P}^{*}$ by $\mathscr{P}$ is controlled by the choice of the uncertainty set $\set{U}\subset\Delta(\set{X}\times\set{Y})$. Uncertainty sets general enough to contain the underlying distribution can ensure that the objective function in $\mathscr{P}$ upper bounds that of $\mathscr{P}^{*}$, i.e., the worst-case expected loss upper bounds the actual expected loss for any classification rule. If the uncertainty set is also reduced enough to provide a tight upper bound, then the minimum value of $\mathscr{P}$ is similar to the minimum value of $\mathscr{P}^{*}$ and the \ac{RRM} approach leads to near-optimal performance. This trade-off for the generality of the uncertainty set considered is usually controlled by selecting a parameter that determines the size of the uncertainty set, e.g., the radius of the Wasserstein ball that defines the uncertainty set.

An important advantage of the RRM approach with respect to ERM is that the former does not require to constrain the classification rules considered in order to provide provable out-of-sample generalization. Such property is due to the fact that the optimum of $\mathscr{P}$ for \ac{RRM} upper bounds the expected loss with respect to any distribution in the uncertainty set, independently of the family of rules considered. On the other hand, the optimum of $\mathscr{P}$ for \ac{ERM} is ensured to be near the out-of-sample risk only if the family of classification rules considered provides uniform convergence of empirical averages. 

Optimization problem $\mathscr{P}$ not only has to reliably approximate $\mathscr{P}^*$ but also has to be tractable computationally. Such tractability is commonly achieved by 1) considering families of classification rules determined by certain parameters (e.g., weights of neurons in \acp{NN} or coefficients of linear classifiers), and 2) substituting the 0\,-1 loss $\ell$ by a surrogate loss $L$ (e.g., hinge-loss or logistic-loss). The usage of non-parametric classification rules can lead to huge-scale optimization problems (a general rule $\up{h}\in\text{T}(\set{X},\set{Y})$ is determined by $|\set{X}||\set{Y}|$ values), and the minimization of 0\,-1 loss is often NP-hard (see e.g., \cite{BenEirLon:03,FelGurRagWu:12}).



In the following we show how the proposed \acp{MRC} approximate the optimization problem $\mathscr{P}^*$ in \eqref{opt-bayes} by an optimization problem of the form 
\begin{align}\label{opt-mrc}\mathscr{P}_{\text{MRC}}:\  \adjustlimits\inf_{\up{h}\in \text{T}(\set{X},\set{Y})\,\,} \sup_{\up{p}\in\set{U}}\,\,\ell(\up{h},\up{p})\end{align}
for an uncertainty set $\set{U}$ that can include the underlying distribution with a tunable confidence. \acp{MRC} do not rely on a choice of surrogate loss and family of rules. The only change in $\mathscr{P}_{\text{MRC}}$ with respect to $\mathscr{P}^{*}$ consists on using an uncertainty set $\set{U}$ instead of the underlying distribution $\up{p}^*$. 

Certain previous \acp{RRM} methods also address an optimization problem of the form \eqref{opt-mrc} \citep{AsiXinBeh:15,FatAnq:16}. However, the uncertainty sets considered by such methods only include distributions with instances' marginals that coincide with the empirical. With such an additional constraint for the uncertainty set, these approaches become equivalent to \ac{ERM} with a surrogate loss referred to as adversarial zero-one loss. Therefore, their out-of-sample generalization properties are akin to those of other \ac{ERM} methods and cannot exploit the \ac{RRM}'s benefits of approximating $\mathscr{P}^{*}$ by an upper bound.  

\subsection{\acp{MRC} with 0\,-1 loss}\label{sec-2.2}

The classification loss $\ell(\up{h},(x,y))$ of rule $\up{h}$ at instance-label pair $(x,y)$ quantifies the loss of the rule evaluated at instance $x\in\set{X}$ when the label is $y\in\set{Y}$. In this paper we consider \mbox{0\,-1 loss} that is given by  $\ell(\up{h},(x,y))=1-\up{h}(y|x)$, while \acp{MRC} for general loss functions are described in \citep{MazShePer:22}. The usage of 0\,-1 loss is specially suitable for discriminative approaches since it quantifies the classification error, while other loss functions such as logistic loss can be more suitable for conditional probability estimation since they score probability assessments. Specifically, if $\up{p}^*$ is the true underlying distribution of instance-label pairs, the expected 0\,-1 loss $\ell(\up{h},\up{p}^*)$, also referred to as the risk $R(\up{h})$, coincides with the error probability of classification rule $\up{h}$. 

The proposed \acp{MRC} minimize the worst-case expected 0\,-1 loss with respect to distributions in uncertainty sets that can contain the true underlying distribution with a tunable confidence. These uncertainty sets are given by constraints on the expectations of a vector-valued bounded and Borel measurable function $\Phi:\set{X}\times\set{Y}\to\mathbb{R}^m$ referred to as feature mapping, e.g., multiple polynomials on $x$ and $y$ or one-hot encodings of the last layers in an \ac{NN}. Such mappings are commonly used in machine learning to represent instance-label pairs as real vectors (see e.g., \cite{MehRos:18,BenCouVin:13} and next Section~\ref{sec-features}).

In the following we consider uncertainty sets
\begin{align}\label{uncertainty}\set{U}=\Big\{\up{p}\in\Delta(\set{X},\set{Y}):\ |\mathbb{E}_{\up{p}}\{\Phi\} -\B{\tau}|\preceq \B{\lambda}\Big\}\end{align}
given by a feature mapping $\Phi$ together with mean and confidence vectors $\B{\tau}\in\mathbb{R}^m$ and $\B{\lambda}\in\mathbb{R}^m$, and satisfying one of the following regularity conditions.
\begin{itemize}\itemsep0em
\item[R1] The set $\set{X}$ is finite and there exists a probability measure $\up{p}$ in $\set{U}$.
\item[R2] There exists a probability measure $\up{p}$ in $\set{U}$ such that $|\mathbb{E}_{\up{p}}\{\Phi(x,y)^{(i)}\} -\B{\tau}^{(i)}|<\lambda^{(i)}$ for any \mbox{$i\in\{1,2,\ldots,m\}$} with $\lambda^{(i)}>0$ and
\begin{itemize}\itemsep0em
\item[R2.1] $\lambda^{(i)}>0$ for all $i\in\{1,2,\ldots,m\}$ or
\item[R2.2]  the support of the r.v. $\Phi(x,y)$ for $(x,y)\sim \up{p}$ is not contained in a proper affine subspace of $\mathbb{R}^m$.
\end{itemize}
\end{itemize}


These regularity conditions are utilized to ensure strong duality holds for the inner maximization in the minimax problems, and they are satisfied with wide generality. For instance, such conditions are satisfied if there exists a probability measure $\up{p}$ such that $\mathbb{E}_\up{p} \{\Phi\}=\B{\tau}$ and $\{\Phi(x_i,y_i), i=1,2,\ldots,n\}$ is not contained in a proper subspace of $\mathbb{R}^m$. In addition, if those $n$ points are contained in a proper affine subspace $\Gamma\subset\mathbb{R}^m$ with $\text{dim}\, \Gamma=m'<m$ the feature mapping could be easily modified so that R2.2 is satisfied for instance by using $$\Phi'(x,y)=\big((\Phi(x,y)-\V{z}_0)^\text{T}\V{v}_1,(\Phi(x,y)-\V{z}_0)^\text{T}\V{v}_2,\ldots,(\Phi(x,y)-\V{z}_0)^\text{T}\V{v}_{m'}\big)\in\mathbb{R}^{m'}$$ where $\Gamma=\{\V{z}_0+\alpha_1\V{v}_1+\alpha_2\V{v}_2+\ldots+\alpha_{m'}\V{v}_{m'}; \alpha_1,\alpha_2,\ldots,\alpha_{m'}\in\mathbb{R}\}$.


The mean vector $\B{\tau}\in\mathbb{R}^m$ in \eqref{uncertainty} is an estimate of the feature mapping expectation $\mathbb{E}_{\up{p}^*}\{\Phi\}$ with respect to the underlying distribution. In this paper, we consider expectation estimates obtained as the sample average
\begin{align}\label{tau}
\B{\tau}_n=\frac{1}{n}\sum_{i=1}^n\Phi(x_i,y_i)
\end{align}
obtained from the $n$ training samples $(x_1,y_1),(x_2,y_2),\ldots,(x_n,y_n)$ that are assumed to be independent samples from the underlying distribution $\up{p}^*$. Nevertheless, it is important to note that most of the results presented in the paper as well as the general methodology proposed can be utilized with general types of expectation estimates. Alternative estimators for the expectation can be preferred in several practical scenarios including cases with different distributions at training and test  \citep{MohMun:12,MazPer:20,AlvMazLoz:22} and cases where the distribution of features has heavy tails \citep{LugMen:19,HsuSab:16}.


The confidence vector $\B{\lambda}\in\mathbb{R}^m$ in \eqref{uncertainty} is an estimate of the mean vector component-wise accuracy $|\mathbb{E}_{\up{p}^*}\{\Phi\}-\B{\tau}|$, and controls the size of the uncertainty set considered.  In particular, if $\B{\lambda}$ corresponds to the length of confidence intervals at level $1-\delta$ centered at $\B{\tau}$, then the true underlying distribution is included in the uncertainty set \eqref{uncertainty} with probability at least $1-\delta$. Section~\ref{sec-lambda} describes how to choose such confidence vectors for different feature mappings.

Classification rules that minimize the worst-case error probability over uncertainty sets $\set{U}$ given by \eqref{uncertainty} are referred to as \acp{MRC}.
\begin{definition}
We say that a classification rule $\up{h}^{\sset{U}}$ is a 0\,-1 \ac{MRC} for uncertainty set $\set{U}$ if 
$$\up{h}^{\sset{U}}\in\arg\adjustlimits\inf_{\up{h}\in \text{T}(\set{X},\set{Y})\,\,}\sup_{\up{p}\in\set{U}}\,\,\ell(\up{h},\up{p})$$
and we denote by $\overline{R}(\set{U})$ the minimax risk against $\set{U}$, i.e., 
 $$\overline{R}(\set{U})=\adjustlimits\inf_{\up{h}\in \text{T}(\set{X},\set{Y})\,\,}\sup_{\up{p}\in\set{U}}\,\,\ell(\up{h},\up{p}).$$
\end{definition}

The following result shows how 0\,-1 \acp{MRC} can be determined by a linear-affine combination of the feature mapping. The coefficients of such combination can be obtained at learning by solving the convex optimization problem
\begin{align}\label{opt-prob}\begin{array}{lcl}\mathscr{P}_{\B{\tau},\B{\lambda}}:
&\underset{\B{\mu}}{\min}&1-\B{\tau}^{\text{T}}\B{\mu}+\varphi(\B{\mu})+\B{\lambda}^{\text{T}}|\B{\mu}|\end{array}\end{align}
where\footnote{Here, as in the sequel, we implicitly assume that the set $\set{C}\subseteq\set{Y}$ over which we take the maximum excludes the empty set.}
\begin{align}\label{varphi}\varphi(\B{\mu})=\sup_{x\in\set{X},\set{C}\subseteq\set{Y}}\frac{\sum_{y\in\set{C}}\Phi(x,y)^{\text{T}}\B{\mu}-1}{|\set{C}|}.
\end{align}
\begin{theorem}\label{th1}
Let $\set{U}$ be an uncertainty set as in \eqref{uncertainty} that satisfies R1 or R2 and $\B{\mu}^*$ be a solution of optimization problem $\mathscr{P}_{\B{\tau},\B{\lambda}}$.
If a classification rule $\up{h}^{\sset{U}}\in \text{T}(\set{X},\set{Y})$ satisfies
\begin{align}\label{opt-sol}\up{h}^{\sset{U}}(y|x)\geq\Phi(x,y)^{\text{T}}\B{\mu}^*-\varphi(\B{\mu}^*),\  \forall x\in\set{X}, y\in\set{Y}\end{align}
then $\up{h}^{\sset{U}}$ is a 0\,-1 \ac{MRC} for $\set{U}$. In addition, we have that 
\begin{align}\label{minimax}\overline{R}(\set{U})=1-\B{\tau}^{\text{T}}\B{\mu}^*+\varphi(\B{\mu}^*)+\B{\lambda}^{\text{T}}|\B{\mu}^*|.\end{align}
\end{theorem}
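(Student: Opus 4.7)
The plan is to establish a min-max duality between $\mathscr{P}_{\text{MRC}}$ and $\mathscr{P}_{\B{\tau},\B{\lambda}}$ via Lagrangian/Fenchel duality, and then to show that any classification rule pointwise above the affine function $\Phi(x,y)^{\text{T}}\B{\mu}^{*}-\varphi(\B{\mu}^{*})$ is both realizable and attains the minimax value.

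First I would prove the weak-duality direction by a pointwise Lagrangian argument. The box constraint $|\mathbb{E}_{\up{p}}\{\Phi\}-\B{\tau}|\preceq\B{\lambda}$ yields $|\B{\mu}^{\text{T}}(\mathbb{E}_{\up{p}}\{\Phi\}-\B{\tau})|\leq\B{\lambda}^{\text{T}}|\B{\mu}|$, so for every $\up{h}\in\text{T}(\set{X},\set{Y})$, every $\up{p}\in\set{U}$, and every $\B{\mu}\in\mathbb{R}^{m}$,
\begin{align*}
\ell(\up{h},\up{p})&=\mathbb{E}_{\up{p}}\{1-\up{h}(y|x)+\Phi(x,y)^{\text{T}}\B{\mu}\}-\mathbb{E}_{\up{p}}\{\Phi\}^{\text{T}}\B{\mu}\\
&\leq \sup_{x,y}\big[1-\up{h}(y|x)+\Phi(x,y)^{\text{T}}\B{\mu}\big]-\B{\tau}^{\text{T}}\B{\mu}+\B{\lambda}^{\text{T}}|\B{\mu}|.
\end{align*}
For each fixed $x$, minimizing $\max_{y}[1-\up{h}(y|x)+\Phi(x,y)^{\text{T}}\B{\mu}]$ over probability distributions $\up{h}(\pun|x)$ on $\set{Y}$ is a finite-dimensional water-filling problem whose optimal value is $1+v^{*}(x)$, with $v^{*}(x)=\min\{v:\sum_{y}(\Phi(x,y)^{\text{T}}\B{\mu}-v)_{+}\leq 1\}$. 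A standard level-set argument identifies $v^{*}(x)=\max_{\set{C}\subseteq\set{Y},\,\set{C}\neq\emptyset}\big(\sum_{y\in\set{C}}\Phi(x,y)^{\text{T}}\B{\mu}-1\big)/|\set{C}|$, so taking the supremum over $x$ and then the infimum over $\up{h}$ gives the weak-duality bound $\overline{R}(\set{U})\leq 1-\B{\tau}^{\text{T}}\B{\mu}+\varphi(\B{\mu})+\B{\lambda}^{\text{T}}|\B{\mu}|$ for every $\B{\mu}$.

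The main obstacle is strong duality, i.e., matching the above upper bound. I would view $\sup_{\up{p}\in\set{U}}\ell(\up{h},\up{p})$ as a linear program on the Borel probability measures on $\set{X}\times\set{Y}$ subject to finitely many moment constraints, and appeal to Fenchel's duality theorem. Condition R1 reduces this to ordinary finite-dimensional LP duality; conditions R2.1 and R2.2 supply exactly the constraint qualification (strict feasibility for the box inequalities with $\lambda^{(i)}>0$, plus non-degeneracy of the moment map along any equality components with $\lambda^{(i)}=0$) needed to guarantee zero duality gap in the infinite-dimensional setting. Reordering the resulting inf and sup then recovers the objective of $\mathscr{P}_{\B{\tau},\B{\lambda}}$, establishing $\overline{R}(\set{U})=\min_{\B{\mu}}\,[1-\B{\tau}^{\text{T}}\B{\mu}+\varphi(\B{\mu})+\B{\lambda}^{\text{T}}|\B{\mu}|]$.

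Finally, I would check that any $\up{h}^{\sset{U}}$ obeying \eqref{opt-sol} is a bona fide classification rule and attains $\overline{R}(\set{U})$. Because $\varphi(\B{\mu}^{*})\geq v^{*}(x)$ for every $x$ and the map $v\mapsto\sum_{y}(\Phi(x,y)^{\text{T}}\B{\mu}^{*}-v)_{+}$ is non-increasing, one has $\sum_{y}(\Phi(x,y)^{\text{T}}\B{\mu}^{*}-\varphi(\B{\mu}^{*}))_{+}\leq 1$, so nonnegative probabilities above the lower bound that sum to one can always be assigned. For any such $\up{h}^{\sset{U}}$, \eqref{opt-sol} yields $1-\up{h}^{\sset{U}}(y|x)+\Phi(x,y)^{\text{T}}\B{\mu}^{*}\leq 1+\varphi(\B{\mu}^{*})$ for every $(x,y)$; plugging this into the pointwise Lagrangian bound from the second paragraph gives $\sup_{\up{p}\in\set{U}}\ell(\up{h}^{\sset{U}},\up{p})\leq 1-\B{\tau}^{\text{T}}\B{\mu}^{*}+\varphi(\B{\mu}^{*})+\B{\lambda}^{\text{T}}|\B{\mu}^{*}|=\overline{R}(\set{U})$ by strong duality, proving simultaneously that $\up{h}^{\sset{U}}$ is a 0\,-1 MRC for $\set{U}$ and that \eqref{minimax} holds.
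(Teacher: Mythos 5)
Your proposal is correct and follows essentially the same route as the paper: weak duality via a pointwise Lagrangian bound, strong duality for the inner maximization over measures via Fenchel duality with R1/R2 as constraint qualifications (the paper's Lemma~\ref{lemma}), an exchange of the two minimizations, and the observation that the inner minimization over rules for fixed $\B{\mu}$ has value $\varphi(\B{\mu})$ and is attained by any rule dominating $\Phi(x,y)^{\text{T}}\B{\mu}-\varphi(\B{\mu})$ (your water-filling/level-set computation is equivalent to the paper's chain of implications over subsets $\set{C}\subseteq\set{Y}$). The only part you assert rather than verify is that R2.1/R2.2 yield the interior-point condition $\V{0}\in\text{int}(\text{dom}\, g-A\,\text{dom}\, f)$ needed for zero duality gap, which is where the bulk of the paper's Lemma~\ref{lemma} is spent, but you correctly identify this as the key obstacle and the right tool to resolve it.
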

\begin{proof}
See Appendix~\ref{apd:proof_th_1}.
\end{proof}

A classification rule satisfying \eqref{opt-sol} always exists because for every $x\in\set{X}$ the sum over $y\in\set{Y}$ of the positive part of the right hand side of \eqref{opt-sol} is not larger than one. Specifically, for any $x\in\set{X}$, the number
\begin{align}\label{cx}c_x=\sum_{y\in\set{Y}}\big(\Phi(x,y)^{\text{T}}\B{\mu}^*-\varphi(\B{\mu}^*)\big)_+\end{align}
is zero or equal to
$$\max_{\set{C}\subseteq\set{Y}}\sum_{y\in\set{C}}\Phi(x,y)^{\text{T}}\B{\mu}^*-|\set{C}|\varphi(\B{\mu}^*)$$
that is not larger than one by definition of $\varphi(\pun)$ in \eqref{varphi}.

The characterization of \acp{MRC} in terms of the inequality in \eqref{opt-sol} may seem counter-intuitive because for some pairs $(x,y)$ such classification rules are not uniquely determined. However, such pairs do not affect the worst-case error probability. Specifically, the set of pairs $(x,y)$ where \eqref{opt-sol} is not satisfied with equality has zero probability under a worst-case distribution in $\set{U}$ because for $\up{p}^{\sset{U}}\in\set{U}$ 
\begin{align*}&\adjustlimits\inf_{\up{h}\in\text{T}(\set{X},\set{Y})\,\,}\sup_{\up{p}\in\set{U}}\,\,\ell(\up{h},\up{p})=\ell(\up{h}^{\sset{U}},\up{p}^{\sset{U}})\\&\hspace{1cm}\Rightarrow 1-\int\up{h}^{\sset{U}}(y|x)\text{d}\up{p}^{\sset{U}}(x,y)=1-\B{\tau}^{\text{T}}\B{\mu}^*+\B{\lambda}^{\text{T}}|\B{\mu}^*|+\varphi(\B{\mu}^*)\\&\hspace{1cm}
\Rightarrow \int\up{h}^{\sset{U}}(y|x)-\big(\Phi(x,y)^{\text{T}}\B{\mu}^*-\varphi(\B{\mu}^*)\big)\text{d}\up{p}^{\sset{U}}(x,y) =\big(\B{\tau}-\mathbb{E}_{\up{p}^{\sset{U}}}\Phi(x,y)\big)^{\text{T}}\B{\mu}^*-\B{\lambda}^{\text{T}}|\B{\mu}^*|\\&\hspace{1cm}
\Rightarrow \int\up{h}^{\sset{U}}(y|x)-\big(\Phi(x,y)^{\text{T}}\B{\mu}^*-\varphi(\B{\mu}^*)\big)\text{d}\up{p}^{\sset{U}}(x,y)=0.
\end{align*}
since $\up{p}^{\sset{U}}\in\set{U}\Rightarrow \big(\B{\tau}-\mathbb{E}_{\up{p}^{\sset{U}}}\Phi(x,y)\big)^{\text{T}}\B{\mu}^*-\B{\lambda}^{\text{T}}|\B{\mu}^*|\leq 0$.

In order to avoid ambiguities, we will refer to 0\,-1 \ac{MRC} for uncertainty set $\set{U}$ as the classification rule $\up{h}^{\sset{U}}$ obtained by normalizing the positive part of the right hand side of \eqref{opt-sol}. Specifically, let for $x\in\set{X}$
\begin{align}\label{MRC}
\up{h}^{\sset{U}}(y|x)=\left\{\begin{array}{cc}(\Phi(x,y)^{\text{T}}\B{\mu}^*-\varphi(\B{\mu}^*))_+/c_x&\mbox{if }c_x\neq 0\\1/|\set{Y}|&\mbox{if }c_x=0\end{array}\right.
\end{align}
with $c_x$ given by \eqref{cx}.
Such classification rule is univocally determined by $\B{\mu}^*$ and satisfies \eqref{opt-sol} because $c_x\leq 1$ for any $x\in\set{X}$.  
In addition, we denote by $\up{h}^{\sset{U}}_{\text{d}}$ the deterministic classification rule corresponding to $\up{h}^{\sset{U}}$, that is
\begin{align}\label{det-MRC}\up{h}^{\sset{U}}_{\text{d}}(y|x)=\mathbb{I}\{y\in\arg\max\up{h}^{\sset{U}}(\pun|x)\}=\mathbb{I}\{y\in\arg\max\Phi(x,\pun)^{\text{T}}\B{\mu}^*\}\end{align}
where a tie in the $\arg\max$ above can be resolved arbitrarily. In the following, we will refer to such classification rule $\up{h}^{\sset{U}}_{\text{d}}$ as the deterministic 0\,-1 \ac{MRC} for $\set{U}$.

The above Theorem~\ref{th1} provides a representer theorem for \acp{MRC}. The classical representer theorem for \ac{ERM} over \acp{RKHS} (see e.g., \cite{MehRos:18,EvgPonPog:00}) states that the solution of \ac{ERM} over the set of classification rules given by functions in an \ac{RKHS} ball is determined by a linear combination of $n$ functions corresponding with the training samples. Such result enables to address the minimization of empirical loss over an infinite-dimensional \ac{RKHS} because it becomes equivalent to a minimization over $n$ parameters.  Analogously, Theorem~\ref{th1} states that the solution of $\mathscr{P}_{\text{MRC}}$ over the set of all classification rules is determined by a linear-affine combination of the $m$ components of the feature mapping. Even if the optimization problem addressed by \acp{MRC} does not impose constraints on the classification rules considered, the feature mapping utilized determines the parametric form of its solutions. Theorem~\ref{th1} enables to address the minimization of the worst-case error probability over general classification rules since it becomes equivalent to a minimization over the $m$ parameters $\B{\mu}$. As shown  in Section~\ref{sec-5} below, optimization \eqref{opt-prob} can be efficiently addressed in practice.  In particular, Theorem~\ref{th_8} shows that the set $\set{X}$ in \eqref{varphi} can be substituted by a reduced set of instances such as that formed by the instances at training.

Note also that \acp{MRC} are often given by sparse combinations of the components of the feature mapping since the last term in optimization problem \eqref{opt-prob} imposes an L1-type regularization for parameters. L1-norm regularization is broadly used in machine learning (see e.g., \cite{HasTibWai:19,MehRos:18,DemVitRos:09}) and the regularization parameter used to weight the L1-norm is commonly obtained by cross-validation methods. The result in Theorem~\ref{th1} can directly provide appropriate regularization parameters from the length of the expectations' interval estimates. In addition, the regularization term $\B{\lambda}^{\text{T}}|\B{\mu}|=\sum_{i=1}^m\lambda^{(i)}|\mu^{(i)}|$ in \eqref{opt-prob} allows to penalize differently each component of $\B{\mu}$. Such type of L1-norm regularization is usually referred to as adaptive or weighted, and has shown to significantly improve performance \citep{Zou:06,CanWakBoy:08}. For \acp{MRC}, the  regularization term causes that feature components with poorly estimated expectations (i.e., components $\Phi^{(i)}$ with large $\lambda^{(i)}$) have a reduced or null influence on the final classification rule.

\subsection{\acp{MRC} with 0\,-1 loss and fixed marginals}


The following result shows how existing \ac{RRM} methods that utilize 0-1 loss correspond to \acp{MRC} that use uncertainty sets of distributions with instances' marginal given by the empirical distribution. In particular, such \acp{MRC} correspond to L1-regularized \ac{ERM} with a loss referred to as adversarial zero-one in \cite{FatAnq:16} or as minimax hinge in \cite{FarTse:16}.



For the following result we consider uncertainty sets of the form 
\begin{align}\label{uncertainty-emp}\set{V}=\Big\{\up{p}\in\Delta(\set{X},\set{Y}):\ |\mathbb{E}_{\up{p}}\{\Phi\} -\B{\tau}|\preceq \B{\lambda}\mbox{ and }\up{p}_x=\up{p}^n_x\Big\}\end{align} 
where $\up{p}^n_x$ is the uniform distribution over instances with support $\{x_1,x_2,\ldots,x_n\}\subset\set{X}$ for $n$ specific instances.

\begin{theorem}\label{th_empirical}

Let $\B{\tau},\B{\lambda}\in\mathbb{R}^m$ be such that the uncertainty set $\set{V}$ in \eqref{uncertainty-emp} is not empty.
If $\B{\mu}^*$ is a solution of the optimization problem
\begin{align}\label{opt-emp}\min_{\B{\mu}}1-\B{\tau}^{\text{T}}\B{\mu}+\frac{1}{n}\sum_{i=1}^n\upvarphi(\B{\mu},x_i)+\B{\lambda}^{\text{T}}|\B{\mu}|\end{align}
where
$$\upvarphi(\B{\mu},x)=\max_{\set{C}\subseteq\set{Y}}\frac{\sum_{y\in\set{C}}\Phi(x,y)^{\text{T}}\B{\mu}-1}{|\set{C}|}$$
and $\up{h}^{\sset{V}}$ is the classification rule
\begin{align}\label{sol-emp}\up{h}^{\sset{V}}(y|x)= \big(\Phi(x,y)^{\text{T}}\B{\mu}^*-\upvarphi(\B{\mu}^*,x)\big)_+, \  \forall x\in\set{X}, y\in\set{Y}\end{align} then, $\up{h}^{\sset{V}}$ is a 0\,-1 \ac{MRC} for $\set{V}$, that is
$$\up{h}^{\sset{V}}\in\arg\adjustlimits\inf_{\up{h}\in \text{T}(\set{X},\set{Y})\,\,}\sup_{\up{p}\in\set{V}}\,\,\ell(\up{h},\up{p}).$$
\end{theorem}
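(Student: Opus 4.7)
My plan is to mirror the Fenchel/Lagrange duality argument used for Theorem~\ref{th1} while exploiting the additional constraint $\up{p}_x=\up{p}^n_x$ to collapse $\set{V}$ to a finite-dimensional polytope. Specifically, any $\up{p}\in\set{V}$ is parametrized by conditionals $q^{(i)}(\pun)=\up{p}(\pun|x_i)\in\Delta(\set{Y})$, so that $\up{p}$ places mass $\frac{1}{n}q^{(i)}(y)$ on $(x_i,y)$, and both the objective and the feature constraint become finite-dimensional linear/bilinear expressions in the $q^{(i)}$.

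First I would use Sion's minimax theorem to exchange $\inf_\up{h}\sup_\up{p}$ with $\sup_\up{p}\inf_\up{h}$ (both the $\up{h}(\pun|x_i)$ and the $q^{(i)}$ range over compact convex simplices, and the objective is bilinear), and evaluate the inner Bayes-risk infimum as $1-\frac{1}{n}\sum_i\max_y q^{(i)}(y)$. Next I would dualize the constraint $|\frac{1}{n}\sum_{i,y}\Phi(x_i,y)\,q^{(i)}(y)-\B{\tau}|\preceq\B{\lambda}$ using non-negative multipliers $\B{\alpha},\B{\beta}$ for the two sides and set $\B{\mu}=\B{\beta}-\B{\alpha}$; minimizing the constant $(\B{\alpha}+\B{\beta})^T\B{\lambda}$ over $\B{\alpha},\B{\beta}\succeq\V{0}$ with $\B{\beta}-\B{\alpha}=\B{\mu}$ produces the regularizer $\B{\lambda}^T|\B{\mu}|$. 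Strong duality follows from the compactness and convexity of the resulting finite-dimensional problem together with the non-emptiness of $\set{V}$, just as in the argument behind Theorem~\ref{th1}.

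After dualization the supremum over $q^{(1)},\ldots,q^{(n)}$ decouples across $i$, each subproblem being
\[ \sup_{q\in\Delta(\set{Y})}\Big[\sum_y\Phi(x_i,y)^T\B{\mu}\,q(y)-\max_y q(y)\Big]. \]
The key computational step is recognizing that this supremum is attained by $q$ uniform on some nonempty $\set{C}\subseteq\set{Y}$: if the support of an optimizer strictly contains its argmax set, mass can be redistributed from a non-maximal to a maximal component without decreasing the objective, reducing to the uniform case. The value at such a $q$ is $(\sum_{y\in\set{C}}\Phi(x_i,y)^T\B{\mu}-1)/|\set{C}|$, and maximizing over $\set{C}$ gives exactly $\upvarphi(\B{\mu},x_i)$. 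Collecting all terms yields the dual problem \eqref{opt-emp}.

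Finally, to confirm that $\up{h}^{\sset{V}}$ from \eqref{sol-emp} attains the minimax, I would argue as in the discussion following Theorem~\ref{th1}: the inequality $\sum_y(\Phi(x,y)^T\B{\mu}^*-\upvarphi(\B{\mu}^*,x))_+\leq 1$ is immediate from the definition of $\upvarphi$, so \eqref{sol-emp} defines a valid sub-probabilistic assignment extendable to a rule in $\text{T}(\set{X},\set{Y})$ without affecting the worst-case loss; and a complementary-slackness computation (mirroring the integration argument below Theorem~\ref{th1}) shows that $\ell(\up{h}^{\sset{V}},\up{p}^{\sset{V}})$ equals the common primal/dual value against any worst-case $\up{p}^{\sset{V}}\in\set{V}$. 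I expect the main obstacle to be the per-instance supremum identity, which is what replaces the global $\varphi$ of Theorem~\ref{th1} by the empirical average of the $\upvarphi(\B{\mu},x_i)$ in this fixed-marginal setting.
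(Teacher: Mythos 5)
Your route is sound and reaches the same dual \eqref{opt-emp}, but it inverts the order of operations relative to the paper's proof. The paper fixes $\up{h}$, writes $\sup_{\up{p}\in\set{V}}\ell(\up{h},\up{p})$ as a finite-dimensional linear program over the masses $\up{p}(x_i,y)$, takes its Lagrange/Fenchel dual (so $\up{h}$ stays inside the dual constraints $\Phi(x_i,y)^{\text{T}}\B{\mu}-\nu^{(i)}\leq\up{h}(y|x_i)$), and only then minimizes over $\up{h}$; the optimal rule falls out of feasibility, since the constraints force $\nu^{(i)}\geq\upvarphi(\B{\mu},x_i)$ and any $\up{h}$ with $\up{h}(y|x)\geq\Phi(x,y)^{\text{T}}\B{\mu}-\upvarphi(\B{\mu},x)$ attains that bound. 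You instead swap $\inf\sup$ by Sion, evaluate the Bayes risk $1-\frac{1}{n}\sum_i\max_yq^{(i)}(y)$, and dualize afterwards; this yields the value of the game but, as you note, requires a separate verification that the specific rule \eqref{sol-emp} attains it. That verification is most cleanly done by weak duality rather than complementary slackness: for any $\B{\mu}$, $\sup_{\up{p}\in\set{V}}\ell(\up{h}^{\sset{V}},\up{p})\leq1-\B{\tau}^{\text{T}}\B{\mu}+\B{\lambda}^{\text{T}}|\B{\mu}|+\frac{1}{n}\sum_i\max_y\{\Phi(x_i,y)^{\text{T}}\B{\mu}-\up{h}^{\sset{V}}(y|x_i)\}$, and at $\B{\mu}=\B{\mu}^*$ the last term is at most $\frac{1}{n}\sum_i\upvarphi(\B{\mu}^*,x_i)$, so the worst-case loss of $\up{h}^{\sset{V}}$ is at most the game value.

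Two steps need repair. First, the redistribution argument for the per-instance supremum is wrong as stated: moving mass $\epsilon$ from a non-maximal label $y_0$ to a maximal one $y_1$ changes the objective at rate $c_{y_1}-c_{y_0}-1$ (with $c_y=\Phi(x_i,y)^{\text{T}}\B{\mu}$), which can be negative. The identity itself is true and has a short direct proof: with $\nu=\upvarphi(\B{\mu},x_i)$ the definition of $\upvarphi$ gives $\sum_y(c_y-\nu)_+\leq1$, hence $\V{c}^{\text{T}}q-\nu\leq\sum_yq(y)(c_y-\nu)_+\leq\max_yq(y)$ for every $q\in\Delta(\set{Y})$, which is the required upper bound; the lower bound is achieved by the uniform distribution on a maximizing $\set{C}$. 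Second, \eqref{sol-emp} defines $\up{h}^{\sset{V}}$ by \emph{equality}, so ``sub-probabilistic and extendable'' is not enough: you must show that $\sum_y\big(\Phi(x,y)^{\text{T}}\B{\mu}^*-\upvarphi(\B{\mu}^*,x)\big)_+$ equals exactly $1$ for every $x$, otherwise $\up{h}^{\sset{V}}$ as written is not a Markov transition. The paper closes this by contradiction: if the sum were below one, the level $\nu_x$ at which the positive parts sum to one would satisfy $\nu_x<\upvarphi(\B{\mu}^*,x)$ while also equalling $\max_{\set{C}}\big(\sum_{y\in\set{C}}\Phi(x,y)^{\text{T}}\B{\mu}^*-1\big)/|\set{C}|$, contradicting the definition of $\upvarphi$.
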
 
\begin{proof}
See Appendix~\ref{apd:proof_th_1_1}.
\end{proof}

If the instances $\{x_1,x_2,\ldots,x_n\}$ are those obtained at training, the 0\,-1 \acp{MRC} for uncertainty sets given by both expectations and marginals constrains as in \eqref{uncertainty-emp} 
correspond to existing techniques known as maximum entropy machines \citep{FarTse:16} or zero-one adversarial \citep{FatAnq:16}.  Specifically, if $\{(x_i,y_i)\}_{i=1}^n$ are training samples and $\B{\tau}$ is given by \eqref{tau}, then optimization problem \eqref{opt-emp} becomes
\begin{align*}\min_{\B{\mu}}\frac{1}{n}\sum_{i=1}^n\max_{\set{C}\subseteq\set{Y}}\frac{\sum_{y\in\set{C}}(\Phi(x_i,y)-\Phi(x_i,y_i))^{\text{T}}\B{\mu}+|\set{C}|-1}{|\set{C}|}+\B{\lambda}^{\text{T}}|\B{\mu}|\end{align*}
that corresponds to L1-regularized \ac{ERM} with a loss referred to as minimax hinge in \cite{FarTse:16} or as
adversarial zero-one in \cite{FatAnq:16}.

Uncertainty sets $\set{V}$ given by \eqref{uncertainty-emp} do not contain the true underlying distribution for a finite number of training samples since such sets only contain distributions with instances' marginal that is uniform over the training instances. Hence, the usage of such uncertainty sets does not allow to obtain the performance guarantees shown in Section~\ref{sec-guarantees} below for uncertainty sets given by \eqref{uncertainty}.

\section{Uncertainty sets of distributions}\label{sec-3}

As described above, \ac{MRC}  classification rules have the smallest worst-case error probability over distributions in an uncertainty set. This set is determined by expectations estimates of a feature mapping. In this section we first describe common feature mappings and then characterize the accuracy of expectation estimates obtained from training samples.

\subsection{Feature mappings}\label{sec-features}

Most of the results presented in the paper are valid for general feature mappings ${\Phi:\set{X}\times\set{Y}\to\mathbb{R}^m}$. In this section we describe feature mappings that are common in supervised classification and will be used later in the paper. Such feature mappings are given by real-valued functions over the set of instances $\psi:\set{X}\to\mathbb{R}$, referred to as scalar features. The most common and simple way to define feature mappings over $\set{X}$ and $\set{Y}$ is to use multiple scalar features over $\set{X}$ together with a one-hot encoding of the elements of $\set{Y}$ \citep{TsoJoaHofAlt:05,CraDekKesShaSin:06,MehRos:18} as follows 
\begin{align}\label{one-hot}\Phi(x,y)=[\mathbb{I}\{y=1\}\Psi(x)^{\text{T}},\mathbb{I}\{y=2\}\Psi(x)^{\text{T}},\ldots,\mathbb{I}\{y=|\set{Y}|\}\Psi(x)^{\text{T}}]^{\text{T}}=\V{e}_y\otimes \Psi(x)\end{align}
where $\Psi(x)=[\psi_1(x),\psi_2(x),\ldots,\psi_D(x)]^{\text{T}}$ for $D$ scalar features $\psi_1,\psi_2,\ldots,\psi_D$, $\V{e}_y$ is the $y$-th vector in the standard basis of $\mathbb{R}^{|\set{Y}|}$, and $\otimes$ denotes the Kronecker product. The feature mapping $\Phi$ represents each instance-label pair $(x,y)$ by an $m$-dimensional real vector with $m=|\set{Y}|D$, so that $\Phi(x,y)$ is composed by $|\set{Y}|$ $D$-dimensional blocks with values $\Psi(x)$ in the block corresponding to $y$ and zero otherwise.
Other feature mappings can exploit relationships among classes as described for instance in \cite{BakHofSchSmo:07,CapMicPonYin:08}.

Multiple types of scalar features are commonly used in supervised classification including those given by thresholds/decision stumps \citep{LebLaf:01}, last layers in an \ac{NN} \citep{BenCouVin:13}, and random features corresponding to a \ac{RKHS} \citep{RahRec:08}. Most of the results shown in this paper are valid for general features, while for those showing the universal consistency of \acp{MRC} we use random features that embed instances into rich feature spaces corresponding to \acp{RKHS}.

\subsection{Confidence vectors for expectation estimates}\label{sec-lambda}

In this section, we describe conditions for confidence vectors $\B{\lambda}$ that lead to uncertainty sets given by \eqref{uncertainty} that contain the true underlying distribution with high probability. Specifically, we consider uncertainty sets $\set{U}$ given by feature mappings defined by \eqref{one-hot} using scalar features in a family $\set{F}$ together with mean vectors obtained as in \eqref{tau} using the sample average of $n$ independent samples from the underlying distribution $\up{p}^*$. 

We denote by $\B{\lambda}_\delta$ any confidence vector that has coverage probability at least $1-\delta$, i.e., $\mathbb{P}\{|\mathbb{E}_{\up{p}^*}\{\Phi\}-\B{\tau}|\preceq\B{\lambda}_\delta\}\geq 1-\delta$, so that the underlying distribution $\up{p}^*$ is included with probability at least $1-\delta$ in the uncertainty set given by $\B{\lambda}_\delta$. The following result describes such confidence vectors in terms of the cardinality $|\set{F}|$, the empirical variance of the feature mapping, and the Rademacher complexity $\set{R}_n(\set{F})$. In most of the results in the paper, we utilize feature mappings given by scalar features chosen before the training samples are observed. In such cases, appropriate confidence vectors are given in terms of the cardinality $|\set{F}|$, i.e., the number of scalar features used. However, the methods proposed can also be used with data-dependent feature mappings. In such cases, appropriate confidence vectors are given in terms of the Rademacher complexity $\set{R}_n(\set{F})$ where $\set{F}$ is the family of candidate scalar features. For instance, $\set{F}$ can be family of all decision stumps and the feature mapping can be defined using the decision stumps found by one-dimensional decision trees learned using the training samples, as in \cite{MazZanPer:20}.



\begin{theorem}\label{th2}
Let the mean vector be given by $\B{\tau}=\B{\tau}_n$ as in \eqref{tau} for $n$ training samples, and $\set{F}$ be a family of bounded scalar features, that is $|\psi(x)|\leq C$ for all $\psi\in\set{F}$ and $x\in\set{X}$.

If $|\set{F}|<\infty$,  $\lambda_\delta^{(i)}$ for $i=1,2,\ldots,m$ can be taken as
 \begin{align}\lambda_\delta^{(i)}= C\sqrt{\frac{2\log 2|\set{F}||\set{Y}|/\delta}{n}}.\end{align}
In addition, if $\upsilon_n^{(i)}$ is the sample variance of the $i$-th component of $\Phi$, $\lambda_\delta^{(i)}$ for $i=1,2,\ldots,m$ can be taken as
\begin{align}\label{bernstein}\lambda_\delta^{(i)}=2C\sqrt{\frac{2\upsilon_n^{(i)}\log 4 |\set{F}||\set{Y}|/\delta}{n}}+\frac{14C\log 4|\set{F}||\set{Y}|/\delta}{3(n-1)}.\end{align}

If the Rademacher complexity of $\set{F}$ satisfies $\set{R}_n(\set{F})\leq R/\sqrt{n}$,  $\lambda_\delta^{(i)}$ for $i=1,2,\ldots,m$ can be taken as
\begin{align}\lambda_\delta^{(i)}= 2\sqrt{\frac{n_{j(i)}}{n}}\frac{R}{\sqrt{n}}+C\left(1+2\sqrt{\frac{n_{j(i)}}{n}}\right)\sqrt{\frac{\log 4|\set{Y}|/\delta}{2n}}\end{align}
where $j(i)$ denotes the label for which the $i$-th component of $\Phi$ is non-zero, and $n_j$ denotes the number of samples with label $j\in\set{Y}$.

\end{theorem}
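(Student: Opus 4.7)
The plan is to treat each component of $\Phi$ as a bounded real random variable of the form $\Phi^{(i)}(x,y) = \mathbb{I}\{y = j(i)\}\,\psi_{k(i)}(x)$ with $|\Phi^{(i)}|\leq C$, and to control the componentwise deviation $|\tau_n^{(i)} - \mathbb{E}_{\up{p}^*}\{\Phi^{(i)}\}|$ by an appropriate concentration inequality combined with a union bound. The three stated bounds correspond to three different choices of the concentration inequality used, each yielding the stated $\lambda^{(i)}_\delta$ after setting the per-event failure probability so that the aggregate failure probability is at most $\delta$.

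For the first bound I would apply the two-sided Hoeffding inequality to each component: since $\Phi^{(i)} \in [-C,C]$, one has $\mathbb{P}\{|\tau_n^{(i)} - \mathbb{E}_{\up{p}^*}\{\Phi^{(i)}\}| > t\} \leq 2\exp(-n t^2/(2C^2))$. A union bound over the $m = |\set{F}||\set{Y}|$ components with per-component failure probability $\delta/m$, followed by solving for $t$, gives the claimed expression. For the second bound I would replace Hoeffding by the Maurer--Pontil empirical Bernstein inequality for i.i.d.\ variables in $[-C,C]$, which provides a two-sided deviation bound of the form $\sqrt{2\upsilon_n^{(i)}\log(c/\delta)/n} + \Theta(C\log(c/\delta)/(n-1))$ with $\upsilon_n^{(i)}$ the sample variance. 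Union-bounding over the $m = |\set{F}||\set{Y}|$ components simply replaces $c/\delta$ by $cm/\delta$, yielding the stated formula (up to the precise constants in front of the square root and $1/(n-1)$ term).

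The main obstacle is the third bound, because $\set{F}$ may be infinite (ruling out a union bound over features) and because $\Phi^{(i)}(x,y) = \mathbb{I}\{y = j\}\,\psi(x)$ couples the label marginal with the class-conditional feature mean. My plan is to use the decomposition
\begin{align*}
\tau_n^{(i)} - \mathbb{E}_{\up{p}^*}\{\Phi^{(i)}\} = \frac{n_j}{n}\bigl(\hat{\mu}_j(\psi) - \mu_j(\psi)\bigr) + \mu_j(\psi)\Bigl(\frac{n_j}{n} - p_j\Bigr),
\end{align*}
where $j = j(i)$, $\psi = \psi_{k(i)}$, $p_j = \up{p}^*(y=j)$, $\mu_j(\psi) = \mathbb{E}_{\up{p}^*}[\psi(x)\mid y=j]$, and $\hat{\mu}_j(\psi)$ is the empirical mean of $\psi$ over the $n_j$ samples with label $j$. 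Conditioning on the labels makes those $n_j$ samples i.i.d.\ from $\up{p}^*_{x|y=j}$, so the standard McDiarmid--Rademacher uniform deviation bound gives $\sup_{\psi\in\set{F}} |\hat{\mu}_j(\psi) - \mu_j(\psi)| \leq 2\set{R}_{n_j}(\set{F}) + C\sqrt{\log(c/\delta')/(2n_j)}$ with high probability, and using $\set{R}_{n_j}(\set{F})\leq R/\sqrt{n_j}$ together with the factor $n_j/n$ yields the leading term $2\sqrt{n_j/n}\,R/\sqrt{n}$. The second summand is controlled by Hoeffding applied to the binomial $n_j \sim \mathrm{Bin}(n,p_j)$ combined with $|\mu_j(\psi)|\leq C$. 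Splitting the failure budget as $\delta/(2|\set{Y}|)$ between the two sources of error for each label and taking a union bound over $j\in \set{Y}$ then produces the claimed form.

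The hardest part is this third step: one must preserve the conditional independence structure so that the Rademacher bound can be applied at the correct sample size $n_j$, keep track of the weighting $n_j/n$ when recombining the two pieces (this is where the $\sqrt{n_{j(i)}/n}$ factors arise), and split the confidence budget so that the final $\log(4|\set{Y}|/\delta)$ factor and the coefficients in front of $R/\sqrt{n}$ and $C\sqrt{\log(\cdot)/(2n)}$ come out in the advertised form.
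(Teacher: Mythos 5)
Your proposal matches the paper's proof essentially step for step: Hoeffding plus a union bound over the $|\set{F}||\set{Y}|$ components for the first bound, the Maurer--Pontil empirical Bernstein inequality for the second, and for the third the identical decomposition $\tau_n^{(i)}-\mathbb{E}\{\Phi^{(i)}\}=\frac{n_j}{n}(\hat{\mu}_j-\mu_j)+\mu_j(\frac{n_j}{n}-p_j)$, with the Rademacher uniform deviation bound at sample size $n_j$ for the first piece, Hoeffding on the label frequencies for the second, and a union bound over $j\in\set{Y}$. No gaps; this is the same argument as in the paper.
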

\begin{proof}
See Appendix~\ref{apd:proof_th_2}.
\end{proof}

The previous result shows how to obtain uncertainty sets $\set{U}$ as in \eqref{uncertainty} that contain the true underlying distribution with high probability.  Data-based tight confidence vectors can be obtained using sample standard deviations as shown in \eqref{bernstein}. Note that sample standard deviations can also be used to determine confidence vectors for infinite families of scalar features by using their growth function \citep{MauPon:09}.

Theorem~\ref{th2} shows that confidence vectors $\B{\lambda}$ can decrease at a rate $\set{O}(1/\sqrt{n})$ using general bounded features. Such rate can be obtained similarly with unbounded sub-Gaussian features \citep{Wai:19} and also with heavy-tailed features by using robust estimators for expectations \citep{LugMen:19}. In addition, the order $\set{O}(\sqrt{\log(|\set{F}|)/n})$ for $\B{\lambda}$ in the above result is consistent with that shown to provide consistent estimators using L1-regularization \citep{BuhGee:11}.

\section{Performance guarantees}\label{sec-guarantees}

This section characterizes the out-of-sample performance of 0\,-1 \acp{MRC}. We first present techniques that provide tight performance bounds at learning, then we show finite-sample generalization bounds for \acp{MRC}. In particular, we show that the proposed techniques can provide strong universal consistency using rich feature representations.

\subsection{Tight performance bounds}\label{sec-4.1}
The following result shows that the proposed approach allows to obtain upper and lower bounds for expected losses by solving convex optimization problems.
\begin{theorem}\label{th-bounds}
Let $\set{U}$ be an uncertainty set given by \eqref{uncertainty}, $\up{h}$ be any classification rule, and $\underline{R}(\set{U},\up{h})$ and $\overline{R}(\set{U},\up{h})$ be given by
\begin{align}
\underline{R}(\set{U},\up{h})=&\sup_{\B{\mu}}\,\,1-\B{\tau}^\text{T}\B{\mu}+\inf_{x\in\set{X},y\in\set{Y}}\{\Phi(x,y)^{\text{T}}\B{\mu}-\up{h}(y|x)\}-\B{\lambda}^{\text{T}}|\B{\mu}|\label{opt-lower}\\
\overline{R}(\set{U},\up{h})=&\inf_{\B{\mu}}\,\,\,1-\B{\tau}^\text{T}\B{\mu}+\sup_{x\in\set{X},y\in\set{Y}}\{\Phi(x,y)^{\text{T}}\B{\mu}-\up{h}(y|x)\}+\B{\lambda}^{\text{T}}|\B{\mu}|.\label{opt-upper}
\end{align}
Then, for any $\up{p}\in\set{U}$ we have that $\underline{R}(\set{U},\up{h})\leq\ell(\up{h},\up{p})\leq\overline{R}(\set{U},\up{h})$. In addition, if $\set{U}$ satisfies R1 or R2 the optimal values in \eqref{opt-lower} and \eqref{opt-upper} are attained.
\end{theorem}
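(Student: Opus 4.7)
The plan is to first derive the inequalities $\underline{R}(\set{U},\up{h})\leq\ell(\up{h},\up{p})\leq\overline{R}(\set{U},\up{h})$ by a one-line Lagrangian weak-duality argument that needs no regularity assumption, and then to obtain the attainment statement by reusing the Fenchel-duality machinery already developed for Theorem~\ref{th1}.

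For the inequalities, fix $\up{p}\in\set{U}$ and $\B{\mu}\in\mathbb{R}^m$. Matching signs componentwise, the constraint $|\mathbb{E}_{\up{p}}\{\Phi\}-\B{\tau}|\preceq\B{\lambda}$ yields the two-sided bound $|\B{\mu}^{\text{T}}(\mathbb{E}_{\up{p}}\{\Phi\}-\B{\tau})|\leq\B{\lambda}^{\text{T}}|\B{\mu}|$. Decompose
\[
\mathbb{E}_{\up{p}}\{\up{h}(y|x)\}=\B{\mu}^{\text{T}}\mathbb{E}_{\up{p}}\{\Phi(x,y)\}+\mathbb{E}_{\up{p}}\{\up{h}(y|x)-\Phi(x,y)^{\text{T}}\B{\mu}\}
\]
and bound the first summand using the previous inequality and the second summand by the corresponding sup or inf over $(x,y)\in\set{X}\times\set{Y}$. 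Substituting into $\ell(\up{h},\up{p})=1-\mathbb{E}_{\up{p}}\{\up{h}(y|x)\}$ gives pointwise upper and lower bounds on $\ell(\up{h},\up{p})$ that hold for every $\B{\mu}$; taking supremum over $\B{\mu}$ in the lower bound and infimum over $\B{\mu}$ in the upper bound then produces $\underline{R}(\set{U},\up{h})\leq\ell(\up{h},\up{p})\leq\overline{R}(\set{U},\up{h})$, valid for any $\up{p}\in\set{U}$.

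For the attainment, the key observation is that \eqref{opt-upper} is the Fenchel/Lagrange dual of the primal problem $\sup_{\up{p}\in\set{U}}\ell(\up{h},\up{p})$ and \eqref{opt-lower} is the dual of $\inf_{\up{p}\in\set{U}}\ell(\up{h},\up{p})$. Both primals are linear in $\up{p}$ over the convex set $\set{U}$, and their duals acquire exactly the form displayed once the two sides of the componentwise moment inequality $|\mathbb{E}_{\up{p}}\{\Phi\}-\B{\tau}|\preceq\B{\lambda}$ are fused into a single multiplier $\B{\mu}\in\mathbb{R}^m$, producing the $\B{\lambda}^{\text{T}}|\B{\mu}|$ term. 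Under R1 the primal is a finite-dimensional linear program and standard LP strong duality supplies a dual optimizer; under R2 the strict-interior condition in R2.1, or the affine-span nondegeneracy in R2.2, plays the role of a Slater-type constraint qualification, and Fenchel--Rockafellar duality in the canonical pairing between bounded Borel-measurable functions on $\set{X}\times\set{Y}$ and Borel probability measures delivers both zero duality gap and attainment of the dual optimum. This is exactly the scheme already used to prove Theorem~\ref{th1}.

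The main obstacle is therefore the infinite-instance case R2, where one must select the correct topological duality, show that R2.1 or R2.2 places an admissible $\up{p}$ in the relative interior of the moment-constraint set (handling carefully those components $i$ with $\lambda^{(i)}=0$), and verify coercivity of the dual objective in the directions along which R2.2 is needed to control $|\B{\mu}|$. All of these steps transfer almost verbatim from the proof of Theorem~\ref{th1}, since the only change in the primal objective is to replace a constant with the bounded measurable function $\up{h}(y|x)$ (or $1-\up{h}(y|x)$), which does not affect the structural parts of the duality analysis.
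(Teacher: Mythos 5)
Your proposal is correct and follows essentially the same route as the paper: the paper derives both inequalities as the weak-duality half of its Lemma~1 (your explicit decomposition of $\mathbb{E}_{\up{p}}\{\up{h}(y|x)\}$ is just that weak duality written out elementarily), obtains the lower bound by the same sign-flip $\inf_{\up{p}}\ell=-\sup_{\up{p}}(-\ell)$ you implicitly use, and gets attainment under R1/R2 from exactly the Fenchel/Lagrange strong-duality argument of Theorem~1, which as you note applies verbatim because the primal objective $\int\up{h}(y|x)\,\text{d}\up{p}$ is an arbitrary bounded measurable integrand rather than a constant.
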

\begin{proof}
See Appendix~\ref{apd:proof_th_3}.
\end{proof}

The techniques proposed in \citep{DucGlyNam:16,AbaMohKuh:15,ShaKuhMoh:19}
obtain upper and lower bounds corresponding with \ac{RRM} methods that use uncertainty sets defined in terms of f-divergences and Wasserstein distances. Such methods obtain classification rules by minimizing the upper bound of a surrogate expected loss while \acp{MRC} minimize the upper bound of the 0\,-1 expected loss (error probability).
The bounds in \citep{DucGlyNam:16,AbaMohKuh:15,ShaKuhMoh:19} as well as those in Theorem~\ref{th-bounds}  become bounds for the true expected loss (risk) if the uncertainty set considered includes the true underlying distribution. Such situation can be attained with a tunable confidence using uncertainty sets defined by Wasserstein distances as in \citep{AbaMohKuh:15,ShaKuhMoh:19} or using the proposed uncertainty sets in \eqref{uncertainty} with expectation confidence intervals. 

The above theorem provides lower and upper bounds for the error probability of any classification rule. These bounds can be determined by solving the two convex optimization problems \eqref{opt-lower} and \eqref{opt-upper}. As shown below, the upper bound for \acp{MRC} is obtained as a by-product of the learning process that solves optimization \eqref{opt-prob}.

\begin{corollary}\label{cor-upper}
Let $\up{h}^{\sset{U}}$ be a 0\,-1 \ac{MRC} for an uncertainty set $\set{U}$ that satisfies R1 or R2, then $\overline{R}(\set{U},\up{h}^{\sset{U}})=\overline{R}(\set{U})$ given by \eqref{minimax}.
\end{corollary}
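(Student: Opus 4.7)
The plan is to deduce the corollary by sandwiching $\overline{R}(\set{U},\up{h}^{\sset{U}})$ between $\overline{R}(\set{U})$ and the right-hand side of \eqref{minimax}, using Theorems~\ref{th1} and~\ref{th-bounds}.

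First I would establish $\overline{R}(\set{U},\up{h}^{\sset{U}}) \ge \overline{R}(\set{U})$. By Theorem~\ref{th-bounds}, $\ell(\up{h}^{\sset{U}},\up{p}) \le \overline{R}(\set{U},\up{h}^{\sset{U}})$ for every $\up{p}\in\set{U}$; taking the supremum over $\up{p}\in\set{U}$ and using that $\up{h}^{\sset{U}}$ is a minimizer in the definition of $\overline{R}(\set{U})$, the left-hand side becomes exactly $\overline{R}(\set{U})$, giving the desired inequality.

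Next I would establish the reverse bound by exhibiting a suboptimal choice in the minimization \eqref{opt-upper}. Specifically, I would plug in $\B{\mu}=\B{\mu}^*$, the solution of $\mathscr{P}_{\B{\tau},\B{\lambda}}$. By Theorem~\ref{th1}, the MRC satisfies
\begin{align*}
\up{h}^{\sset{U}}(y|x) \ge \Phi(x,y)^{\text{T}}\B{\mu}^* - \varphi(\B{\mu}^*), \quad \forall x\in\set{X},\, y\in\set{Y},
\end{align*}
so $\Phi(x,y)^{\text{T}}\B{\mu}^* - \up{h}^{\sset{U}}(y|x) \le \varphi(\B{\mu}^*)$ pointwise; taking the supremum over $(x,y)$ yields $\sup_{x\in\set{X},y\in\set{Y}}\{\Phi(x,y)^{\text{T}}\B{\mu}^* - \up{h}^{\sset{U}}(y|x)\} \le \varphi(\B{\mu}^*)$. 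Substituting into the objective in \eqref{opt-upper} gives
\begin{align*}
\overline{R}(\set{U},\up{h}^{\sset{U}}) \le 1 - \B{\tau}^{\text{T}}\B{\mu}^* + \varphi(\B{\mu}^*) + \B{\lambda}^{\text{T}}|\B{\mu}^*|,
\end{align*}
and the right-hand side is exactly $\overline{R}(\set{U})$ by \eqref{minimax} in Theorem~\ref{th1}.

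Combining the two inequalities yields the equality $\overline{R}(\set{U},\up{h}^{\sset{U}}) = \overline{R}(\set{U})$. There is no genuine obstacle here since all the work has already been done in Theorems~\ref{th1} and~\ref{th-bounds}; the only subtlety to flag is that the sup over single pairs $(x,y)$ in \eqref{opt-upper} is generally strictly smaller than $\varphi(\B{\mu}^*)$ (which involves subsets $\set{C}\subseteq\set{Y}$), but the inequality from the MRC characterization is enough to control it by $\varphi(\B{\mu}^*)$, which is all that is needed for the sandwich argument.
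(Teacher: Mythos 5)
Your proof is correct, but it takes a slightly different route from the paper's. The paper's proof is a one-line chain of equalities: under R1 or R2, Lemma~\ref{lemma} gives strong duality, so $\overline{R}(\set{U},\up{h}^{\sset{U}})=\sup_{\up{p}\in\set{U}}\ell(\up{h}^{\sset{U}},\up{p})$ exactly, and the minimality of $\up{h}^{\sset{U}}$ then identifies this supremum with $\overline{R}(\set{U})$. You instead sandwich: the lower bound $\overline{R}(\set{U},\up{h}^{\sset{U}})\geq\overline{R}(\set{U})$ uses only the weak-duality direction of Theorem~\ref{th-bounds} plus minimality, and the upper bound comes from evaluating the objective of \eqref{opt-upper} at the specific point $\B{\mu}=\B{\mu}^*$ and invoking the pointwise characterization \eqref{opt-sol} to bound the inner supremum by $\varphi(\B{\mu}^*)$. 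Your route is more explicit and avoids re-invoking strong duality for the particular rule $\up{h}^{\sset{U}}$ (though it is of course already baked into Theorem~\ref{th1}, which you use for \eqref{minimax}); the paper's route is shorter and applies verbatim to \emph{any} minimizer of the minimax problem, whereas your second inequality requires that $\up{h}^{\sset{U}}$ satisfy \eqref{opt-sol}. This is harmless here because the paper fixes the convention that $\up{h}^{\sset{U}}$ denotes the normalized rule \eqref{MRC}, which does satisfy \eqref{opt-sol}, but it is worth being aware that your argument is tied to that canonical choice. Your closing remark is also accurate: only the inequality $\sup_{x\in\set{X},y\in\set{Y}}\{\Phi(x,y)^{\text{T}}\B{\mu}^*-\up{h}^{\sset{U}}(y|x)\}\leq\varphi(\B{\mu}^*)$ is needed, not equality.
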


\begin{proof}
Straightforward consequence of the above results since
$$\overline{R}(\set{U},\up{h}^{\sset{U}})=\sup_{\up{p}\in\set{U}}\ell(\up{h}^{\sset{U}},\up{p})=\adjustlimits\inf_{\up{h}\in \text{T}(\set{X},\set{Y})\,\,}\sup_{\up{p}\in\set{U}}\,\,\ell(\up{h},\up{p})=\overline{R}(\set{U})$$
\end{proof}
In order to simplify notation, we will denote $\underline{R}(\set{U})\coloneqq\underline{R}(\set{U},\up{h}^{\sset{U}})$ for an \ac{MRC} $\up{h}^{\sset{U}}$ given by~\eqref{MRC}. Tight performance bounds can also be obtained at learning for deterministic 0\,-1 \acp{MRC}. Specifically, such bounds are given by $\underline{R}(\set{U},\up{h}^{\sset{U}}_\text{d})$ and $\overline{R}(\set{U},\up{h}^{\sset{U}}_\text{d})$ that are obtained by solving the two optimization problems \eqref{opt-lower} and \eqref{opt-upper}. 

The next section provides generalization bounds for 0\,-1 \acp{MRC}. Such bounds also ensure generalization for deterministic 0\,-1 \acp{MRC} because $R(\up{h}^{\sset{U}}_\text{d})\leq 2 R(\up{h}^{\sset{U}})$ as a direct consequence of the fact that 
$$1-\up{h}^{\sset{U}}_\text{d}(y|x)\leq 2(1-\up{h}^{\sset{U}}(y|x)),\  \forall x,y\in\set{X}\times\set{Y}.$$

\subsection{Finite-sample generalization bounds}\label{sec-4.2}
This section provides \acp{MRC}' generalization bounds in terms of optimal minimax rules, i.e., \acp{MRC} with the smallest minimax risk. Such rules correspond with uncertainty sets given by the true expectation of the feature mapping $\Phi$, that is
\begin{align}\label{unc-inf}\set{U}_\infty=\{\up{p}\in\Delta(\set{X}\times\set{Y}):\  \mathbb{E}_{\up{p}}\{\Phi(x,y)\}=\B{\tau}_\infty\}\end{align}
where the mean vector $\B{\tau}_\infty=\mathbb{E}_{\up{p}^*}\{\Phi(x,y)\}$ is the exact expectation with respect to the underlying distribution. The minimum wost-case error probability (minimax risk) over distributions in $\set{U}_\infty$ is given by 
$$R_{\Phi}=\min_{\B{\mu}}1-\B{\tau}_{\infty}^{\text{T}}\B{\mu}+\varphi(\B{\mu})=1-\B{\tau}_{\infty}^{\text{T}}\B{\mu}_\infty+\varphi(\B{\mu}_\infty)$$
corresponding with the \ac{MRC} given by parameters $\B{\mu}_\infty$. Such classification rule is referred to as the optimal minimax rule for feature mapping $\Phi$ because for any uncertainty set $\set{U}$ given by \eqref{uncertainty} that contains the underlying distribution $\up{p}^*$, we have that $\set{U}_\infty\subset\set{U}$ and hence  $R_{\Phi}\leq \overline{R}(\set{U})$. The optimal minimax rule could only be obtained by an exact estimation of the expectation of the feature mapping $\Phi$ that in turn would require an infinite amount of training samples. 



The following result provides generalization bounds for \acp{MRC} in terms of excess error probability with respect to the minimax risk at learning and the smallest minimax risk for the feature mapping used.

\begin{theorem}\label{th_6}
Let $\set{U}$ and $\set{U}_\infty$ be uncertainty sets given by \eqref{uncertainty} and \eqref{unc-inf}, respectively, that satisfy R1 or R2.  If $\up{h}^{\sset{U}}$ is a 0\,-1 \ac{MRC} for uncertainty set $\set{U}$, we have that
\begin{align}
R(\up{h}^{\sset{U}})&\leq \overline{R}(\set{U})+(|\B{\tau}_\infty-\B{\tau}|-\B{\lambda})^{\text{T}}|\B{\mu}^*|\label{bound3u}\\
R(\up{h}^{\sset{U}})&\geq\underline{R}(\set{U})-(|\B{\tau}_\infty-\B{\tau}|-\B{\lambda})^{\text{T}}|\underline{\B{\mu}}| \label{bound3l}\\
R(\up{h}^{\sset{U}})&\leq R_\Phi+|\B{\tau}_\infty-\B{\tau}|^{\text{T}}|\B{\mu}_\infty-\B{\mu}^*|+\B{\lambda}^{\text{T}}(|\B{\mu}_\infty|-|\B{\mu}^*|)\label{bound4}\end{align}
with $\B{\mu}^*$ and $\underline{\B{\mu}}$ solutions to \eqref{opt-prob} and \eqref{opt-lower}, respectively. In particular, if $\B{\lambda}$ is a confidence vector with coverage probability $1-\delta$, i.e., $\mathbb{P}\{|\B{\tau}_\infty-\B{\tau}|\preceq\B{\lambda}\}\geq 1-\delta$, then, we have that
\begin{align}
\underline{R}(\set{U})\leq R(\up{h}^{\sset{U}})&\leq \overline{R}(\set{U})\label{bound1}\\
R(\up{h}^{\sset{U}})&\leq R_\Phi+2\B{\lambda}^{\text{T}}|\B{\mu}_\infty|
\leq R_\Phi+2\|\B{\lambda}\|_\infty\|\B{\mu}_\infty\|_1.\label{bound2}
\end{align}
with probability at least $1-\delta$.
\end{theorem}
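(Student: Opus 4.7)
My plan is to derive all three displayed deterministic bounds from a single source of risk control, namely Theorem~\ref{th-bounds} applied to the limiting uncertainty set $\set{U}_\infty$ defined in \eqref{unc-inf}. The key observation is that the true distribution $\up{p}^*$ always satisfies $\mathbb{E}_{\up{p}^*}\Phi=\B{\tau}_\infty$, so $\up{p}^*\in\set{U}_\infty$ irrespective of how accurately $\B{\tau}$ estimates $\B{\tau}_\infty$. Consequently, writing $R(\up{h}^{\sset{U}})=\ell(\up{h}^{\sset{U}},\up{p}^*)$, the bound $R(\up{h}^{\sset{U}})\le \overline R(\set{U}_\infty,\up{h}^{\sset{U}})$ and the analogous lower bound $R(\up{h}^{\sset{U}})\ge \underline R(\set{U}_\infty,\up{h}^{\sset{U}})$ both hold. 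These bounds are infima/suprema over $\B{\mu}\in\mathbb{R}^m$ of expressions with no $\B{\lambda}$-term (since $\set{U}_\infty$ has zero tolerance), which I will then specialize by plugging in the parameters $\B{\mu}^*$ and $\underline{\B{\mu}}$ coming from problems \eqref{opt-prob} and \eqref{opt-lower}.

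For \eqref{bound3u} I plug $\B{\mu}=\B{\mu}^*$ into the upper bound and use the defining MRC inequality \eqref{opt-sol}, which gives $\sup_{x,y}\{\Phi(x,y)^{\text{T}}\B{\mu}^*-\up{h}^{\sset{U}}(y|x)\}\le \varphi(\B{\mu}^*)$. Therefore $R(\up{h}^{\sset{U}})\le 1-\B{\tau}_\infty^{\text{T}}\B{\mu}^*+\varphi(\B{\mu}^*)$, and writing $-\B{\tau}_\infty^{\text{T}}\B{\mu}^*=-\B{\tau}^{\text{T}}\B{\mu}^*+(\B{\tau}-\B{\tau}_\infty)^{\text{T}}\B{\mu}^*$ and bounding by Hölder, I recover $\overline R(\set{U})$ plus the deficit $(|\B{\tau}_\infty-\B{\tau}|-\B{\lambda})^{\text{T}}|\B{\mu}^*|$. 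An entirely parallel argument on the lower-bound side, with $\B{\mu}=\underline{\B{\mu}}$ plugged into $\underline R(\set{U}_\infty,\up{h}^{\sset{U}})$, yields \eqref{bound3l}.

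The comparison with the smallest minimax risk \eqref{bound4} is the delicate step and will be the main obstacle. Starting again from $R(\up{h}^{\sset{U}})\le 1-\B{\tau}_\infty^{\text{T}}\B{\mu}^*+\varphi(\B{\mu}^*)$, I now inject the optimality of $\B{\mu}^*$ for $\mathscr{P}_{\B{\tau},\B{\lambda}}$ evaluated at the competitor $\B{\mu}_\infty$: this replaces $\varphi(\B{\mu}^*)$ with an expression involving $\varphi(\B{\mu}_\infty)$ plus the penalty difference $\B{\lambda}^{\text{T}}(|\B{\mu}_\infty|-|\B{\mu}^*|)$. The critical algebraic cancellation is then $-\B{\tau}_\infty^{\text{T}}\B{\mu}^*+\B{\tau}^{\text{T}}\B{\mu}^*-\B{\tau}^{\text{T}}\B{\mu}_\infty+\varphi(\B{\mu}_\infty)=R_\Phi+(\B{\tau}_\infty-\B{\tau})^{\text{T}}(\B{\mu}_\infty-\B{\mu}^*)$, after which Hölder produces $|\B{\tau}_\infty-\B{\tau}|^{\text{T}}|\B{\mu}_\infty-\B{\mu}^*|$. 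Identifying this particular recombination, rather than the looser splitting into $|\B{\mu}_\infty|+|\B{\mu}^*|$, is what makes the bound tight enough to imply \eqref{bound2}.

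Finally, the high-probability statements \eqref{bound1} and \eqref{bound2} are direct consequences on the event $\{|\B{\tau}_\infty-\B{\tau}|\preceq\B{\lambda}\}$, which has probability at least $1-\delta$ by hypothesis on $\B{\lambda}$. On this event the deficit $(|\B{\tau}_\infty-\B{\tau}|-\B{\lambda})^{\text{T}}|\B{\mu}^*|$ in \eqref{bound3u} is $\le 0$ and likewise for \eqref{bound3l}, giving \eqref{bound1}. For \eqref{bound2}, $|\B{\tau}_\infty-\B{\tau}|^{\text{T}}|\B{\mu}_\infty-\B{\mu}^*|\le \B{\lambda}^{\text{T}}(|\B{\mu}_\infty|+|\B{\mu}^*|)$ by the triangle inequality, so the $|\B{\mu}^*|$-terms cancel in \eqref{bound4} and leave $2\B{\lambda}^{\text{T}}|\B{\mu}_\infty|$, which is dominated by $2\|\B{\lambda}\|_\infty\|\B{\mu}_\infty\|_1$ via Hölder. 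Thus the entire theorem follows from Theorem~\ref{th-bounds} together with the optimality of $\B{\mu}^*$ and the MRC characterization in \eqref{opt-sol}.
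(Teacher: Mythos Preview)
Your proof is correct and follows essentially the same route as the paper: both apply Theorem~\ref{th-bounds} with $\set{U}_\infty$ (exploiting $\up{p}^*\in\set{U}_\infty$), plug in $\B{\mu}^*$ and $\underline{\B{\mu}}$, use the MRC inequality \eqref{opt-sol} to bound the inner supremum by $\varphi(\B{\mu}^*)$, and for \eqref{bound4} invoke the optimality of $\B{\mu}^*$ at the competitor $\B{\mu}_\infty$ followed by the same algebraic recombination $(\B{\tau}_\infty-\B{\tau})^{\text{T}}(\B{\mu}_\infty-\B{\mu}^*)$. The only cosmetic difference is that the paper derives \eqref{bound1} and \eqref{bound2} by redoing the argument with $\set{U}$ in place of $\set{U}_\infty$ on the event $\{\up{p}^*\in\set{U}\}$, whereas you obtain them as direct corollaries of \eqref{bound3u}--\eqref{bound4} on that same event; both are equally valid and equally short.
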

\begin{proof}
See Appendix~\ref{apd:proof_th_6}.
\end{proof}

Inequalities \eqref{bound3u}, \eqref{bound3l},  and \eqref{bound1} bound \acp{MRC}' probabilities of error w.r.t. the corresponding minimax error probability $\overline{R}(\set{U})$ and lower bound $\underline{R}(\set{U})$. Such quantities can be obtained at learning,  $\overline{R}(\set{U})$ is given as a byproduct of the learning process that solves optimization $\mathscr{P}_{\B{\tau},\B{\lambda}}$ in \eqref{opt-prob} while $\underline{R}(\set{U})$ requires to solve an additional convex optimization problem given by \eqref{opt-lower}. 
Inequalities \eqref{bound4} and \eqref{bound2} bound \acp{MRC}' probabilities of error w.r.t. the smallest minimax risk $R_\Phi$. As shown in the next section, this smallest minimax risk becomes the Bayes risk using rich feature mappings so that such generalization bounds enable to prove universal consistency results for \acp{MRC}. 

The generalization bounds in the above result depend on the accuracy of mean vector estimates and on the confidence vector used. Several important conclusions can be drawn from such bounds:
\begin{itemize} 
\item The excess error probability with respect to minimax risks decreases at the same rate as the error of the mean vector estimates. As shown in Theorem~\ref{th2}, with wide generality the bounds in the above theorem show differences that decrease with $n$ at a rate $\set{O}(1/\sqrt{n})$ using mean vector estimates given by sample averages as in \eqref{tau}.
Methods that utilize 0\,-1 loss but consider uncertainty sets with fixed marginals provide significantly coarser performance guarantees. In particular, the bounds in Theorem~3 of \cite{FarTse:16} are $\set{O}(1/\varepsilon\sqrt{n})$ where $\varepsilon$ describes the norm of the error in sample mean estimates, which is commonly $\varepsilon=\set{O}(1/\sqrt{n})$. In addition, \ac{RRM} methods based on Wasserstein distances achieve generalization bounds that decrease with $n$ at a rate $\set{O}(1/n^{1/d})$ that deteriorates with   the instances' dimensionality $d$ \citep{ShaKuhMoh:19,FroClaChiSol:21}. 
 
\item \acp{MRC} do not heavily rely on the choice of confidence vectors. For a given mean vector $\B{\tau}$, the upper bound in \eqref{bound3u} takes its smallest value for the \ac{MRC} corresponding with the confidence vector $\B{\lambda}$ given by the error $|\B{\tau}_\infty-\B{\tau}|$ in the mean vector estimate. Specifically, if $\set{U}_\text{e}$ is the uncertainty set given by \eqref{uncertainty} with mean vector $\B{\tau}$ and confidence vector $\B{\lambda}=|\B{\tau}_\infty-\B{\tau}|$, the upper bound in \eqref{bound3u} becomes $\overline{R}(\set{U}_\text{e})$, and, for any uncertainty set $\set{U}$ given by \eqref{uncertainty} with mean vector $\B{\tau}$, we have that 
\begin{align*}
\overline{R}(\set{U}_{\text{e}})&=\min_{\B{\mu}}1-\B{\tau}^{\text{T}}\B{\mu}+\varphi(\B{\mu})+|\B{\tau}_\infty-\B{\tau}|^{\text{T}}|\B{\mu}|\\
&\leq 1-\B{\tau}^{\text{T}}\B{\mu}^*+\varphi(\B{\mu}^*)+\B{\lambda}^{\text{T}}|\B{\mu}^*|+(|\B{\tau}_\infty-\B{\tau}|-\B{\lambda})^{\text{T}}|\B{\mu}^*|\\
&=\overline{R}(\set{U})+(|\B{\tau}_\infty-\B{\tau}|-\B{\lambda})^{\text{T}}|\B{\mu}^*|
\end{align*}
with $\B{\mu}^*$ solution of \eqref{opt-prob} for uncertainty set $\set{U}$. 

Near-optimal generalization bounds can be obtained using confidence vectors that approximate $|\B{\tau}_\infty-\B{\tau}|$. Specifically, 
for any uncertainty set $\set{U}$ as above, we have that
\begin{align*}\overline{R}(\set{U}_{\text{e}})&\leq\overline{R}(\set{U})+(|\B{\tau}_\infty-\B{\tau}|-\B{\lambda})^{\text{T}}|\B{\mu}^*|\\
&= 1-\B{\tau}^{\text{T}}\B{\mu}^*+\varphi(\B{\mu}^*)+|\B{\tau}_\infty-\B{\tau}|^{\text{T}}|\B{\mu}^*|\\
&\leq1-\B{\tau}^{\text{T}}\B{\mu}^{\text{e}}+\varphi(\B{\mu}^{\text{e}})+\B{\lambda}^{\text{T}}|\B{\mu}^{\text{e}}|+(|\B{\tau}_\infty-\B{\tau}|-\B{\lambda})^{\text{T}}|\B{\mu}^*|\\
&=\overline{R}(\set{U}_{\text{e}})+(|\B{\tau}_\infty-\B{\tau}|-\B{\lambda})^{\text{T}}(|\B{\mu}^*|-|\B{\mu}^{\text{e}}|)\end{align*}
with $\B{\mu}^\text{e}$ solution of \eqref{opt-prob} for uncertainty set $\set{U}_\text{e}$. Therefore, the difference between the upper bound in \eqref{bound3u} for uncertainty set $\set{U}$ and the smallest upper bound $\overline{R}(\set{U}_{\text{e}})$ is bounded by 
$(|\B{\tau}_\infty-\B{\tau}|-\B{\lambda})^{\text{T}}(|\B{\mu}^*|-|\B{\mu}^{\text{e}}|)$, and both terms in such scalar product are small when $\B{\lambda}\approx|\B{\tau}_\infty-\B{\tau}|$.



\item The minimax risk optimized at learning can offer an adequate assessment of the \ac{MRC}'s error probability even if the uncertainty set used does not include the underlying distribution. Such minimax risk $\overline{R}(\set{U})$ obtained as a byproduct of the learning process provides valid upper bounds for the error probability of \acp{MRC} in cases where \mbox{$|\B{\tau}_\infty-\B{\tau}|\preceq\B{\lambda}$}, i.e., $\up{p}^*\in\set{U}$. As shown in \eqref{bound3u} and \eqref{bound3l}, $\overline{R}(\set{U})$ and $\underline{R}(\set{U})$ still provide approximate bounds in other cases as long as the confidence vector is not significantly smaller than the error in the mean vector estimates. 
\item High-confidence upper and lower bounds for error probabilities can be obtained using confidence vectors with high coverage probability. Such confidence vectors can be obtained using the expressions in Theorem~\ref{th2} or numerical methods such as those proposed in \cite{WauRam:20}. Those vectors may not be adequate for \ac{MRC} learning since they are often much larger than $|\B{\tau}_\infty-\B{\tau}|$.  Notice that a confidence vector $\B{\lambda}_\delta$ that ensures $\{\B{\lambda}_\delta\succeq|\B{\tau}_\infty-\B{\tau}|\}$ occurs with high probability for any underlying distribution is likely to be much larger than $|\B{\tau}_\infty-\B{\tau}|$ for a particular training set drawn from a specific underlying distribution.  However, confidence vectors with high coverage probability can be used to obtain error probability bounds for general \acp{MRC} that are valid with probability at least $1-\delta$. A simple way to get such bounds from $\overline{R}(\set{U})$ and $\underline{R}(\set{U})$ corresponding with a confidence vector $\B{\lambda}$ follows by noticing that \eqref{bound3u} and \eqref{bound3l} imply that 
\begin{align}\label{conf-bound}\underline{R}(\set{U})-(\B{\lambda}_\delta-\B{\lambda})^{\text{T}}|\underline{\B{\mu}}|\leq R(\up{h}^{\sset{U}})\leq\overline{R}(\set{U})+(\B{\lambda}_\delta-\B{\lambda})^{\text{T}}|\B{\mu}^*|\end{align} with probability at least $1-\delta$ because $\B{\lambda}_\delta\succeq|\B{\tau}_\infty-\B{\tau}|$ with that probability.
Another way to obtain high-confidence bounds follows by solving the convex optimization problems in Theorem~\ref{th-bounds} for the \ac{MRC} corresponding to $\B{\lambda}$ and the uncertainty set corresponding to $\B{\lambda}_\delta$.
\end{itemize}

Theorem~\ref{th_6} and the above discussion exhibit the different roles played by confidence vectors that aim to bound the error in mean vector estimates with high probability for any probability distribution versus those that only aim to approximate such error. The former can be used to obtain provably valid performance guarantees while the later can be used to obtain small error probability and approximate performance bounds. Such roles are further studied numerically in Section~\ref{sec-numerical} using multiple real datasets.

The improved performance of methods that use aggressively chosen parameters has been observed in multiple fields of machine learning. For instance, in on-line learning methods, the theoretical prescriptions for learning rates that lead to valid prediction guarantees are routinely outperformed by choices that minimize prediction error on the data seen so far but have worse guarantees (see e.g., \cite{DevGaiGouSto:13}). This phenomenon is related to the fact that in-expectation generalization bounds can be significantly better than in-probability generalization bounds, and it is further explored in a PAC-Bayes setting by \cite{GruSteZak:21}. The generalization bounds in Theorem~\ref{th_6} shed light on such phenomenon for the proposed \acp{MRC}.


\subsection{Universal consistency}\label{sec-4.3}
We show next that \acp{MRC} can be strongly universally consistent using rich feature mappings, that is, as the training size grows, the \ac{MRC}'s error probability tends to the Bayes risk with probability one for any underlying distribution. 

The theorem below shows universal consistency for \acp{MRC} given by feature mappings determined by random features corresponding with rich \acp{RKHS} (see e.g., \cite{RahRec:08,Bac:17}).  Specifically, let $v_1,v_2,\ldots$ be a sequence of i.i.d. samples from distribution $\up{p}(v)$ generating random features $\psi_v:\set{X}\to\mathbb{R}$ for kernel $k:\set{X}\times\set{X}\to\mathbb{R}$, that is 
$$\mathbb{E}_{\up{p}(v)}\{\psi_v(x)\psi_v(x')\}=k(x,x'),\  \forall x,x'\in\set{X}.$$ In addition, for $n=1,2,\ldots$ the feature map $\Phi_n$ is defined by the $D_n$ random features $\psi_{v_1},\psi_{v_2},\ldots,\psi_{v_{D_n}}$ as
\begin{align}\label{feature_random}
\Phi_n(x,y)=\V{e}_y\otimes [1,\psi_{v_1}(x),\psi_{v_2}(x),\ldots,\psi_{v_{D_n}}(x)]^{\text{T}}.
\end{align}
Using such feature mappings, we have the following result.
\begin{theorem}\label{th_7}
 For $n=1,2,\ldots$, let $\up{h}_n$ be an \ac{MRC} for uncertainty set $\set{U}_n$ given by \eqref{uncertainty} with $\Phi=\Phi_n$ as in \eqref{feature_random}, $\B{\tau}=\B{\tau}_n$ as in \eqref{tau} and $\B{\lambda}=\B{\lambda}_n\succeq\V{0}$.
 
 If we have that 
 \begin{itemize}
 \item[(1)] $k:\set{X}\times\set{X}\to\mathbb{R}$ is a characteristic kernel,
 \item[(2)] scalar features are bounded, i.e., there exists $C>0$ such that $|\psi_v(x)|<C$ for all $v$ and $x$,
 \item[(3)] the number of random features $D_n$ defining the feature mapping $\Phi_n$ is non-decreasing and tends to infinity, and
 \item[(4)] any component of $\{\B{\lambda}_n\}$ is non-increasing and tends to $0$.
 \end{itemize}
Then, the sequence of smallest minimax risks $\{R_{\Phi_n}\}$ tends to the Bayes risk $R_{\text{Bayes}}$ with probability one for any underlying distribution $\up{p}^*$.

If, in addition to (1)-(4), we have that  
\begin{itemize}
\item[(5)] any component of $\{\B{\lambda}_n\sqrt{n/\log n}\}$ tends to $\infty$, and
\item[(6)] $D_n=\set{O}(n^k)$ for some $k>0$.
\end{itemize}
Then, the sequence of \acp{MRC}' probabilities of error $\{R(\up{h}_n)\}$ tends to the Bayes risk $R_{\text{Bayes}}$ with probability one for any underlying distribution $\up{p}^*$. 
\end{theorem}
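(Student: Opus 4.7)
The plan is to establish the two convergence statements in turn; the second will build on the first.

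\emph{Part 1.} Because $\up{p}^*\in\set{U}_\infty^{(n)}$, we have the trivial lower bound $R_{\Phi_n}\geq R_{\text{Bayes}}$ for every $n$. The substantive content is the matching upper bound, and my plan is to exhibit, for any $\varepsilon>0$, a parameter $\B{\mu}$ with $1-\B{\tau}_\infty^{\text{T}}\B{\mu}+\varphi(\B{\mu})\leq R_{\text{Bayes}}+\varepsilon$. First I would approximate the Bayes indicator $\mathbb{I}\{y=\arg\max_{y'}\up{p}^*(y'|x)\}$ by a bounded function $f:\set{X}\times\set{Y}\to[0,1]$ satisfying $\sum_y f(x,y)\leq 1$ and $1-\mathbb{E}_{\up{p}^*}\{f(x,y)\}\leq R_{\text{Bayes}}+\varepsilon/2$; this can be done by combining Lusin's theorem to obtain continuity with a suitable rescaling to preserve the sub-stochastic constraint. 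The key universal-approximation step is then to construct $\B{\mu}$ such that $\sup_{x,y}|\Phi_n(x,y)^{\text{T}}\B{\mu}-f(x,y)|\leq\varepsilon/(4|\set{Y}|)$. I would combine three ingredients: the density of $\set{H}_k$ implied by the characteristic property~(1), the boundedness of the features in~(2) giving uniform control, and a Rahimi--Recht-type uniform convergence of the random-feature expansion, which holds with probability one over the draw of $v_1,v_2,\ldots$. A short calculation using $\sum_y f(x,y)\leq 1$ then yields $\varphi(\B{\mu})\leq\varepsilon/2$ and $\B{\tau}_\infty^{\text{T}}\B{\mu}\geq 1-R_{\text{Bayes}}-\varepsilon$, giving the desired upper bound. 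Condition~(3) ensures $D_n$ is eventually large enough for this $\B{\mu}$ (padded with zeros) to be available, completing Part 1.

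\emph{Part 2.} The strategy is to combine Theorem~\ref{th_6} with the same approximating $\B{\mu}$ from Part 1. First I would show that, almost surely, $\up{p}^*\in\set{U}_n$ for all sufficiently large $n$: a Hoeffding union bound over the $|\set{Y}|(D_n+1)=\set{O}(n^k)$ components of $\Phi_n$ and Borel--Cantelli give $\|\B{\tau}_\infty-\B{\tau}_n\|_\infty=\set{O}(\sqrt{\log n/n})$ a.s., while condition~(5) forces every component of $\B{\lambda}_n$ to dominate this rate, so $|\B{\tau}_\infty-\B{\tau}_n|\preceq\B{\lambda}_n$ eventually a.s. Inequality~\eqref{bound1} then yields $R(\up{h}_n)\leq\overline{R}(\set{U}_n)$ for all large $n$. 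Fixing $\varepsilon>0$ and the $\B{\mu}$ from Part~1, which has nonzero entries only in the first $|\set{Y}|(D+1)$ coordinates for some finite $D$, I would evaluate the objective of $\mathscr{P}_{\B{\tau}_n,\B{\lambda}_n}$ at this $\B{\mu}$:
\[
\overline{R}(\set{U}_n)\leq 1-\B{\tau}_n^{\text{T}}\B{\mu}+\varphi(\B{\mu})+\B{\lambda}_n^{\text{T}}|\B{\mu}|.
\]
The strong law of large numbers applied to each of the finitely many nonzero coordinates gives $\B{\tau}_n^{\text{T}}\B{\mu}\to\B{\tau}_\infty^{\text{T}}\B{\mu}$ a.s., and condition~(4) applied coordinate-wise to those same finitely many entries of $|\B{\mu}|$ gives $\B{\lambda}_n^{\text{T}}|\B{\mu}|\to 0$; hence $\limsup_n R(\up{h}_n)\leq R_{\text{Bayes}}+\varepsilon$ almost surely. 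Letting $\varepsilon\downarrow 0$ and using the trivial bound $R(\up{h}_n)\geq R_{\text{Bayes}}$ finishes the argument.

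\emph{Main obstacle.} The main technical hurdle is the sup-norm approximation step in Part~1. The Bayes indicator is typically discontinuous and $\set{X}$ need not be compact, so the passage from the $L^2$-type density usually associated with characteristic kernels to a uniform bound on $\varphi(\B{\mu})$ requires both a preliminary smoothing of the target and some care with the tails of the random-feature distribution. A secondary technicality is that the almost-sure statements involve two independent sources of randomness---the training sample and the random-feature draws---so a Fubini-type argument is needed to glue the two into simultaneous almost-sure convergence.
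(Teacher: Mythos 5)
There is a genuine gap in Part 1, and it is exactly the step you flag as the ``main obstacle'': the uniform approximation $\sup_{x,y}|\Phi_n(x,y)^{\text{T}}\B{\mu}-f(x,y)|\leq\varepsilon/(4|\set{Y}|)$ cannot be extracted from hypothesis (1). A characteristic kernel guarantees only that the mean embedding $\up{p}\mapsto\int k(x,\cdot)\,\text{d}\up{p}$ is injective, i.e., that the maximum mean discrepancy is a metric on probability measures; it does not give density of $\set{H}_k$ (or of $\set{H}_k$ plus constants) in the space of continuous or bounded functions under the supremum norm. That stronger property is what ``universal'' means, and the paper deliberately assumes only the weaker characteristic condition (it remarks on this distinction right after the theorem). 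Since $\varphi(\B{\mu})$ in \eqref{varphi} takes a supremum over all of $\set{X}$ --- a general, possibly noncompact Borel set --- your bound $\varphi(\B{\mu})\leq\varepsilon/2$ genuinely requires sup-norm control, and Lusin smoothing plus a Rahimi--Recht uniform-convergence argument (itself stated on compacta, with constants depending on the diameter) cannot supply it from (1) and (2) alone. So the constructive route to the upper bound $R_{\Phi_n}\leq R_{\text{Bayes}}+\varepsilon$ does not go through as written.

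The paper's proof avoids approximating the Bayes rule altogether. It notes that the minimax risks $R(\set{V}^n)$ for the idealized uncertainty sets (exact means $\B{\tau}^n_\infty$, radii $\B{\lambda}_n$) form a non-increasing sequence bounded below by $R_{\text{Bayes}}$, because conditions (3) and (4) make the sets nested, and then argues by contradiction: if the limit exceeded $R_{\text{Bayes}}$ there would be some $\up{p}'\neq\up{p}^*$ lying in every $\set{V}^n$, forcing $|\mathbb{E}_{\up{p}^*}\{\Phi_n\}-\mathbb{E}_{\up{p}'}\{\Phi_n\}|\preceq 2\B{\lambda}_n\to\V{0}$; the law of large numbers over the feature draws gives $\Psi_n(x)^{\text{T}}\Psi_n(x')/D_n\to k(x,x')$, so the discrepancy between the class-conditionals of $\up{p}^*$ and $\up{p}'$ in the RKHS norm vanishes, contradicting injectivity of the embedding. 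This uses precisely what ``characteristic'' provides and nothing more. Your Part 2 is essentially the paper's second step (Hoeffding plus a union bound over the $\set{O}(n^k)$ components, condition (5) to ensure $\up{p}^*\in\set{U}_n$ eventually, Borel--Cantelli); the paper must additionally bound $\|\bar{\B{\mu}}_n\|_1\leq 1/\underline{\lambda}_n$ because its comparison parameter varies with $n$, whereas your fixed finite-support $\B{\mu}$ would avoid that --- but that $\B{\mu}$ is the object Part 1 fails to produce, so Part 2 inherits the gap.
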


\begin{proof}
See Appendix~\ref{apd:proof_th_7}.
\end{proof}

The conditions under which \acp{MRC} are strongly universally consistent are analogous to those corresponding to conventional \ac{ERM} methods based on \ac{SRM} and \acp{RKHS} \citep{Ste:05,Zha:04}, e.g., regularization parameters that tend to zero not very quickly and broad \acp{RKHS}. The universal consistency in \ac{ERM} techniques is achieved using \acp{RKHS} given by universal kernels while the theorem above uses characteristic kernels, which is in general a slightly weaker condition \citep{MuaFukSriSch:17}. In addition, the result above provides \acp{MRC}' universal consistency for binary and multiclass cases and does not rely on surrogate losses. 

Notice that for \ac{ERM} methods based on \ac{SRM}, the decrease of the regularization parameter for increasing number of samples corresponds to consider broader families of rules (balls in the \ac{RKHS} with increased radius). On the other hand, for the proposed approach, such decrease corresponds to consider reduced uncertainty sets. This fact illustrates how the bias-complexity trade-off addressed in the \ac{SRM} approach by controlling the generality of the family of classification rules is analogous to controlling the generality of the uncertainty set of probability distributions in the proposed approach (see also discussion in Section~\ref{sec-2.1} above). In paticular, 
the conditions (5) and (6) in Theorem~8 result in uncertainty sets that shrink as we get more samples but contain the underlying distribution with high probability.

\section{Efficient learning of \acp{MRC}}\label{sec-5}
The learning stage for \acp{MRC} consists on solving optimization problem \eqref{opt-prob} that obtains the parameters for $\up{h}^{\sset{U}}$ and $\up{h}_{\text{d}}^{\sset{U}}$  as well as the minimax risk $\overline{R}(\set{U})$. This stage can be complemented by solving optimization problems \eqref{opt-lower} and \eqref{opt-upper} that can provide tight performance guarantees.  In this section we first show that such optimization problems can be simplified by using a reduced instances' set given by instances' samples. We then propose efficient subgradient methods that take advantage of the specific structure of the above mentioned optimization problems.
\subsection{Reduced instances' set}

Solving the optimization problems that learn \acp{MRC} and obtain their performance bounds can be inefficient in cases where the feature mapping range $\{\Phi(x,y): x\in\set{X}, y\in\set{Y}\}$ has a large cardinality, since the evaluation of objective functions in \eqref{opt-prob}, \eqref{opt-lower}  and \eqref{opt-upper} may require to search over such a range. The next result shows that this difficulty can be avoided by using instances' samples instead of the whole set $\set{X}$, e.g., using the instances obtained at training.


\begin{theorem}\label{th_8}
Let $\set{X}_s=\{x_1,x_2,\ldots,x_s\}$ be $s$ i.i.d. instances from the underlying distribution $\up{p}^*(x)$, $\up{h}_s$, $\overline{R}_s(\set{U})$, and $\underline{R}_s(\set{U})$ be the \ac{MRC} and bounds obtained by solving \eqref{opt-prob} and \eqref{opt-lower} using $\set{X}_s$ instead of $\set{X}$. If $\overline{R}_s(\set{U})$ is finite and $\up{p}^*\in\set{U}$ with probability at least $1-\delta$, we have that
$$\underline{R}_s(\set{U})-\varepsilon_s(2C\|\B{\mu}_l\|_1+1)\leq R(\up{h}_s)\leq \overline{R}_s(\set{U})+\varepsilon_s(2C\|\B{\mu}_u\|_1+1)\leq \overline{R}(\set{U})+\varepsilon_s(2C\|\B{\mu}_u\|_1+1)$$
with probability at least $1-2\delta$, where $\B{\mu}_l$ and $\B{\mu}_u$ are the solutions of \eqref{opt-lower} and \eqref{opt-upper}, respectively, using $\set{X}_s$ instead of $\set{X}$, $\|\Phi(x,y)\|_\infty\leq C$, $\forall x\in\set{X},y\in\set{Y}$, and $\varepsilon_s$ is given by
 $$\varepsilon_s=6|\set{Y}|\sqrt{\frac{4+|\set{Y}|(m+1)\log s +\log|\set{Y}|/\delta}{s}}.$$
\end{theorem}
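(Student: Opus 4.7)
The plan is to combine the MRC duality structure of Theorems~\ref{th1} and \ref{th-bounds} with a uniform concentration argument. The statement decomposes into three inequalities. The rightmost one, $\overline{R}_s(\set{U})\leq\overline{R}(\set{U})$, is immediate: since $\set{X}_s\subseteq\set{X}$, the supremum defining $\varphi_s$ is taken over a subset of that defining $\varphi$, so $\varphi_s(\B{\mu})\leq\varphi(\B{\mu})$ for every $\B{\mu}$ and the objective of \eqref{opt-prob} with $\set{X}_s$ is pointwise no larger than with $\set{X}$. The middle and leftmost inequalities require relating the true error probability of $\up{h}_s$ to optimization problems restricted to $\set{X}_s$, and this is where the concentration enters.

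For the upper bound on $R(\up{h}_s)$, I would start from the pointwise identity
$$1-\up{h}_s(y|x)\leq 1-\Phi(x,y)^{\text{T}}\B{\mu}+\max_{y'\in\set{Y}}\{\Phi(x,y')^{\text{T}}\B{\mu}-\up{h}_s(y'|x)\},\qquad \forall\, (x,y),\B{\mu},$$
take expectation under $\up{p}^*$, and use $\up{p}^*\in\set{U}$ (which holds with probability at least $1-\delta$) together with $\mathbb{E}_{\up{p}^*}\{\Phi\}^{\text{T}}\B{\mu}\geq \B{\tau}^{\text{T}}\B{\mu}-\B{\lambda}^{\text{T}}|\B{\mu}|$ to obtain
$$R(\up{h}_s)\leq 1-\B{\tau}^{\text{T}}\B{\mu}+\B{\lambda}^{\text{T}}|\B{\mu}|+\mathbb{E}_{\up{p}^*_x}\big[g_{\B{\mu}}(x)\big],\qquad g_{\B{\mu}}(x)\coloneqq\max_{y'}\{\Phi(x,y')^{\text{T}}\B{\mu}-\up{h}_s(y'|x)\}.$$
The key step is then a uniform-in-$\B{\mu}$ concentration inequality of the form
$$\mathbb{E}_{\up{p}^*_x}[g_{\B{\mu}}(x)]\leq \tfrac{1}{s}\sum_{i=1}^s g_{\B{\mu}}(x_i)+\varepsilon_s(2C\|\B{\mu}\|_1+1),\qquad \forall\, \B{\mu}\in\mathbb{R}^m,$$
holding with probability at least $1-\delta$. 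Combined with the trivial bound $\tfrac{1}{s}\sum_i g_{\B{\mu}}(x_i)\leq \max_{x\in\set{X}_s,y'}\{\Phi(x,y')^{\text{T}}\B{\mu}-\up{h}_s(y'|x)\}$ and the choice $\B{\mu}=\B{\mu}_u$, this gives $R(\up{h}_s)\leq \overline{R}_s(\set{U},\up{h}_s)+\varepsilon_s(2C\|\B{\mu}_u\|_1+1)$. The reduced-problem analog of Corollary~\ref{cor-upper} yields $\overline{R}_s(\set{U},\up{h}_s)=\overline{R}_s(\set{U})$, and a union bound over the two probability-$(1-\delta)$ events produces the stated $1-2\delta$ guarantee. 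The lower bound follows analogously from the dual pointwise inequality and optimization problem \eqref{opt-lower}.

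The main obstacle is establishing the uniform concentration claim with the precise form of $\varepsilon_s$. Two features make it delicate: the classifier $\up{h}_s$ is itself random through its dependence on $\set{X}_s$, and $\B{\mu}$ ranges over the unbounded set $\mathbb{R}^m$ while the functions $g_{\B{\mu}}$ have range growing linearly in $\|\B{\mu}\|_1$. I would handle the former by using the parametric representation \eqref{MRC} to embed $\up{h}_s$ into a finite-dimensional family indexed by $\B{\mu}_s\in\mathbb{R}^m$ together with the scalar $\varphi_s(\B{\mu}_s)$, so that $g_{\B{\mu}}(x)$ depends on $x$ only through the $|\set{Y}|$-dimensional vectors $\Phi(x,\pun)^{\text{T}}\B{\mu}$ and $\Phi(x,\pun)^{\text{T}}\B{\mu}_s$. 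A VC-subgraph bound applied to the $|\set{Y}|$ candidate components of the inner maximum, rescaled by the range $2C\|\B{\mu}\|_1+1$, then produces the stated $\varepsilon_s$; the factor $|\set{Y}|(m+1)$ inside the square root reflects the effective dimension of the class of affine functions involved, and the outer $6|\set{Y}|$ absorbs a Rademacher-type constant together with the union bound over labels.
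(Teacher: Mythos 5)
Your skeleton is sound: the chain ``pointwise majorization by $\max_{y'}\{\Phi(x,y')^{\text{T}}\B{\mu}-\up{h}_s(y'|x)\}$, take expectations under $\up{p}^*\in\set{U}$, compare the population expectation of $g_{\B{\mu}}$ to its maximum over $\set{X}_s$, evaluate at $\B{\mu}_u$'' is a valid reduction, and the observation that $\overline{R}_s(\set{U})\leq\overline{R}(\set{U})$ because $\varphi_s\leq\varphi$ pointwise is correct. The gap is the step you yourself flag as the main obstacle: the uniform-in-$\B{\mu}$ concentration $\mathbb{E}_{\up{p}^*_x}[g_{\B{\mu}}]\leq\tfrac{1}{s}\sum_i g_{\B{\mu}}(x_i)+\varepsilon_s(2C\|\B{\mu}\|_1+1)$ is a real-valued empirical-process bound over an unbounded parameter set, and the sketch you give for it does not go through as stated. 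First, after substituting the parametric form \eqref{MRC}, $\up{h}_s(y'|x)=(\Phi(x,y')^{\text{T}}\B{\mu}_s-\varphi_s)_+/c_x$ involves the normalizer $c_x=\sum_y(\Phi(x,y)^{\text{T}}\B{\mu}_s-\varphi_s)_+$ (and the $1/|\set{Y}|$ branch when $c_x=0$), so $g_{\B{\mu}}(x)$ does \emph{not} depend on $x$ only through affine functionals of $\Phi(x,\pun)$; the subgraphs of $g_{\B{\mu}}$ are not finite unions of half-spaces in a lifted space and the ``effective dimension $|\set{Y}|(m+1)$'' count is not justified. Second, even setting the normalizer aside, controlling a $[0,1]$-rescaled real-valued class requires VC-subgraph/covering arguments whose constants and logarithmic factors (a union of $|\set{Y}|$ half-spaces already has VC dimension of order $|\set{Y}|^2(m+1)\log|\set{Y}|$) will not reproduce the specific $\varepsilon_s=6|\set{Y}|\sqrt{(4+|\set{Y}|(m+1)\log s+\log(|\set{Y}|/\delta))/s}$ claimed in the theorem.

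The paper's proof avoids real-valued concentration entirely, and this is where the two summands $2C\varepsilon_s\|\B{\mu}_u\|_1$ and $\varepsilon_s$ actually come from. It defines $\set{X}_u=\{x:\Phi(x,y)^{\text{T}}\B{\mu}_u-\up{h}_s(y|x)\leq\max_{x'\in\set{X}_s}(\Phi(x',y)^{\text{T}}\B{\mu}_u-\up{h}_s(y|x')),\ \forall y\}$ (and $\set{X}_l$ analogously), which is an intersection of $|\set{Y}|$ half-spaces in the lifted coordinates $\V{u}(x)=[\Phi(x,1)^{\text{T}},\up{h}_s(1|x),\ldots]^{\text{T}}\in\mathbb{R}^{|\set{Y}|(m+1)}$, where $\up{h}_s(y|x)$ is treated as an extra coordinate rather than expanded through \eqref{MRC} --- this is what keeps the geometry linear. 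Since every sample point lies in $\set{X}_u$ by construction, the VC inequality for half-spaces (shatter coefficient $\leq 4s^{(m+1)|\set{Y}|}$) gives $\up{p}^*(\set{X}_u)\geq 1-\varepsilon_s$; this is a concentration statement about indicators only. The risk is then split as $R(\up{h}_s)\leq R|_{\set{X}_u}(\up{h}_s)+\varepsilon_s$ using $0\leq\ell\leq 1$ off $\set{X}_u$, and $R|_{\set{X}_u}(\up{h}_s)$ is bounded by applying Theorem~\ref{th-bounds} to the conditional distribution $\up{p}^*|_{\set{X}_u}$, whose feature expectation moves by at most $2C\varepsilon_s$ per coordinate, yielding $\overline{R}_s(\set{U})+2C\varepsilon_s\|\B{\mu}_u\|_1$ because the supremum over $\set{X}_u$ equals the maximum over $\set{X}_s$ by definition of $\set{X}_u$. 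If you want to salvage your route, the cleanest fix is to replace your real-valued uniform concentration with exactly this set-mass argument: you only ever need $\mathbb{E}[g_{\B{\mu}_u}]$ controlled by $\max_{\set{X}_s}g_{\B{\mu}_u}$, and on the high-mass event that control is deterministic.
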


\begin{proof}
See Appendix~\ref{apd:proof_th_8}.
\end{proof}

The above result shows that a sufficiently large set of instances' samples $\set{X}_s$ can be safely used instead of the whole set $\set{X}$ in optimization problems \eqref{opt-prob} and \eqref{opt-lower} since the subsequent approximation error is $\set{O}(\sqrt{\log(s)/s})$. An analogous result can be obtained for deterministic $\acp{MRC}$ and \eqref{opt-upper} using the same arguments as in Appendix~\ref{apd:proof_th_8} for the above result. 

The condition $|\overline{R}_s(\set{U})|<\infty$ can be easily satisfied in practice. Such condition is equivalent to 
$$\set{U}_s=\Big\{\up{p}\in\Delta(\set{X}_s\times\set{Y}):\  |\mathbb{E}_{\up{p}}\{\Phi\} -\B{\tau}|\preceq \B{\lambda}\Big\}\neq\emptyset$$
that is directly achieved if $\B{\tau}$ is obtained as the sample mean of samples with instances $\set{X}_s$. In other cases, it can be ensured that such uncertainty set is not empty by increasing the confidence vector and shifting the mean vector. In particular, if $\B{\lambda}_1^*$ and $\B{\lambda}_2^*$ are solutions of the linear optimization problem
\begin{align*}
\min_{\up{p},\B{\lambda}_1,\B{\lambda}_2}\  &\V{1}^{\text{T}}(\B{\lambda}_1+\B{\lambda}_2)\\
\mbox{s.t. }
&\B{\tau}-\B{\lambda}_1\preceq\sum_{x\in\set{X}_s,y\in\set{Y}}\up{p}(x,y)\Phi(x,y)\preceq\B{\tau}+\B{\lambda}_2\\
&\B{\lambda}_1\succeq\B{\lambda}, \B{\lambda}_2\succeq\B{\lambda},\up{p}\in\Delta(\set{X}_s\times\set{Y})\\
\end{align*}
with $2m+s|\set{Y}|$ variables and $4m+s|\set{Y}|+1$ constraints. Then, 
taking $\widetilde{\B{\tau}}=\B{\tau}+\frac{\B{\lambda}^*_2-\B{\lambda}^*_1}{2}$ and $\widetilde{\B{\lambda}}=\frac{\B{\lambda}^*_1+\B{\lambda}^*_2}{2}\succeq \B{\lambda}$ we have that 
$$\set{U}_s\subseteq\widetilde{\set{U}}_s=\Big\{\up{p}\in\Delta(\set{X}_s\times\set{Y}):\  |\mathbb{E}_{\up{p}}\{\Phi\} -\widetilde{\B{\tau}}|\preceq \widetilde{\B{\lambda}}\Big\}\neq\emptyset.$$

The numerical results of next Section~\ref{sec-numerical} show that the optimization problems for \ac{MRC} learning can be accurately solved in practice using quite reduced sets of instances. In particular, such results show that the set of instances $\set{X}$ in optimization problems \eqref{opt-prob}, \eqref{opt-lower}, and \eqref{opt-upper} can be taken as the set $\set{X}_n=\{x_1,x_2,\ldots,x_n\}$ of instances obtained at training.

\subsection{Efficient optimization}
\label{ssc:ep}
This section presents efficient optimization techniques for \ac{MRC} learning that comprises to solve problems \eqref{opt-prob}, \eqref{opt-lower} and \eqref{opt-upper}.
These three problems can be written as: 
\begin{align}
\label{opt-prob-1}
\begin{array}{ll} \underset{\B{\mu}}{\min}&f(\B{\mu}):=\V{a}^{\text{T}}\B{\mu}+\B{\lambda}^{\text{T}}|\B{\mu}|+\max\{\V{F} \B{\mu}+\V{b}\}
\end{array}
\end{align}
with $\V{a}\in \mathbb{R}^{m}$, $\V{b}\in\mathbb{R}^{p}$ and $\V{F}\in \mathbb{R}^{p\times m}$.
The size of vector $\V{a}$ and the number of columns of matrix $\V{F}$, $m$, equals the number of components of the feature mapping $\Phi:\set{X}\times\set{Y}\to\mathbb{R}^m$, while the size of vector $\V{b}$ and the number of rows of matrix $\V{F}$, $p$, is given by the number of instances $\set{X}_s$ used to evaluate $\varphi$,  e.g., the number of instances at training. Specifically, $p$ equals
 $s(2^{|\set{Y}|}-1)$ in problem \eqref{opt-prob}, and $s{|\set{Y}}|$ in problems \eqref{opt-lower} and \eqref{opt-upper}.

 The optimization problem  \eqref{opt-prob-1} can be reformulated as the \ac{LP} 
  \begin{align}
\label{opt-prob-3}
  \begin{array}{ll} \underset{\B{\mu}_1,\B{\mu}_2}{\min}&\B{a}^{\text{T}}(\B{\mu}_1-\B{\mu}_2)
 +\B{\lambda}^{\text{T}}(\B{\mu}_1+\B{\mu}_2)+\nu\\
    \mbox{s.t.}&\V{F} \B{\mu}_1-\V{F} \B{\mu}_2+\V{b}\preceq \nu\V{1}\\
&\B{\mu}_1,\B{\mu}_2\succeq\V{0}
\end{array}
\end{align}
by introducing new variables $\nu\in\mathbb{R}$ and
$\B{\mu}_1,\B{\mu}_2\in\mathbb{R}^m$ with $\B{\mu}=\B{\mu}_1-\B{\mu}_2$. 
Such an \ac{LP} has $p+2m$ constraints and $2m+1$ variables, and can be solved with high accuracy by off-the-shelf solvers at the expenses of a high computational time in cases where $p$ or $m$ is large.

Problem \eqref{opt-prob-1} belongs to the class of nondifferentiable convex optimization problems with subgradients that are uniformly bounded.
The \acp{SM} are often an attractive option to solve this class of problems since they can efficiently provide solutions with adequate accuracy \citep{Ber:15}.
A subgradient of the objective function $f$ in \eqref{opt-prob-1} at point $\B{\mu}$ can be directly obtained from the maximum of vector $\V{v}=\V{F} \B{\mu}+\V{b}$. Specifically, if $i(\B{\mu})\in\arg\max\V{v}$ 
we have that a subgradient of $f$ at $\B{\mu}$ is given by
\begin{equation}
  \label{eq:subgrad}
  g(\B{\mu})=\V{a}+ \B{\lambda}\odot\text{sign}(\B{\mu})+\text{col}_{i(\B{\mu})}(\V{F}^{\text{T}})\end{equation}
where $\odot$ denotes the Hadamard product, $\text{sign}(\B{\mu})$ denotes the vector given by the signs of the components of $\B{\mu}$, and $\text{col}_{i}(\cdot)$ denotes the $i$-th column of the argument. 

Often, the main limitation of \acp{SM} is the large number of iterations required. The \acp{ASM} \citep{NesShi:15,TaoZhiGaoTao:20} have been developed to reduce the number of iterations by using the Nesterov's extrapolation strategy \citep{Nes:83}. In particular, Algorithm~\ref{alg.Tao} describes the application of the \ac{ASM} proposed in \cite{TaoZhiGaoTao:20} to problem \eqref{opt-prob-1} for \ac{MRC} learning.\begin{algorithm}
\setstretch{1}
\caption{\label{alg.Tao}-- \ac{ASM} for \ac{MRC} learning}	
 \small
  \begin{tabular}{ll}
\hspace{-0.1cm}\textbf{Output:}&\hspace{-0.3cm}approximate solution $\B{\mu}^\ast$ 
 \end{tabular}
\begin{algorithmic}[1] 
\setstretch{1.2}
\STATE Initialize $\B{\mu}_1$ and take $c_1=\theta_1=1$,  $\eta_1=0$
\STATE $\V{y}_1\gets\B{\mu}_1$, $\V{v}_1\gets\V{F}\B{\mu}_1+\V{b}$, 
$i_{1} \gets \arg\max\V{v}_{1} $, $\B{\mu}^\ast\gets \B{\mu}_1$, $f^\ast\gets \V{a}^{\text{T}}\B{\mu}_1+\B{\lambda}^{\text{T}}|\B{\mu}_1|+\V{v}_1^{(i_1)}$
\FOR{$k=1,2,\ldots$}
\STATE $\V{g}_k\gets\V{a}+\B{\lambda}\odot\text{sign}(\B{\mu}_k)+\text{col}_{i_k}(\V{F}^{\text{T}})$
\STATE $\V{y}_{k+1} \gets\B{\mu}_k -c_k\V{g}_k$,\   \   $\B{\mu}_{k+1} \gets (1+ \eta_{k})\V{y}_{k+1}-\eta_{k} \V{y}_{k}$
\STATE $\V{v}_{k+1}\gets\V{F}\B{\mu}_{k+1}+\V{b}$, \   \   
$i_{k+1} \gets \arg\max\V{v}_{k+1} $
\STATE$c_{k+1}\gets (k+1)^{-3/2} $,  $\ \theta_{k+1}\gets \frac{2}{k+1},\ $ $\ \eta_{k+1} \gets \theta_{k+1}\left(\frac{1}{\theta_{k}}-1\right)$
\STATE $f_{k+1}\gets   \V{a}^{\text{T}}\B{\mu}_{k+1}+\B{\lambda}^{\text{T}}|\B{\mu}_{k+1}|+\V{v}_{k+1}^{(i_{k+1})}$
\IF{$f_{k+1}<f^{\ast}$}
\STATE $f^\ast\gets f_{k+1}$, $\B{\mu}^\ast\gets \B{\mu}_{k+1}$
\ENDIF
                \ENDFOR
\end{algorithmic}
\end{algorithm}

The time complexity per iteration of the \acp{SM} described above can be high in cases where $p$ or $m$ is large. This computational cost is due to the fact that the subgradient $g(\B{\mu}_k)$ is computed by evaluating $\V{v}_k=\V{F}\B{\mu}_k+\V{b}$ that requires $pm$ multiplications. Such computation can be carried out in a significantly more efficient manner by exploiting the specific structure of the subgradient. The vector $\V{v}_k$ can be efficiently computed from $\V{v}_{k-1}$ because we have that
$$\V{F}g(\B{\mu}_{k-1})=\V{F}\V{a}+\V{F}\text{diag}(\B{\lambda})\text{sign}(\B{\mu}_{k-1})+\text{col}_{i(\B{\mu}_{k-1})}(\V{F}\V{F}^{\text{T}})$$
and $\V{F}\V{a}$, $\V{F}\V{F}^{\text{T}}$, and $\V{F}\text{diag}(\B{\lambda})$ can be computed only once. In addition, the sequence of vectors \mbox{$\{\V{d}_k=\V{F}\text{diag}(\B{\lambda})\text{sign}(\B{\mu}_k)\}$} can be obtained as
\begin{align}\label{d-vector}\V{d}_{k}=\V{d}_{k-1}+\V{F}\text{diag}(\B{\lambda})\B{\Delta}_k\end{align}
where the vector $\B{\Delta}_k=\text{sign}(\B{\mu}_{k})-\text{sign}(\B{\mu}_{k-1})$ 
takes values $\pm 2$ in the components where the parameters $\B{\mu}_{k-1}$ and $\B{\mu}_{k}$ change signs, takes value $0$ in the components where they have the same sign, and would take values $\pm 1$ only if some of the components of such parameters are exactly zero. Therefore, the computation vector $\V{d}_{k}$ in \eqref{d-vector} can be carried out very efficiently in practice by adding or subtracting the columns of $2\V{F}\text{diag}(\B{\lambda})$ corresponding to the components where the iterates change sign.








Algorithm~\ref{alg.Tao-1} shows the efficient implementation of \ac{ASM} that exploits the specific structure of the subgradient as described above. The result below describes how such implementation can result in significant computational savings.

\begin{algorithm}
\setstretch{1}
\caption{\label{alg.Tao-1}--  Efficient \ac{ASM} for \ac{MRC} learning}	
 \small
 \begin{tabular}{ll}

\hspace{-0.1cm}\textbf{Output:}&\hspace{-0.3cm}approximate solution $\B{\mu}^\ast$ 
\end{tabular}
\begin{algorithmic}[1] 
\setstretch{1.2}
\STATE Initialize $\B{\mu}_1$ and take $c_1=\theta_1=1$,  $\eta_1=0$, $\B{\alpha}=\V{F}\V{a}$, 
$\V{G}=\V{F} \V{F}^{\text{T}}$, $\V{H}=2\V{F}\text{diag}(\B{\lambda})$
\STATE $\V{y}_1\gets\B{\mu}_1$, $\V{v}_1 \gets\V{F}\B{\mu}_1+\B{b}$, $\V{w}_1 \gets\V{v}_1$, 
$\V{s}_1 \gets\text{sign}(\B{\mu}_1)$,  $\V{d}_1 \gets(1/2)\V{H}\V{s}_1$
\STATE $i_{1} \gets \arg\max\V{v}_{1} $, $\B{\mu}^\ast\gets \B{\mu}_1$, $f^\ast\gets \V{a}^{\text{T}}\B{\mu}_1+\B{\lambda}^{\text{T}}|\B{\mu}_1|+\V{v}_1^{(i_1)}$
\FOR{$k=1,2,\ldots$}
\STATE $\V{g}_k\gets\V{a}+\B{\lambda}\odot\V{s}_k+\text{col}_{i_k}(\V{F}^{\text{T}})$
\STATE $\V{y}_{k+1} \gets\B{\mu}_k-c_k \V{g}_k$,\   $\B{\mu}_{k+1} \gets (1+ \eta_{k})\V{y}_{k+1}-\eta_{k} \V{y}_{k}$
\STATE $\V{u}_{k}\gets  \B{\alpha}+\V{d}_{k}+\text{col}_{i_{k}}(\V{G})$
\STATE $\V{w}_{k+1} \gets\V{v}_k-c_k\V{u}_k$,\   $\V{v}_{k+1} \gets (1+ \eta_{k})\V{w}_{k+1}-\eta_{k} \V{w}_{k}$,\  $i_{k+1} \gets \arg\max\V{v}_{k+1}$
\STATE $ \V{s}_{k+1} \gets  \text{sign}(\B{\mu}_{k+1})$,\  $\B{\Delta}_{k} \gets \V{s}_{k+1}-\V{s}_{k}$,\  $\V{d}_{k+1}\gets \V{d}_{k}$
\FOR{$i=1,2,\ldots m$}
\IF{$\Delta_{k}^{(i)}= 2$}
\STATE $\V{d}_{k+1}\gets \V{d}_{k+1}+\text{col}_i(\V{H}) $
\ELSIF{$\Delta_{k}^{(i)}=-2$}
\STATE $\V{d}_{k+1}\gets \V{d}_{k+1}-\text{col}_i(\V{H}) $
\ELSIF{$\Delta_{k}^{(i)}\in\{-1,1\}$}
\STATE $\V{d}_{k+1}\gets \V{d}_{k+1}+(1/2)\text{sign}(\Delta_{k}^{(i)})\text{col}_i(\V{H}) $
\ENDIF
\ENDFOR
\STATE $c_{k+1}\gets (k+1)^{-3/2} $,  $\ \theta_{k+1}\gets \frac{2}{k+1},\ $ $\ \eta_{k+1} \gets \theta_{k+1}\left(\frac{1}{\theta_{k}}-1\right)$
\STATE $f_{k+1}\gets   \V{a}^{\text{T}}\B{\mu}_{k+1}+\B{\lambda}^{\text{T}}|\B{\mu}_{k+1}|+\V{v}_{k+1}^{(i_{k+1})}$
\IF{$f_{k+1}<f^{\ast}$}
\STATE $f^\ast\gets f_{k+1}$, $\B{\mu}^\ast\gets \B{\mu}_{k+1}$
\ENDIF
\ENDFOR
\end{algorithmic}
\end{algorithm}

\begin{theorem}\label{thm:comp.Tao-1}
The sequences $\{\B{\mu}_k\}$ generated by Algorithms \ref{alg.Tao} and \ref{alg.Tao-1} are identical, while the computational complexity of Algorithm~\ref{alg.Tao-1} is significantly smaller at iterations $k$ where $\text{sign}(\B{\mu}_{k})$ differs from $\text{sign}(\B{\mu}_{k-1})$ in few components. Specifically, if $\gamma_{K}$ is the sparsity coefficient given by the average fraction of non-zero components of $\B{\Delta}_k$ for $k=1,2,\ldots,K$; then, the computational complexity of Algorithm~\ref{alg.Tao-1} after $K$ iterations is $\set{O}( K pm\gamma_{K})$, while that of Algorithm~\ref{alg.Tao} is $\set{O}( K pm)$. 
 \end{theorem}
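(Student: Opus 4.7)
The plan is to split the statement into two parts: (i) showing that the iterates $\{\B{\mu}_k\}$ produced by both algorithms coincide, and (ii) bounding the per-iteration cost.

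For part (i) I would proceed by induction on $k$ and show that in Algorithm~\ref{alg.Tao-1} the auxiliary sequences satisfy the invariants
\[
\V{v}_k=\V{F}\B{\mu}_k+\V{b},\quad \V{w}_k=\V{F}\V{y}_k+\V{b},\quad \V{s}_k=\text{sign}(\B{\mu}_k),\quad \V{d}_k=\V{F}\text{diag}(\B{\lambda})\V{s}_k.
\]
The base case follows immediately from the initialization in lines 1--3. For the inductive step, the crucial identity is
\[
\V{u}_k=\B{\alpha}+\V{d}_k+\text{col}_{i_k}(\V{G})=\V{F}\V{a}+\V{F}\text{diag}(\B{\lambda})\V{s}_k+\V{F}\V{F}^{\text{T}}\V{e}_{i_k}=\V{F}\V{g}_k,
\]
which shows that the recursion $\V{w}_{k+1}=\V{v}_k-c_k\V{u}_k=\V{F}(\B{\mu}_k-c_k\V{g}_k)+\V{b}=\V{F}\V{y}_{k+1}+\V{b}$ and then $\V{v}_{k+1}=(1+\eta_k)\V{w}_{k+1}-\eta_k\V{w}_k=\V{F}\B{\mu}_{k+1}+\V{b}$ preserves the invariants. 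The invariant $\V{d}_{k+1}=\V{F}\text{diag}(\B{\lambda})\V{s}_{k+1}$ follows from the telescoping update $\V{d}_{k+1}=\V{d}_k+\V{F}\text{diag}(\B{\lambda})\B{\Delta}_k$, which is exactly what the inner loop in lines 10--18 implements by adding the appropriately scaled column of $\V{H}=2\V{F}\text{diag}(\B{\lambda})$ for each component in which $\V{s}_{k+1}$ differs from $\V{s}_k$. Since $\V{g}_k$, $\V{y}_{k+1}$, and $\B{\mu}_{k+1}$ are computed by the same formulas in both algorithms and $i_k=\arg\max\V{v}_k$ is the same by the invariant on $\V{v}_k$, the two iterate sequences coincide.

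For part (ii) I would simply tally the dominant per-iteration costs. In Algorithm~\ref{alg.Tao}, line 6 evaluates $\V{v}_{k+1}=\V{F}\B{\mu}_{k+1}+\V{b}$ directly, costing $\Theta(pm)$ per iteration and therefore $\Theta(Kpm)$ overall. In Algorithm~\ref{alg.Tao-1}, the vector operations on vectors of size $m$ or $p$ each cost $O(m)$ or $O(p)$, and accessing a single column of $\V{G}$ or $\V{H}$ in lines 7 and 11--16 costs $O(p)$. The only loop that can be expensive is the update of $\V{d}_{k+1}$: it performs one column addition of $\V{H}\in\mathbb{R}^{p\times m}$ per nonzero component of $\B{\Delta}_k$, at cost $O(p\,\|\B{\Delta}_k\|_0)$. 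Summing over $k=1,\ldots,K$ and using $\sum_{k=1}^{K}\|\B{\Delta}_k\|_0=Km\gamma_K$ by definition of $\gamma_K$, the total iteration cost is $O(Kp+pm K\gamma_K)=O(Kpm\gamma_K)$, as claimed.

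The only delicate point is the inductive step for the $\V{d}_k$ invariant, since it must account for the three cases $\Delta_k^{(i)}\in\{\pm 2,\pm 1\}$ (the $\pm 1$ cases arising when a previous iterate had an exactly zero component, as noted in the discussion around \eqref{d-vector}); however this is a routine verification once the cases are enumerated. The rest of the argument is bookkeeping and does not rely on properties beyond the explicit update rules of the two algorithms.
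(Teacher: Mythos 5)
Your proposal is correct and follows essentially the same route as the paper's proof: an induction establishing the invariants $\V{v}_k=\V{F}\B{\mu}_k+\V{b}$, $\V{w}_k=\V{F}\V{y}_k+\V{b}$, $\V{d}_k=\V{F}\text{diag}(\B{\lambda})\text{sign}(\B{\mu}_k)$ via the key identity $\V{u}_k=\V{F}\V{g}_k$, followed by a per-iteration cost tally in which the $\V{d}_{k+1}$ update contributes $\set{O}(p\,\|\B{\Delta}_k\|_0)$ and sums to $\set{O}(Kpm\gamma_K)$. If anything, your write-up is slightly more explicit than the paper's (which states the matrix identity and then asserts the induction is straightforward), and your handling of the $\pm 1$ cases and the summation over $k$ matches the intended argument.
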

\begin{proof}
  See Appendix~\ref{apd:proof_thm:comp.Tao-1}.
\end{proof}

The computation in step 6 of Algorithm~\ref{alg.Tao} that has cost $\set{O}(pm)$ is effectively replaced by steps 7-18 in 
Algorithm~\ref{alg.Tao-1} that have cost $\set{O}\big(pm\gamma(\B{\Delta}_k)\big)$ where $\gamma(\B{\Delta}_k)$ denotes the fraction of  non-zero components of $\B{\Delta}_k$. As the algorithm progresses and the subgradient steps decrease to zero, it is  expected to have few changes in the signs of $\B{\mu}_k$ and therefore to have a highly sparse vector $\B{\Delta}_k$. The usage of an \ac{ASM} also contributes to a reduced number of sign changes since such method provides more stable iterations than basic subgradient methods. The method in \cite{Nes:14} also exploits the sparsity in subgradient methods for piecewise linear functions but it considers problems given only by a term $\max\{ \V{F}\B{\mu}+\V{b}\}$ and with a sparse matrix $\V{F}$. 

In order to further decrease the number of iterations, we also use restarts similarly to other methods for non-smooth optimization \citep{YanQia:18}. Our numerical results confirm the efficiency of the implementation described in Algorithm \ref{alg.Tao-1} which resulted in a significant reduction (more than $90\%$) of the running time per iteration in comparison with Algorithm \ref{alg.Tao}.

 \section{Numerical results}\label{sec-numerical}

This section shows four sets of numerical results that describe how the proposed techniques can enable to learn \acp{MRC} efficiently and to obtain reliable and tight performance bounds at learning. We utilize 12 common datasets from the UCI repository \citep{DuaGra:2019} with characteristics given in Table~\ref{table:datasets}. \acp{MRC}' implementation is available in the open-source Python library \texttt{MRCpy} \citep{KarMazPer:21} \url{https://MachineLearningBCAM.github.io/MRCpy/}.

In this section, \acp{MRC} are implemented using a feature mapping given by random features corresponding with a Gaussian kernel \citep{RahRec:08,Bac:17}, that is
\begin{align}\label{feature_exp}\Phi(x,y)=\V{e}_y\otimes\Psi(x)=\V{e}_y\otimes[\cos \V{u}_1^{\text{T}}x,\sin \V{u}_1^{\text{T}}x,\cos \V{u}_2^{\text{T}}x,\sin \V{u}_2^{\text{T}}x,\ldots,\cos \V{u}_D^{\text{T}}x,\sin \V{u}_D^{\text{T}}x]\end{align}
where $x$ is a normalized instance and $\V{u}_1,\V{u}_2,\ldots,\V{u}_D$ are i.i.d. samples from a zero-mean Gaussian distribution with covariance $(1/\sigma^2)\V{I}$ for a scaling parameter $\sigma$. In addition, for $n$ training samples $(x_1,y_1),(x_2,y_2),\ldots(x_n,y_n)$ we obtain confidence vectors as 
\begin{align}\label{confidence_exp}\B{\lambda}=\lambda_0\sqrt{\B{\upsilon}/n}\end{align} where $\B{\upsilon}$ is the vector formed by the sample variances of the feature mapping components. In all the numerical experiments, if not stated otherwise, we take $\lambda_0=0.3$ and use $D=500$ random Fourier features corresponding with the scaling parameter $\mbox{$\sigma=\sqrt{d/2}$}$ for $d$ the number of instances' components. 
We compare \acp{MRC} performance and bounds with those obtained by Wasserstein-based \ac{RRM} and by PAC-Bayes methods. Specifically, we utilize the \ac{DRLR} method as described in \cite{AbaMohKuh:15} with uncertainty sets defined by Wasserstein distances, and a \ac{SVM} with upper bounds given by PAC-Bayes as described in \cite{Lan:05,AmbParSha:07}.

\begin{table}
\caption{\small Characteristics of the datasets used for experimentation.}
\label{table:datasets}
\centering
\begin{tabular}{lccc}
\toprule Data set&Number of samples&Instances' dimensionality&\% Majority class\\[0.cm]
\hline
Adult&48,842&14&76\\
Pulsar&17,898&8&91\\
Credit&690&15&56\\
QSAR&1055&41&66\\
Mammographic&961&5&54\\
Haberman&306&3&74\\
Ion&351&34&64\\
Heart&270&13&56\\
Liver&583&10&71\\
Blood&748&4&76\\
Diabetes&768&8&65\\
Audit&776&26&61\\
\bottomrule
\end{tabular}
\end{table}

The first set of numerical results shows that \acp{MRC}' optimization problems can be efficiently addressed in practice by using a reduced set of instances.  Specifically, for $s$ i.i.d. instances $\set{X}_s=\{x_1,x_2,\ldots,x_s\}$, we solve optimization problems \eqref{opt-prob} and \eqref{opt-lower} taking $\set{X}=\set{X}_s$ and we study how the error of such approximation decreases with increasing $s$. For these numerical results we use ``Adult'' and ``Pulsar'' datasets from UCI repository \citep{DuaGra:2019} that contain a total of 48,842 and 17,898 instances, respectively.  We obtain $\B{\tau}$ and $\B{\lambda}$ using $500$ training samples. Then, we solve \eqref{opt-prob} and \eqref{opt-lower} taking $\set{X}$ composed by all the instances in the dataset and taking $\set{X}$ composed by a subset of randomly selected instances $\set{X}_s$ of size $s$. 

\begin{figure}
\psfrag{x}[l][b][0.6]{\hspace{-1.8cm}Number of instances $s$}
\psfrag{y}[l][t][0.6]{\hspace{-1.57cm}Classification error}
\psfrag{A1234567890123456789012345}[l][][0.5]{\hspace{-1.65cm}Lower bound $\underline{R}_s(\set{U})$}
\psfrag{A2}[l][][0.5]{\hspace{-0.1cm}Upper bound $\overline{R}_s(\set{U})$}
\psfrag{A3}[l][][0.5]{\hspace{-0.1cm}Error prob. $R(\up{h}_s)$}
\psfrag{A4}[l][][0.5]{\hspace{-0.1cm}Lower bound $\underline{R}(\set{U})$}
\psfrag{A5}[l][][0.5]{\hspace{-0.1cm}Upper bound $\overline{R}(\set{U})$}
\psfrag{A6}[l][][0.5]{\hspace{-0.1cm}Error prob. $R(\up{h})$}
\psfrag{0.1}[][][0.4]{0.1}
\psfrag{0.2}[][][0.4]{0.2}
\psfrag{0.3}[][][0.4]{0.3}
\psfrag{0.04}[][][0.4]{0.04}
\psfrag{0.08}[][][0.4]{0.08}
\psfrag{0.12}[][][0.4]{0.12}
\psfrag{0.16}[][][0.4]{0.16}
\psfrag{100}[][][0.4]{100}
\psfrag{1000}[][][0.4]{1000}
\psfrag{2000}[][][0.4]{2000}
\psfrag{3000}[][][0.4]{3000}
\psfrag{4000}[][][0.4]{4000}
\psfrag{5000}[][][0.4]{5000}

\subfigure[Adult dataset]{\includegraphics[width=0.48\textwidth]{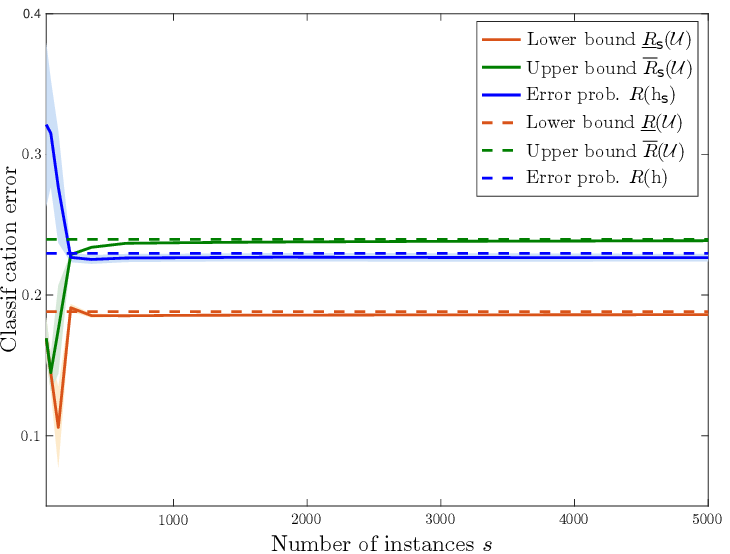}}
\hfill
\subfigure[Pulsar dataset]{\includegraphics[width=0.48\textwidth]{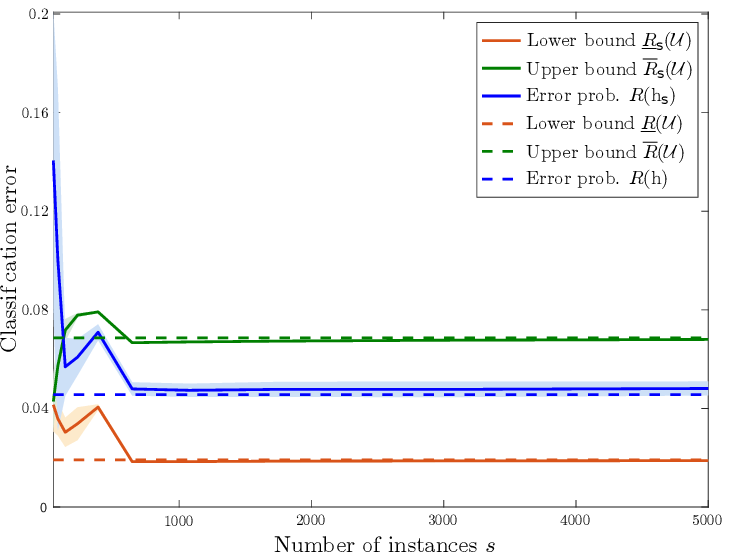}}
	\caption{Decrease of optimization error with the number of instances used.\label{fig_reduced}}
\end{figure}

In Figure~\ref{fig_reduced},  $R(\up{h}_s)$, $\underline{R}_s(\set{U})$ and $\overline{R}_s(\set{U})$ denote, as in Theorem~\ref{th_8}, the error probability and bounds of the \ac{MRC} obtained solving \eqref{opt-prob} and \eqref{opt-lower} taking $\set{X}=\set{X}_s=\{x_1,x_2,\ldots,x_s\}$.  Solid curves describe the average results in 50 random repetitions, the shaded areas describe the intervals formed by the $\pm$ standard deviation around the averages, and dashed lines describe the results obtained taking $\set{X}$ composed by all the instances. The figure shows that the results obtained using a reduced set of instances quickly converge to those obtained using all the instances. These results agree with the theoretical guarantees for this approximation shown in Theorem~\ref{th_8} and show that the constants of the bounds in such result can be quite small in practice. In the remaining numerical results we solve optimization problems \eqref{opt-prob}, \eqref{opt-lower},  and \eqref{opt-upper} taking $\set{X}$ as the $n$ instances at training.

\begin{figure}

\psfrag{1}[][][0.4]{1}
\psfrag{0}[][][0.4]{0}
\psfrag{2}[][][0.4]{2}
\psfrag{3}[][][0.4]{3}
\psfrag{4}[][][0.4]{4}
\psfrag{-1}[][][0.4]{\hspace{-0.4cm}$10^{-1}$}
\psfrag{-2}[][][0.4]{\hspace{-0.4cm}$10^{-2}$}
\psfrag{-3}[][][0.4]{\hspace{-0.4cm}$10^{-3}$}
\psfrag{2}[][][0.4]{2}
\psfrag{3}[][][0.4]{3}
\psfrag{4}[][][0.4]{4}
\psfrag{5}[][][0.4]{5}
\psfrag{y}[][t][0.6]{\hspace{-0.45cm}Optimization error}
\psfrag{x}[][][0.6]{\hspace{-0.45cm}Running time [s]}
\psfrag{U1}[][][0.6]{\hspace{-0cm}PAC}
\psfrag{U2}[][][0.6]{\hspace{-0cm}Upper B.}
\psfrag{L2}[][][0.6]{\hspace{-0cm}Lower B.}
\psfrag{U3}[][][0.6]{\hspace{-0cm}Upper B.}
\psfrag{L3}[][][0.6]{\hspace{-0cm}Lower B.}
\psfrag{U4}[][][0.6]{\hspace{-0cm}Upper B.}
\psfrag{L4}[][][0.6]{\hspace{-0cm}Lower B.}
\psfrag{A1234567890123}[][][0.5]{\hspace{-.93cm}BSM}
\psfrag{A3}[][][0.5]{\hspace{.6cm}ASM}
\psfrag{A2}[][][0.5]{\hspace{1cm}E-BSM}
\psfrag{A4}[][][0.5]{\hspace{1.cm}E-ASM}
\psfrag{A5}[][][0.5]{\hspace{1.4cm}E-ASM-R}
\centering
\subfigure[Credit dataset]{\includegraphics[width=0.48\textwidth]{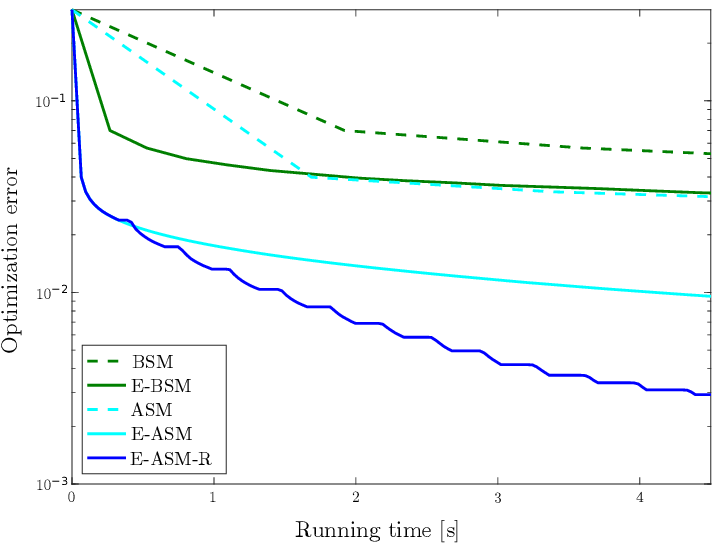}}\hfill
\subfigure[Haberman dataset]{\includegraphics[width=0.48\textwidth]{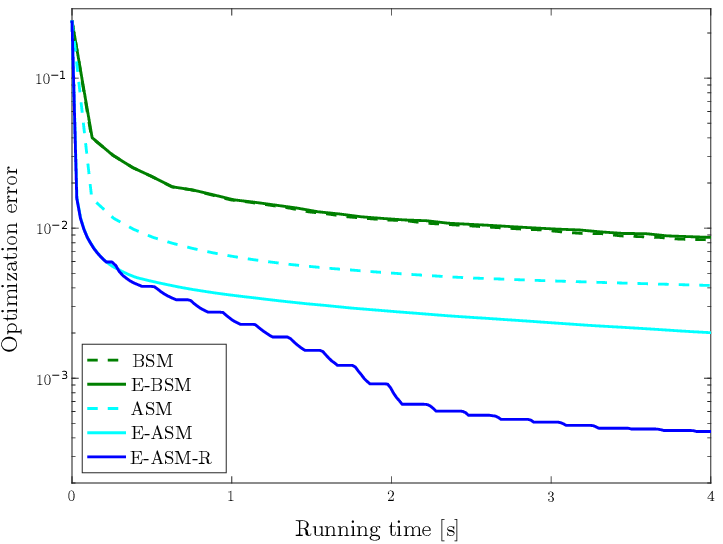}}
\subfigure[Mammographic dataset]{\includegraphics[width=0.48\textwidth]{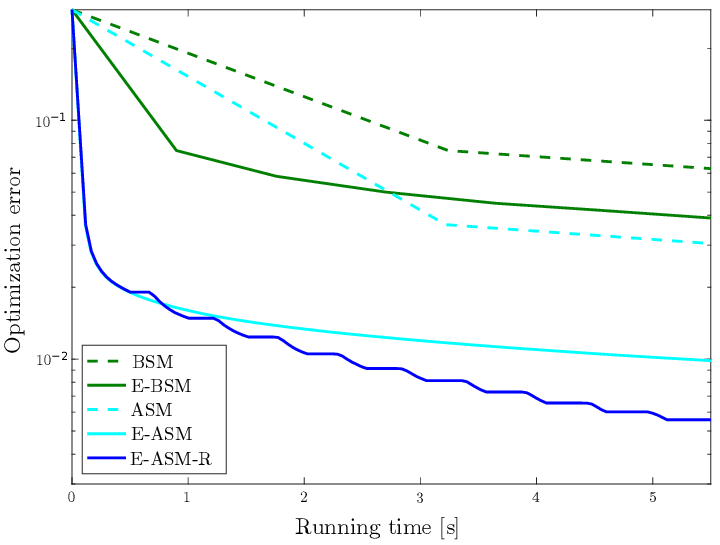}}\hfill
\subfigure[Diabetes dataset]{\includegraphics[width=0.48\textwidth]{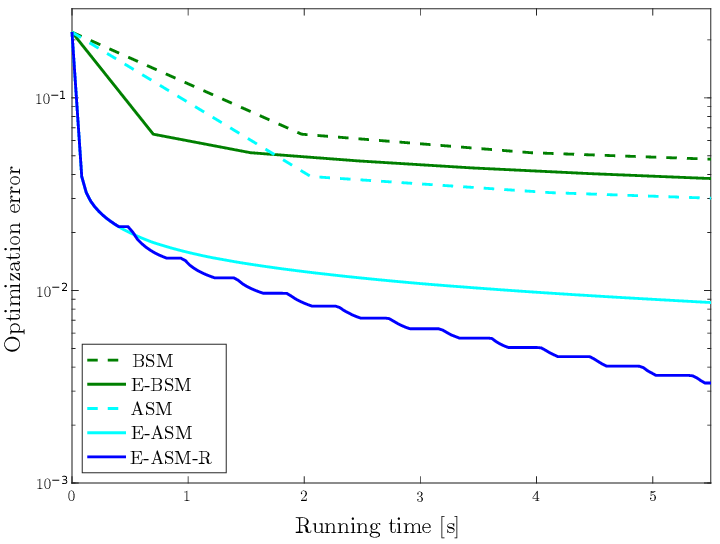}}
\caption{The proposed efficient \ac{ASM}'s implementation that exploits the subgradients' structure can significantly reduce the running time required to accurately learn \acp{MRC}.\label{fig:opt}}
\end{figure}

The second set of numerical results assesses the performance of the efficient SMs presented in Section~\ref{ssc:ep}.
We solve problem \eqref{opt-prob} for \ac{MRC} learning using four methods: the basic SM (BSM) with step size $c_k=1/(\sqrt{k+1}\|\V{g}_k\|)$; the efficient BSM that exploits the subgradient structure as shown in Section~\ref{ssc:ep} (E-BSM); the \ac{ASM} described in Algorithm~\ref{alg.Tao}; the efficient \ac{ASM} that exploits the subgradient structure as detailed in Algorithm~\ref{alg.Tao-1} (E-ASM); and the efficient \ac{ASM} described in Algorithm~\ref{alg.Tao-1} combined with restarts every 10,000 iterations (E-ASM-R). 

Figure \ref{fig:opt} shows the optimization error per running time averaged over 10 random partitions of four datasets. The results manifest the significant computational savings obtained by the efficient implementation presented in Section~\ref{ssc:ep}. Such efficient implementation is especially advantageous for \ac{ASM} since it results in a sparsity coefficient around $10^{-3}$ while that achieved in BSM is around $10^{-1}$. The restart strategy achieves worse computing time per iteration since it results in sparsity coefficients around $10^{-2}$. However, such strategy provides an overall improvement in running time since it requires significantly less iterations.

The third set of numerical results shows how the \acp{MRC} performance and bounds change by varying the confidence vectors used. 
We study the effect of varying the confidence vectors at learning and the performance guarantees obtained using confidence vectors with high coverage probability. In addition, we compare the results obtained in the practical case where confidence vectors are obtained from training samples with those obtained in the ideal case where the error in the mean vector estimates is known. Specifically, for the practical case we learn \acp{MRC} using $\B{\lambda}$ as in \eqref{confidence_exp} and obtain high-confidence performance guarantees using $\B{\lambda}_\delta$ for $\delta=0.05$ given by the confidence intervals proposed in \cite{WauRam:20}. For the ideal case, we learn \acp{MRC} using $\B{\lambda}=\lambda_0|\B{\tau}_\infty-\B{\tau}|$, and obtain high-confidence performance guarantees using $\B{\lambda}_\delta=|\B{\tau}_\infty-\B{\tau}|$.  In particular, a simple high-confidence upper bound is obtained using \eqref{conf-bound}, and a tighter high-confidence upper bound is obtained using Theorem~\ref{th-bounds} with $\B{\lambda}=\B{\lambda}_\delta$.  In order to reproduce the ideal case, in these numerical results we use ``Adult'' and ``Pulsar'' datasets, the mean $\B{\tau}_\infty$ is calculated using all the samples while $1,000$ samples are randomly sampled for training in $50$ repetitions.






\begin{figure}

\subfigure[Adult dataset, ideal case]{\includegraphics[width=0.48\textwidth]{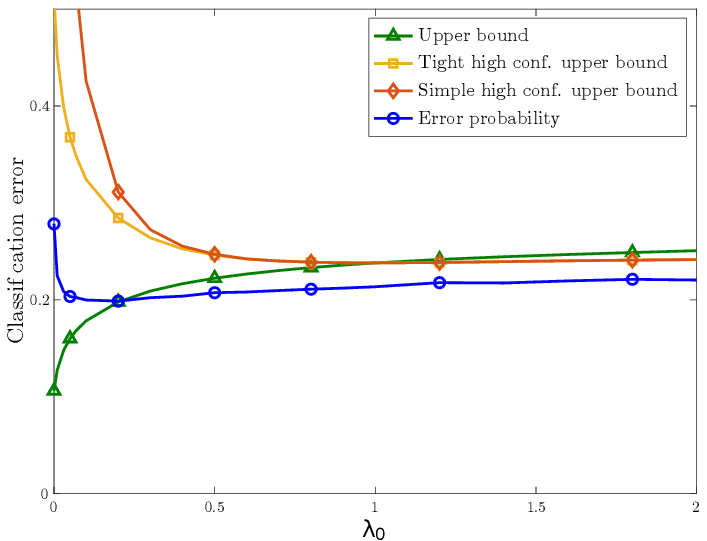}}
\hfill
\subfigure[Adult dataset, practical case]{\includegraphics[width=0.48\textwidth]{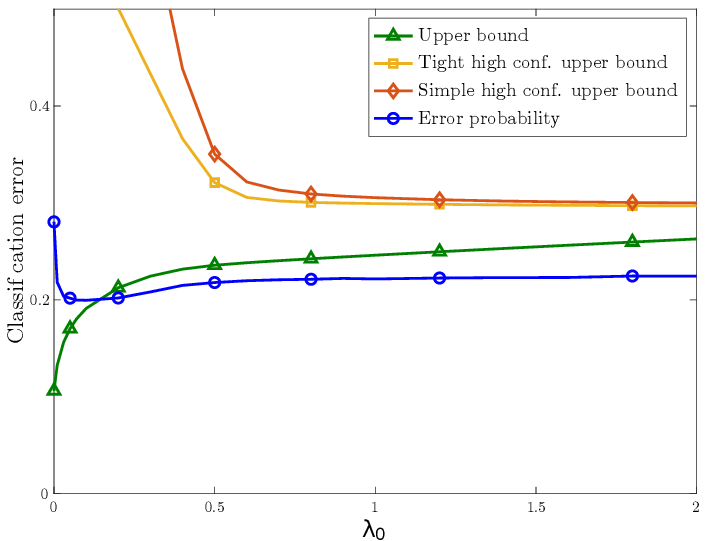}}
\subfigure[Pulsar dataset, ideal case]{\includegraphics[width=0.48\textwidth]{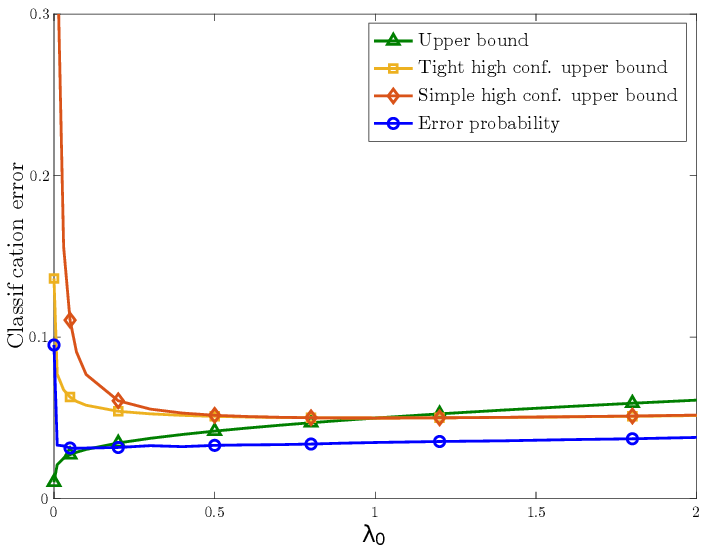}}
\hfill
\subfigure[Pulsar dataset, practical case]{\includegraphics[width=0.48\textwidth]{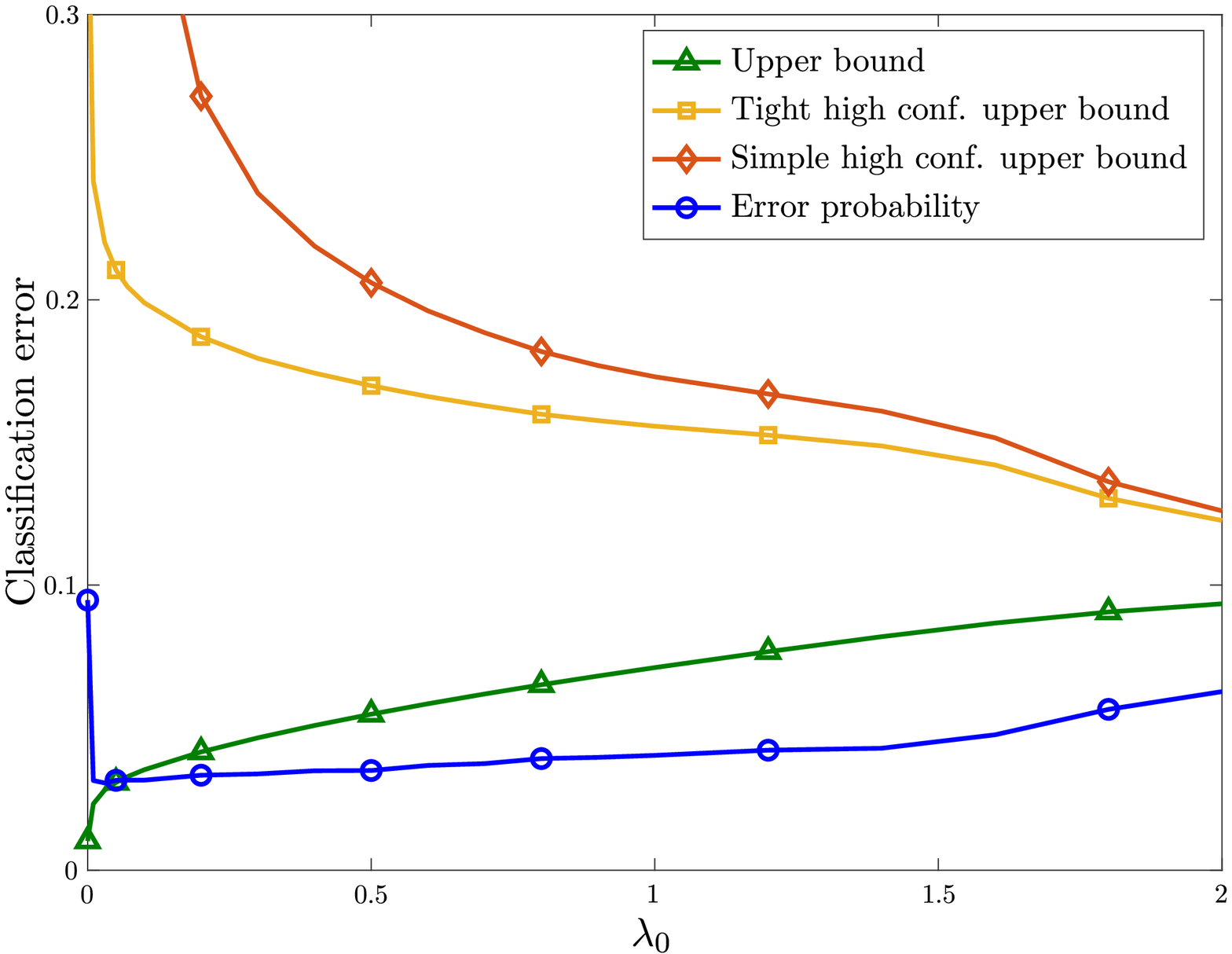}}
	\caption{Error probabilities and performance bounds of \acp{MRC} for different choices of confidence vectors in comparison with the ideal case that use the actual error in mean vector estimates.\label{fig_lambda}}	
	
\end{figure}

Figure~\ref{fig_lambda} shows the error probability $R(\up{h}^{\sset{U}})$ and upper bound $\overline{R}(\set{U})$ of \acp{MRC} corresponding with different values of $\lambda_0$, together with the simple high-confidence upper bound obtained using \eqref{conf-bound}, and the tighter high-confidence upper bound obtained using Theorem~\ref{th-bounds}. Such figure shows that \acp{MRC} do not heavily rely on the choice of the confidence vector. In particular, the error probability obtained using $\B{\lambda}$ as in \eqref{confidence_exp} is similar to the error probability that would be obtained using the actual difference $|
\B{\tau}_\infty-\B{\tau}|$, and the minimax risk $\overline{R}(\set{U})$ optimized at learning is similar to the error probability for most values of $\lambda_0$. In addition, the usage of confidence vectors with high coverage probability can result in performance guarantees that are both valid with high probability and informative. Such numerical results are in agreement with the conclusions drawn from Theorem~\ref{th_6} in Section~\ref{sec-4.2}. In particular, Figure~\ref{fig_lambda} shows that the smallest valid upper bound is obtained using $\B{\lambda}=|\B{\tau}_\infty-\B{\tau}|$ but more aggressive confidence vectors can result in improved error probability and yet provide useful performance bounds.

\begin{figure}

\psfrag{x1}[l][b][0.6]{\hspace{-0.2cm}$\lambda_0$}
\psfrag{x2}[l][b][0.6]{\hspace{-0.2cm}$\rho$}
\psfrag{y}[l][t][0.6]{\hspace{-1.57cm}Classification error}
\psfrag{A1234567890123456789012345678}[l][][0.5]{\hspace{-1.92cm}Upper bound $\overline{R}(\set{U})$}
\psfrag{A2}[l][][0.5]{\hspace{-0.15cm}Lower bound $\underline{R}(\set{U})$}
\psfrag{A3}[l][][0.5]{\hspace{-0.15cm}Error prob. $R(\up{h}^{\sset{U}})$}
\psfrag{A4}[l][][0.5]{\hspace{-0.17cm}Upper bound $\overline{R}(\set{U},\up{h}^{\sset{U}}_{\text{d}})$}
\psfrag{A5}[l][][0.5]{\hspace{-0.17cm}Lower bound $\underline{R}(\set{U},\up{h}^{\sset{U}}_{\text{d}})$}
\psfrag{A6}[l][][0.5]{\hspace{-0.17cm}Error prob. $R(\up{h}^{\sset{U}}_{\text{d}})$}
\psfrag{A123456789012345678}[l][][0.5]{\hspace{-1.27cm}Upper bound}
\psfrag{B2}[l][][0.5]{\hspace{-0.15cm}Lower bound}
\psfrag{B3}[l][][0.5]{\hspace{-0.15cm}Error prob.}
\psfrag{0.1}[][][0.4]{0.1}
\psfrag{0.2}[][][0.4]{0.2}
\psfrag{0.3}[][][0.4]{0.3}
\psfrag{0.4}[][][0.4]{0.4}
\psfrag{0.6}[][][0.4]{0.6}
\psfrag{0.8}[][][0.4]{0.8}
\psfrag{-5}[][][0.4]{$10^{-5}$}
\psfrag{-4}[][][0.4]{$10^{-4}$}
\psfrag{-3}[][][0.4]{$10^{-3}$}
\psfrag{-2}[][][0.4]{$10^{-2}$}
\psfrag{-1}[][][0.4]{$10^{-1}$}
\psfrag{1}[][][0.4]{1}
\psfrag{0.04}[][][0.4]{0.04}
\psfrag{0.08}[][][0.4]{0.08}
\psfrag{0.12}[][][0.4]{0.12}
\psfrag{0.16}[][][0.4]{0.16}
\psfrag{100}[][][0.4]{100}
\psfrag{1000}[][][0.4]{1000}
\psfrag{2000}[][][0.4]{2000}
\psfrag{3000}[][][0.4]{3000}
\psfrag{4000}[][][0.4]{4000}
\psfrag{5000}[][][0.4]{5000}

\subfigure[\ac{MRC} performance bounds and classification error in ``credit'' dataset.]{\includegraphics[width=0.48\textwidth]{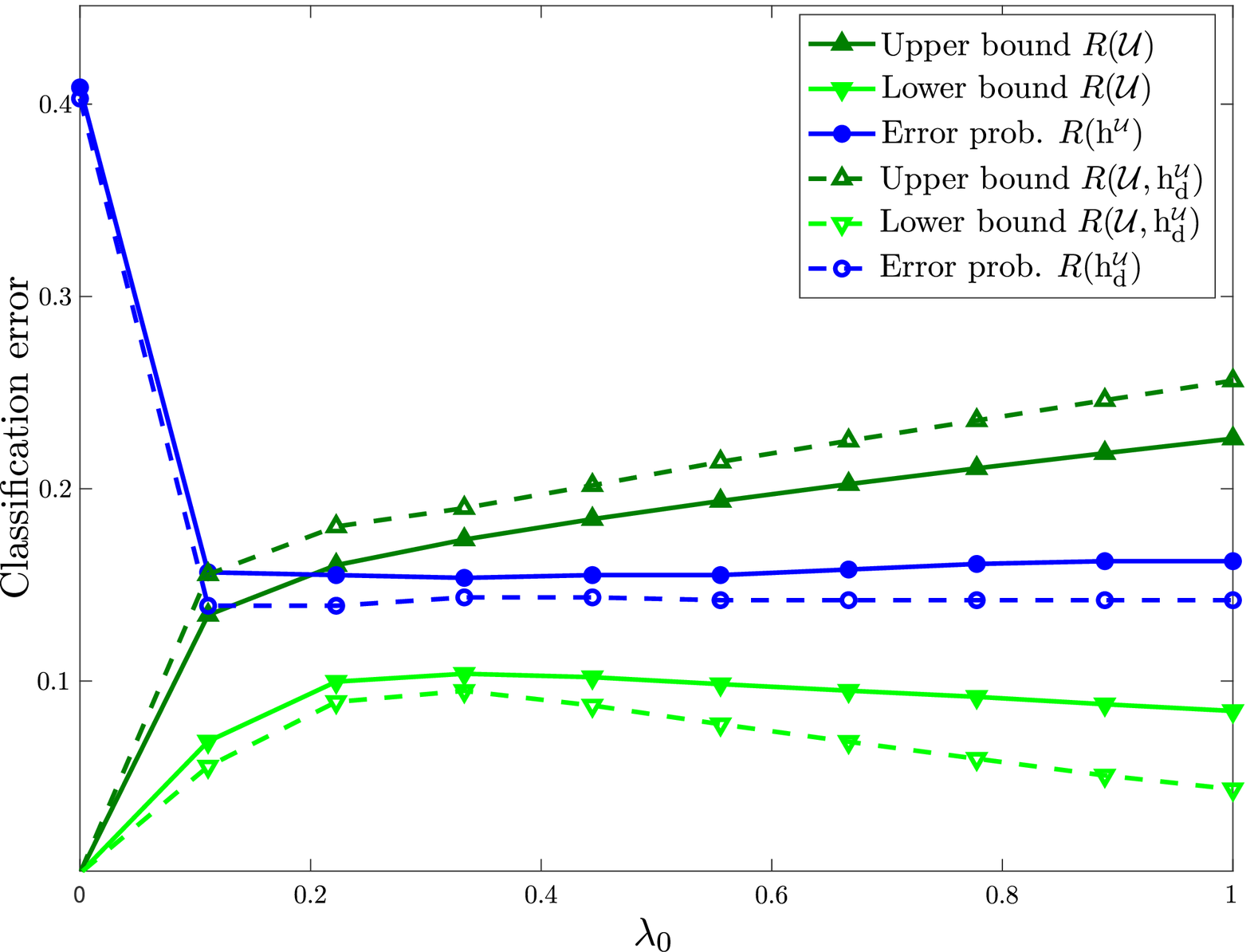}\label{fig_bounds-l1}}
\hfill
\subfigure[\ac{DRLR} performance bounds and classification error in ``credit'' dataset.]{\includegraphics[width=0.48\textwidth]{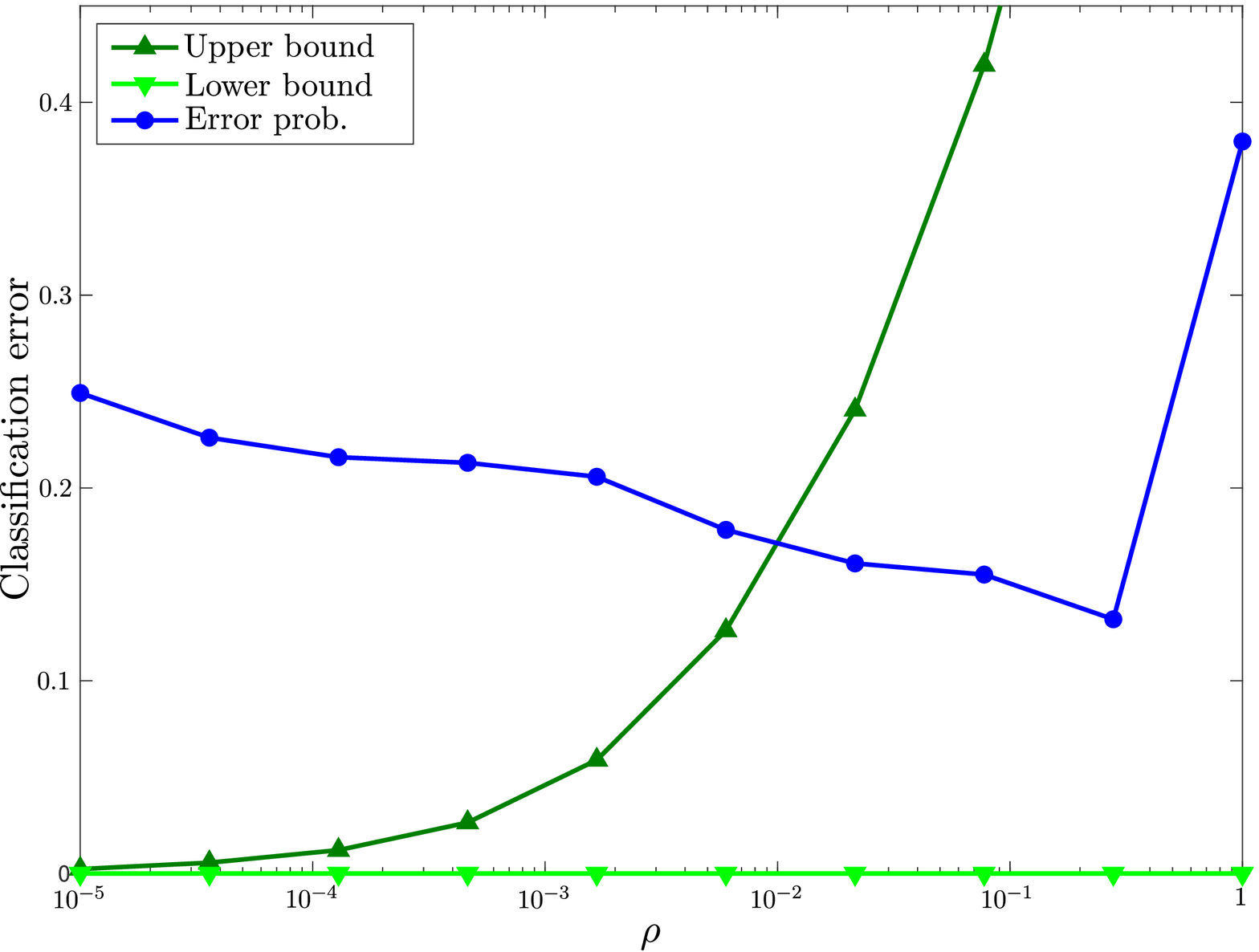}\label{fig_bounds-r1}}
\subfigure[\ac{MRC} performance bounds and classification error in ``QSAR'' dataset.]{\includegraphics[width=0.48\textwidth]{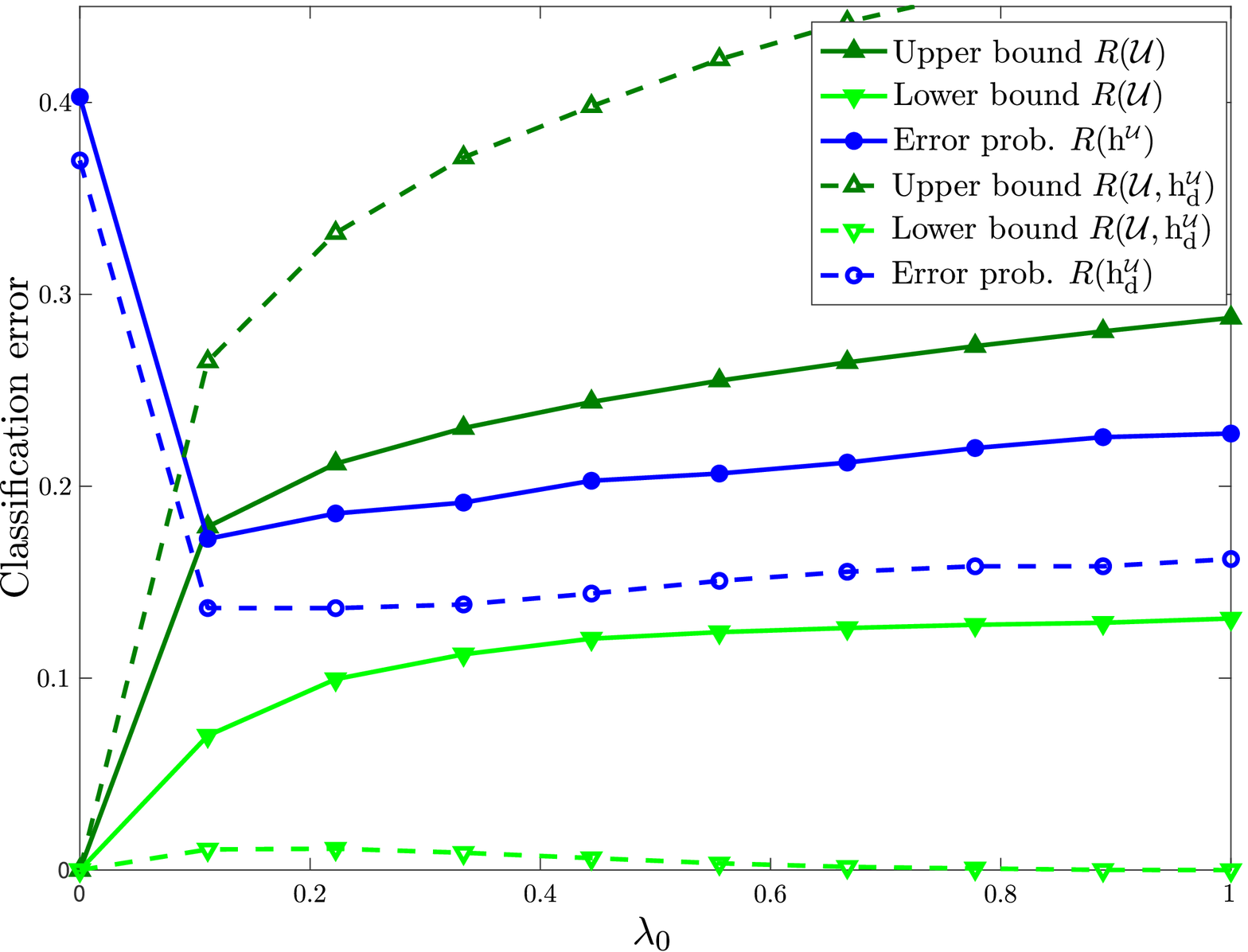}\label{fig_bounds-l2}}
\hfill
\subfigure[\ac{DRLR} performance bounds and classification error in ``QSAR'' dataset.]{\includegraphics[width=0.48\textwidth]{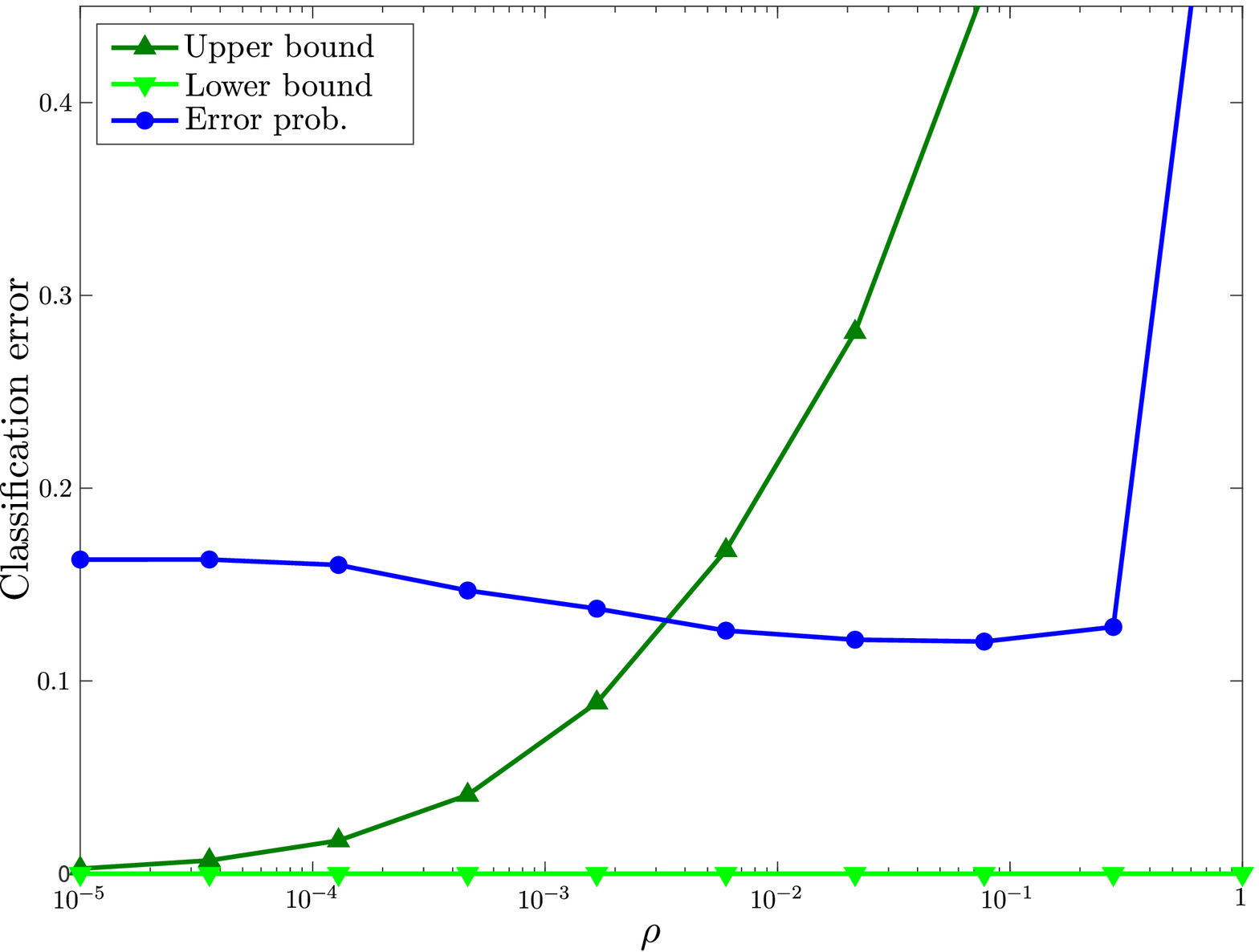}\label{fig_bounds-r2}}
\subfigure[\ac{MRC} performance bounds and classification error in ``mammographic'' dataset.]{\includegraphics[width=0.48\textwidth]{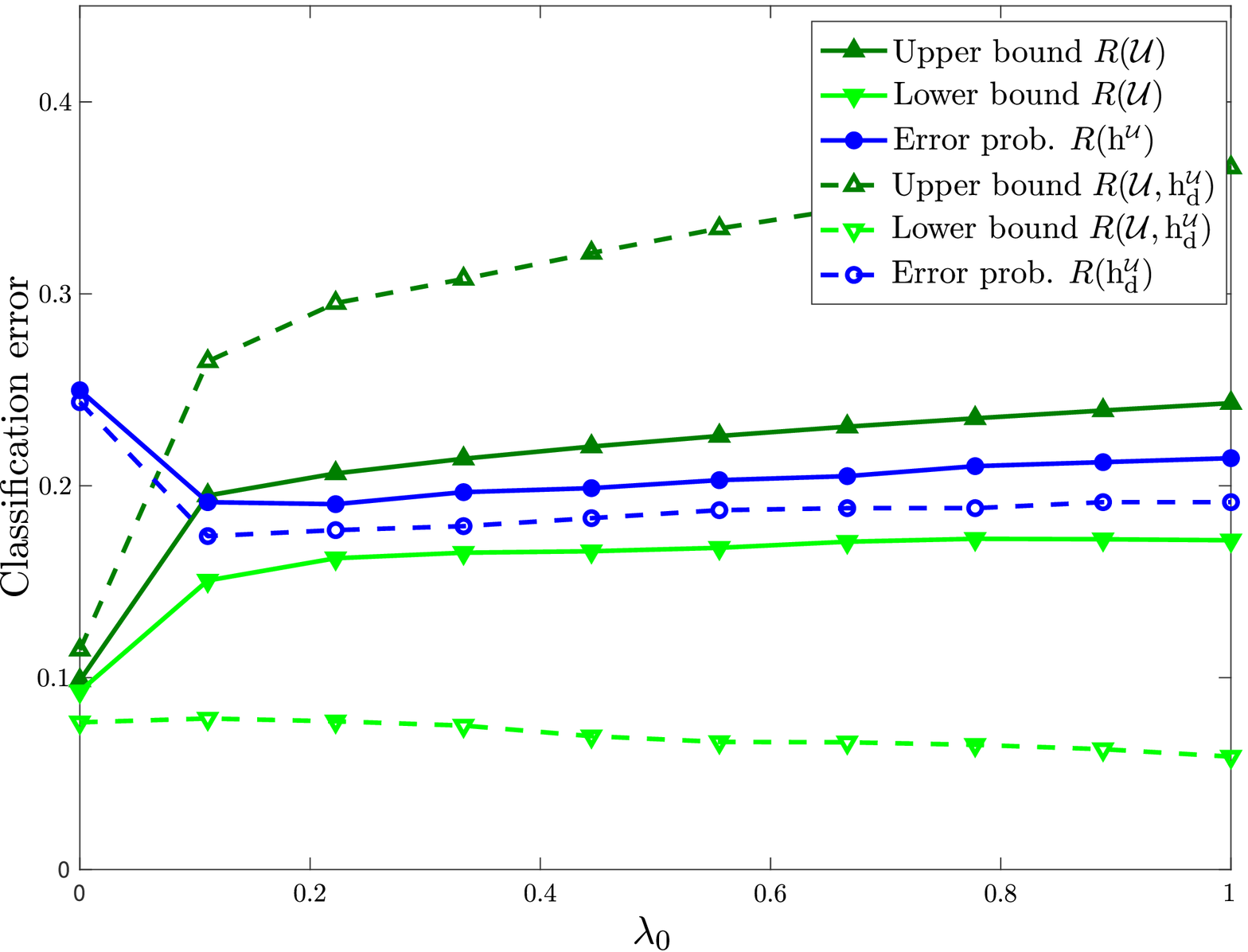}\label{fig_bounds-l3}}
\hfill
\subfigure[\ac{DRLR} performance bounds and classification error in ``mammographic'' dataset.]{\includegraphics[width=0.48\textwidth]{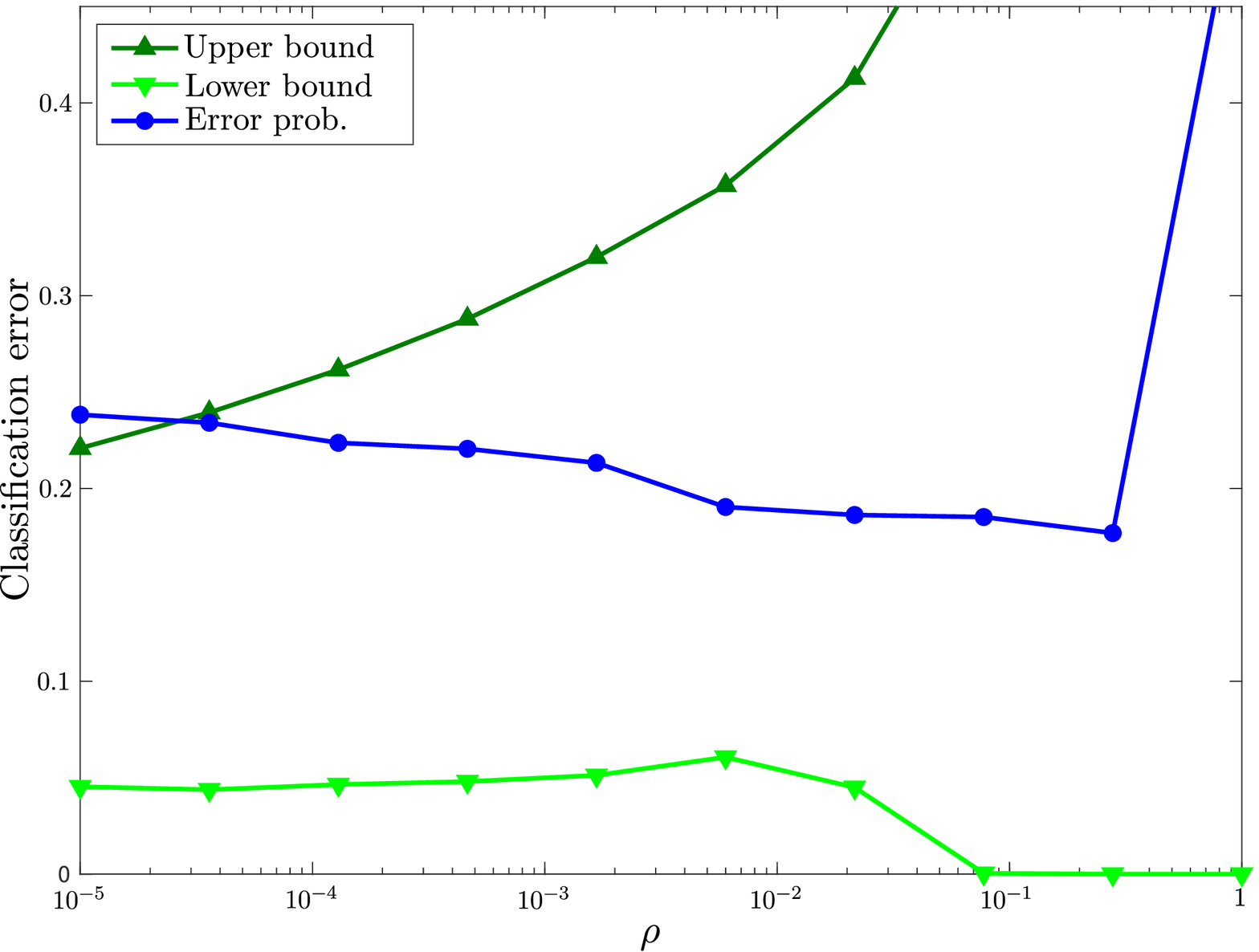}\label{fig_bounds-r3}}
	\caption{\acp{MRC} performance bounds for varying confidence vectors in comparison with those obtained by \ac{RRM} techniques based on Wasserstein distances.\label{fig_bounds}}
\end{figure}

We also compare the performance and bounds of \acp{MRC} for varying confidence vectors with those obtained by \acp{DRLR} for varying Wasserstein radius. Figure~\ref{fig_bounds} shows the error probability and performance bounds of \acp{MRC} and \acp{DRLR} obtained by varying $0\leq\lambda_0\leq1$ and the Wasserstein radius $10^{-5}\leq\rho\leq 1$, respectively. In particular, for each value of $\lambda_0$ and $\rho$ we averaged over 10-fold stratified partitions the error and bounds of \acp{MRC} and \acp{DRLR}. Figure~\ref{fig_bounds} shows that \acp{MRC} can provide tighter performance bounds than existing techniques based on \ac{RRM} with Wasserstein distances. The figure also shows that, even for datasets of similar size, \ac{DRLR} methods require to fine-tune the Wasserstein radius $\rho$ in order to obtain reliable and tight performance bounds while \acp{MRC} can utilize a confidence parameter $\lambda_0$ that does not strongly depend on the dataset. Figure~\ref{fig_bounds} also shows that deterministic 0\,-1 \acp{MRC}, $\up{h}_\text{d}^{\sset{U}}$, can obtain improved classification error in practice at the expenses of less tight performance guarantees.

The fourth set of numerical results shows how \acp{MRC}' performance bounds can be used for model selection. We compare the performance obtained by selecting the scaling parameter of a Gaussian kernel using four methods, one based on conventional cross-validation and three based on error upper bounds. Specifically, SVM-CV selects the scaling parameter for which a \ac{SVM} classifier achieves the smallest cross-validation error over 10-fold partitions of the training data. SVM-PAC, \ac{DRLR}-UB, and \ac{MRC}-UB select the scaling parameter with smallest error upper bound in training. Such upper bounds are obtained for SVM-PAC by using PAC-Bayes methods as in \cite{Lan:05,AmbParSha:07}, while for \ac{DRLR}-UB and \ac{MRC}-UB the upper bounds are obtained as the value the optimization problem solved at learning as shown in \cite{AbaMohKuh:15} and in equation \eqref{opt-prob}, respectively. For all datasets, \acp{MRC} utilize confidence vectors as in \eqref{confidence_exp} with $\lambda_0=0.3$ and \acp{DRLR} utilize Wasserstein radious of $\rho=0.003$ as in \cite{AbaMohKuh:15}.

\begin{table}
\caption{\small Classification error of \ac{MRC} with model selection based on the performance bounds in comparison with state-of-the-art techniques.}
\label{table:results}
\centering
\begin{tabular}{lccccc}
\toprule Data set&SVM-CV&SVM-PAC&\ac{DRLR}-UB&\ac{MRC}-UB&Det. \ac{MRC}-UB\\[0.cm]
\hline
Haberman&.26&.27&.32&.25&.25\\
Heart&.17&.17&.23&.21&.18\\
Liver&.28&.28&.33&.29&.28\\
Blood&.22&.23&.25&.24&.22\\
Credit&.14&.13&.20&.15&.14\\
Diabetes&.23&.23&.30&.27&.24\\
Ion&.08&.08&.07&.12&.07\\
QSAR&.11&.12&.14&.18&.12\\
Mammographic&.17&.17&.21&.19&.18\\
Audit&.05&.06&.04&.07&.06\\
\bottomrule
\end{tabular}
\end{table}

Table~\ref{table:results} shows the classification error obtained by the methods compared in 10 datasets. Specifically, we generate 20 random stratified splits with 20\% test samples. In each split, we utilize the training samples to learn a classifier selecting a scaling parameter over 20 uniformly spaced candidates between the 10th and 90th percentiles of the Euclidean distances among normalized instances. Then, the error of the corresponding classifier is estimated using the test samples and the final values in the table are obtained by averaging the results over the 20 random splits of the data.  Table~\ref{table:results} shows that the proposed \acp{MRC} as well as methods based on PAC-Bayes can obtain similar performance to that obtained by conventional methods based on cross-validation. However, \acp{MRC} and PAC-Bayes methods require a significantly smaller complexity since they only need to carry out one optimization per parameter value while cross-validation methods require to carry out such optimization 10 times per parameter value. In addition, the performance bounds obtained by \acp{MRC} can be used not only for model selection but also to obtain tight estimates for the classification error. Figure~\ref{fig:bounds} shows the differences between performance bounds and error probabilities for all the splits, datasets, and hyper-parameters in this fourth set of numerical results.  The figure shows that the performance bounds provided by \acp{MRC} are much tighter than those based on PAC-Bayes and much more reliable that those based on \ac{RRM} that use Wasserstein balls without a fine-tuned radious. In particular, the difference between the classification error and the performance bounds are around 0.05 for \acp{MRC} and around 0.1 for deterministic \acp{MRC}.

\begin{figure}
\psfrag{M1}[l][][0.7]{\hspace{-0.4cm}SVM}
\psfrag{M2}[l][][0.7]{\hspace{-0.5cm}DRLR}
\psfrag{M3}[l][][0.7]{\hspace{-0.5cm}MRC}
\psfrag{M4}[l][][0.7]{\hspace{-1cm}Det. MRC}
\psfrag{-0.1}[][][0.4]{-0.1}
\psfrag{0}[][][0.4]{0}
\psfrag{0.1}[][][0.4]{0.1}
\psfrag{0.2}[][][0.4]{0.2}
\psfrag{0.3}[][][0.4]{0.3}
\psfrag{y}[][][0.6]{\hspace{-0.45cm}Difference bound and error}
\psfrag{U1}[][][0.6]{\hspace{-0cm}PAC}
\psfrag{U2}[][][0.6]{\hspace{-0cm}Upper B.}
\psfrag{L2}[][][0.6]{\hspace{-0cm}Lower B.}
\psfrag{U3}[][][0.6]{\hspace{-0cm}Upper B.}
\psfrag{L3}[][][0.6]{\hspace{-0cm}Lower B.}
\psfrag{U4}[][][0.6]{\hspace{-0cm}Upper B.}
\psfrag{L4}[][][0.6]{\hspace{-0cm}Lower B.}
\psfrag{A1234567890123456789012345678}[l][][0.5]{\hspace{-1.92cm}Upper bound $\overline{R}(\set{U})$}
\centering
\includegraphics[width=0.6\textwidth]{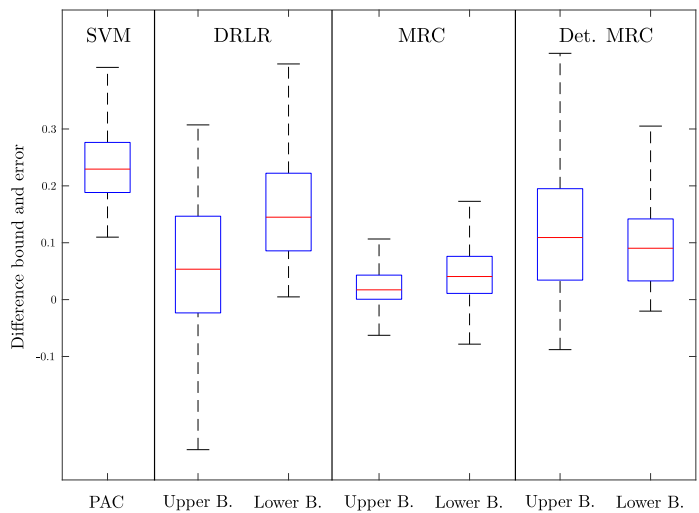}
\caption{Boxplots with differences between error probability and bounds for PAC, \ac{DRLR}, and \ac{MRC} methods over multiple datasets and hyperparameter configurations.\label{fig:bounds}}
\end{figure}

\section{Conclusion}\label{sec-conclusion}

The paper presents minimax risk classifiers (MRCs) that minimize the worst-case 0\,-1 loss over general classification rules and provide tight performance guarantees. We show how the out-of-sample performance of \acp{MRC} can be reliably estimated at learning, and that the \acp{MRC}' error due to finite training sizes is determined by the accuracy of expectation estimates. In addition, we show that \acp{MRC} are strongly universally consistent in situations analogous to those corresponding with kernel-based methods. 
The proposed methodology can offer classification techniques that do not rely on specific training samples and choices for surrogate losses/hypothesis classes. Instead, \ac{MRC} learning is based on expectation estimates, and its inductive bias comes only from a feature mapping that determines which expectations are estimated. Therefore, the methods presented can provide techniques that are robust to practical situations that defy common assumptions, e.g., training samples that follow a different distribution or display heavy tails.

\newpage
\acks{Funding in direct support of this work has been provided
by projects PID2022-137063NB-I00, CNS2022-135203, and CEX2021-001142-S funded by MCIN/AEI/10.13039/501100011033 and the European Union ``NextGenerationEU''/PRTR, and programes ELKARTEK and BERC-2022-2025 funded by the Basque Government.}

\appendix
\section{}\begin{lemma}\label{lemma}
Let $\set{U}$ be an uncertainty set given by \eqref{uncertainty}. For any $\up{h}\in T(\set{X},\set{Y})$, we have that
\begin{align}\label{wduality}
\sup_{\up{p}\in\set{U}} \ell(\up{h},\up{p})\leq\inf_{\B{\mu}\in\mathbb{R}^m} 1-\B{\tau}^{\text{T}}\B{\mu}+\sup_{x\in\set{X},y\in\set{Y}}\big\{\Phi(x,y)^{\text{T}}\B{\mu}-\up{h}(y|x)\big\}+\B{\lambda}^{\text{T}}|\B{\mu}|
\end{align}
In addition, if $\set{U}$ that satisfies R1 or R2, we have that
\begin{align}\label{duality}
\sup_{\up{p}\in\set{U}} \ell(\up{h},\up{p})=\min_{\B{\mu}\in\mathbb{R}^m} 1-\B{\tau}^{\text{T}}\B{\mu}+\sup_{x\in\set{X},y\in\set{Y}}\big\{\Phi(x,y)^{\text{T}}\B{\mu}-\up{h}(y|x)\big\}+\B{\lambda}^{\text{T}}|\B{\mu}|
\end{align}
\end{lemma}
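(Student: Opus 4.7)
The statement packages a weak-duality inequality together with a strong-duality identity under either of the regularity hypotheses R1, R2. I will treat the two halves separately, since weak duality is a one-line Lagrangian computation while strong duality is the substantive part.

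\textbf{Step 1 (Weak duality).} For any $\up{p}\in\set{U}$ and $\B{\mu}\in\mathbb{R}^m$ I would add and subtract $\B{\tau}^{\text{T}}\B{\mu}$ and $\mathbb{E}_{\up{p}}\{\Phi^{\text{T}}\B{\mu}\}$ inside the expectation, obtaining
\[
\ell(\up{h},\up{p}) \;=\; 1-\B{\tau}^{\text{T}}\B{\mu} + \mathbb{E}_{\up{p}}\bigl\{\Phi(x,y)^{\text{T}}\B{\mu}-\up{h}(y|x)\bigr\} + \bigl(\B{\tau}-\mathbb{E}_{\up{p}}\{\Phi\}\bigr)^{\text{T}}\B{\mu}.
\]
The second term is bounded above by $\sup_{x,y}\{\Phi(x,y)^{\text{T}}\B{\mu}-\up{h}(y|x)\}$ by definition, and the third is bounded above by $\B{\lambda}^{\text{T}}|\B{\mu}|$ by H\"older/componentwise comparison using the constraint $|\mathbb{E}_{\up{p}}\{\Phi\}-\B{\tau}|\preceq\B{\lambda}$. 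Taking $\sup_{\up{p}\in\set{U}}$ on the left and $\inf_{\B{\mu}}$ on the right yields \eqref{wduality}.

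\textbf{Step 2 (Strong duality under R1).} When $\set{X}$ is finite, the primal $\sup_{\up{p}\in\set{U}}\ell(\up{h},\up{p})$ is a finite-dimensional linear program in $|\set{X}||\set{Y}|$ variables with $\set{U}\ne\emptyset$ by assumption, and its optimum is finite because $0\le\ell\le 1$. The dual LP obtained by introducing multipliers $\B{\alpha},\B{\beta}\succeq\V{0}$ for the two sides of $|\mathbb{E}_{\up{p}}\{\Phi\}-\B{\tau}|\preceq\B{\lambda}$ and then setting $\B{\mu}=\B{\beta}-\B{\alpha}$ and minimising $(\B{\alpha}+\B{\beta})^{\text{T}}\B{\lambda}$ over $\B{\alpha},\B{\beta}\succeq\V{0}$ with $\B{\beta}-\B{\alpha}=\B{\mu}$ collapses to exactly the right-hand side of \eqref{duality} (the inner minimum equals $\B{\lambda}^{\text{T}}|\B{\mu}|$ since $\B{\lambda}\succeq\V{0}$, attained at $\B{\alpha}=\B{\mu}_-,\B{\beta}=\B{\mu}_+$). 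Classical LP strong duality then gives equality with attainment of the $\inf$.

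\textbf{Step 3 (Strong duality under R2).} This is the main obstacle: $\set{X}$ is now a general Borel subset of $\mathbb{R}^d$, so the primal is an infinite-dimensional generalised moment problem. I would view it as a Fenchel duality between the space of bounded Borel measurable functions on $\set{X}\times\set{Y}$ (paired with finite signed Borel measures) and the finite-dimensional space $\mathbb{R}^m$ of multipliers; the linear constraint map is $\up{p}\mapsto\mathbb{E}_{\up{p}}\{\Phi\}-\B{\tau}$ and the ``inequality cone'' is $\{\V{u}:|\V{u}|\preceq\B{\lambda}\}$. The dual-function computation is identical to Step~2, using the elementary fact that for a bounded Borel function $f$ on $\set{X}\times\set{Y}$, $\sup_{\up{p}\in\Delta(\set{X}\times\set{Y})}\int f\,d\up{p}=\sup_{x,y}f(x,y)$ (upper bound is immediate; the lower bound follows by concentrating mass near a near-supremising pair). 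The nontrivial ingredient is a constraint qualification to exchange $\sup$ and $\inf$; this is exactly what R2 provides. Under R2.1, the strict-inequality witness furnishes a Slater point for all constraints, so the standard strong-duality theorem for generalised moment problems (e.g.\ Isii/Shapiro) applies and yields equality with attainment. Under R2.2, the constraints with $\lambda^{(i)}=0$ become equality constraints; the hypothesis that the $\Phi$-support is not contained in a proper affine subspace ensures these equalities are linearly independent in the sense that the constraint map is surjective onto the coordinates with $\lambda^{(i)}=0$, restoring the Slater-type condition for the remaining inequality constraints and again yielding strong duality with attainment of the $\inf$.

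The hard part is Step~3: the measure-theoretic setup has to be arranged so that the finite-dimensional multiplier space and the strict-feasibility witness provided by R2 actually verify the constraint qualification of the invoked strong-duality theorem, rather than merely a superficially similar statement; and one must verify that no primal attainment is needed since only the dual infimum appears in \eqref{duality}.
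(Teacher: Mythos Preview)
Your overall strategy matches the paper's: weak duality by direct computation, strong duality under R1 by finite-dimensional LP duality, and strong duality under R2 via a Fenchel/moment-problem framework with a constraint qualification furnished by R2. Your Step~1 is actually cleaner than the paper's route, which first sets up the full Fenchel machinery (primal space of signed Borel measures, linear map $A$, conjugates $f^*,g^*$) and only then reads off weak duality; your add-and-subtract argument gets \eqref{wduality} in one line without that overhead.

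The soft spot is your treatment of R2.2. You write that the affine-subspace hypothesis makes ``the constraint map surjective onto the coordinates with $\lambda^{(i)}=0$,'' and that this ``restor[es] the Slater-type condition.'' Surjectivity of the linear map $\up{p}\mapsto\mathbb{E}_{\up{p}}\{\Phi\}$ from \emph{signed} measures is not what is needed: the primal domain carries the nonnegativity and unit-mass constraints, so the relevant constraint qualification (in the Fenchel form $0\in\text{int}(\text{dom}\,g - A\,\text{dom}\,f)$, or equivalently Isii's interior condition for moment problems) requires that the witness expectation $\mathbb{E}_{\up{p}}\{\Phi\}$ lie in the \emph{interior of the convex hull of the support of $\Phi$}, so that one can perturb the equality-constrained coordinates while remaining inside the image of $\Delta(\set{X}\times\set{Y})$. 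The paper establishes exactly this via a separation argument: if $\mathbb{E}_{\up{p}}\{\Phi\}$ were on the boundary of $\text{Conv}\,S$, a supporting hyperplane would force $\B{\alpha}^{\text{T}}\Phi(x,y)=c$ almost surely, contradicting R2.2. That step---turning ``support not in a proper affine subspace'' into ``expectation is an interior point of the achievable moment set''---is the substantive content of R2.2 and is missing from your sketch; ``linear independence/surjectivity'' is the finite-dimensional analogue but does not by itself handle the positivity constraint on measures. Once you add this, the rest of your plan goes through, and the paper's explicit $\varepsilon$-ball construction is just one concrete way to package the same interior condition.
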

\begin{proof}

In the first step of the proof we show that the right hand side of \eqref{wduality} is equivalent to the Fenchel dual of the left hand side, then the second step of the proof shows that strong duality holds if  $\set{U}$ satisfies R1 or R2.

Let $0\leq r\leq m$ be the number of non-zero components of $\B{\lambda}$ (without loss of generality we assume such components are the $r$ first components of $\B{\lambda}$), and $M(\set{X}\times\set{Y})$ be the set of signed Borel measures over $\set{X}\times\set{Y}$ with bounded total variation. The set $M(\set{X}\times\set{Y})$ is a Banach space with the total variation norm (see e.g., Chapter 10 in \cite{ChaKim:94}).


If $A$ is the linear mapping
\begin{align}
\begin{array}{cccl}A:\hspace{-0.1cm}&M(\set{X}\times\set{Y})&\hspace{-0.1cm}\to&\hspace{-0.1cm}\mathbb{R}^{r+m+1}\\
\hspace{-0.1cm}&\up{p}&\hspace{-0.1cm}\mapsto&\hspace{-0.1cm}[\int\Phi_1(x,y)d\up{p}(x,y),-\int\Phi_1(x,y)d\up{p}(x,y),\int\Phi_2(x,y)d\up{p}(x,y),\int d\up{p}(x,y)]\end{array}
\end{align} 
where $\Phi_1(x,y)\in\mathbb{R}^r$ denotes the first $r$ components of $\Phi(x,y)$ and $\Phi_2(x,y)\in\mathbb{R}^{m-r}$ denotes the last $m-r$ components of $\Phi(x,y)$.
$A$ is bounded and its adjoint operator is
\begin{align}
\begin{array}{cccl}A^*:&\mathbb{R}^{r+m+1}&\to&F(\set{X}\times\set{Y})\subseteq(M(\set{X}\times\set{Y}))^*\\
&\B{\mu}_1,\B{\mu}_2,\B{\mu}_3,\nu&\mapsto&\Phi_1(\pun,\pun)^{\text{T}}(\B{\mu}_1-\B{\mu}_2)+\Phi_2(\pun,\pun)^{\text{T}}\B{\mu}_3+\nu.
\end{array}
\end{align} 
where $F(\set{X}\times\set{Y})$ is the set of bounded Borel measurable functions over $\set{X}\times\set{Y}$.

Then, we have that
\begin{align}\label{primal}\sup_{\up{p}\in\set{U}} \ell(\up{h},\up{p})=\sup_{\up{p}\in\set{U}} 1-\int \up{h}(y|x)d\up{p}(x,y)=1-\inf_{\up{p}\in M(\set{X}\times\set{Y})} f(\up{p})+g(A(\up{p}))\end{align}
where $f$ and $g$ are the lower semi-continuous convex functions
\begin{align}
\begin{array}{cccl} g:&\mathbb{R}^{r+m+1}&\to&\mathbb{R}\cup\{\infty\}\\
&(\V{a}_1,\V{a}_2,\V{a}_3,b)&\mapsto&\left\{\begin{array}{ccc}0&\mbox{if}&\V{a}_1\preceq\B{\tau}_1+\B{\lambda}_1,\  \V{a}_2\preceq-\B{\tau}_1+\B{\lambda}_1,\V{a}_3=\B{\tau}_2,b=1\\\infty&\mbox{otherwise}&\end{array}\right.
\end{array}
\end{align}
for $\B{\tau}_1,\B{\lambda}_1\in\mathbb{R}^r$ (resp. $\B{\tau}_2,\B{\lambda}_2\in\mathbb{R}^{m-r}$)  given by the first $r$ (resp. last $m-r$) components of $\B{\tau}$ and $\B{\lambda}$, and 
\begin{align}
\begin{array}{cccl} f:&M(\set{X}\times\set{Y})&\to&\mathbb{R}\cup\{\infty\}\\
&\up{p}&\mapsto&\left\{\begin{array}{ccc}\int \up{h}(y|x)d\up{p}(x,y)&\mbox{if}&\up{p}\mbox{ is nonnegative}\\\infty&\mbox{otherwise.}&\end{array}\right.
\end{array}
\end{align}

Then, the Fenchel dual (see e.g., \cite{BorZhu:04}) of \eqref{primal} is 
\begin{align}\label{dual1}1-\sup_{\B{\mu}_1,\B{\mu}_2,\B{\mu}_3,\nu}-f^*(A^*(\B{\mu}_1,\B{\mu}_2,\B{\mu}_3,\nu))-g^*(-\B{\mu}_1,-\B{\mu}_2,-\B{\mu}_3,-\nu)\end{align}
where $f^*$ and $g^*$ are the conjugate functions of $f$ and $g$. If $w\in F(\set{X}\times\set{Y})$, we have that 
\begin{align*}f^*(w)&=\sup_{\up{p}\succeq 0}\int (w(x,y)-\up{h}(y|x))d\up{p}(x,y)\\
&=\left\{\begin{array}{ccc}0&\mbox{if}&w(x,y)\leq\up{h}(y|x),\  \forall x,y\in\set{X}\times\set{Y}\\\infty&\mbox{otherwise}&\end{array}\right.
\end{align*}
and $g^*(-\B{\mu}_1,-\B{\mu}_2,-\B{\mu}_3,-\nu)$ is given by
\begin{align*}
&\begin{array}{cc}\sup&-\V{a}_1^{\text{T}}\B{\mu}_1-\V{a}_2^{\text{T}}\B{\mu}_2-\B{\tau}_2^{\text{T}}\B{\mu}_3-\nu\\
\mbox{s.t.}&\V{a}_1\preceq\B{\tau}_1+\B{\lambda}_1,\  \V{a}_2\preceq-\B{\tau}_1+\B{\lambda}_1\end{array}\\
&\hspace{1cm}=\left\{\begin{array}{ccc}-(\B{\tau}_1+\B{\lambda}_1)^{\text{T}}\B{\mu}_1+(\B{\tau}_1-\B{\lambda}_1)^{\text{T}}\B{\mu}_2-\B{\tau}_2^{\text{T}}\B{\mu}_3-\nu&\mbox{if}&-\B{\mu}_1\succeq\V{0},-\B{\mu}_2\succeq\V{0}\\\infty&\mbox{otherwise.}&\end{array}\right.
\end{align*}
Hence, the dual problem \eqref{dual1} becomes
\begin{align}&\begin{array}{cl}\underset{\B{\mu}_1,\B{\mu}_2,\B{\mu}_3,\nu}{\inf}& 1-(\B{\tau}_1+\B{\lambda}_1)^{\text{T}}\B{\mu}_1-(-\B{\tau}_1+\B{\lambda}_1)^{\text{T}}\B{\mu}_2-\B{\tau}_2^\text{T}\B{\mu}_3-\nu\\
\mbox{s.t.}&\Phi_1(x,y)^{\text{T}}(\B{\mu}_1-\B{\mu}_2)+\Phi_2(x,y)^{\text{T}}\B{\mu}_3+\nu\leq \up{h}(y|x),\  \forall (x,y)\in\set{X}\times\set{Y}\\
&-\B{\mu}_1\succeq\V{0},-\B{\mu}_2\succeq\V{0}\end{array}\nonumber\\\label{step0}
&\begin{array}{cl}=\underset{\B{\mu}_1,\B{\mu}_2,\B{\mu}_3,\nu}{\inf}& 1+(\B{\tau}_1+\B{\lambda}_1)^{\text{T}}\B{\mu}_1+(-\B{\tau}_1+\B{\lambda}_1)^{\text{T}}\B{\mu}_2-\B{\tau}_2\B{\mu}_3-\nu\\
\mbox{\hspace{0.5cm}s.t.}&\Phi_1(x,y)^{\text{T}}(\B{\mu}_2-\B{\mu}_1)+\Phi_2(x,y)^{\text{T}}\B{\mu}_3+\nu\leq \up{h}(y|x),\  \forall (x,y)\in\set{X}\times\set{Y}\\
&\B{\mu}_1\succeq\V{0},\B{\mu}_2\succeq\V{0}\end{array}\\\label{step1}
&\begin{array}{cl}=\underset{\B{\mu},\nu}{\inf}& 1-\B{\tau}^{\text{T}}\B{\mu}+\B{\lambda}^{\text{T}}|\B{\mu}|-\nu\\
\mbox{\hspace{0.4cm}s.t.}&\Phi(x,y)^{\text{T}}\B{\mu}+\nu\leq \up{h}(y|x),\  \forall (x,y)\in\set{X}\times\set{Y}\end{array}\\\label{step2}
&\begin{array}{cl}=\underset{\B{\mu}}{\inf}& 1-\B{\tau}^{\text{T}}\B{\mu}+\underset{(x,y)\in\set{X}\times\set{Y}}{\sup}\big\{\Phi(x,y)^{\text{T}}\B{\mu}-\up{h}(y|x)\big\}+\B{\lambda}^{\text{T}}|\B{\mu}|.\end{array}
\end{align}
The expression in \eqref{step1} is obtained taking  $\B{\mu}=[(\B{\mu}_2-\B{\mu}_1)^{\text{T}},\B{\mu}_3^{\text{T}}]^{\text{T}}$. Firstly,
in \eqref{step0} we can consider only pairs $\B{\mu}_1,\B{\mu}_2$ such that $\B{\mu}_1+\B{\mu}_2=|\B{\mu}_1-\B{\mu}_2|$ because for any pair $\B{\mu}_1,\B{\mu}_2$ feasible in \eqref{step0}, we have that $\tilde{\B{\mu}}_1=(\B{\mu}_2-\B{\mu}_1)_+$, $\tilde{\B{\mu}}_2=(\B{\mu}_1-\B{\mu}_2)_+$ is a feasible pair because $\tilde{\B{\mu}}_1-\tilde{\B{\mu}}_2=\B{\mu}_1-\B{\mu}_2$, and we also have that $\B{\lambda}_1^{\text{T}}|\tilde{\B{\mu}}_1-\tilde{\B{\mu}}_2|=\B{\lambda}_1^{\text{T}}(\tilde{\B{\mu}}_1+\tilde{\B{\mu}}_2)\leq \B{\lambda}_1^{\text{T}}(\B{\mu}_1+\B{\mu}_2)$. Then, we obtain \eqref{step1} from \eqref{step0} because 
\begin{align*}
\B{\tau}_1^{\text{T}}(\B{\mu}_2-\B{\mu}_1)+\B{\tau}_2^{\text{T}}\B{\mu}_3&=\B{\tau}^{\text{T}}\B{\mu}\\
\B{\lambda}_1^{\text{T}}|\B{\mu}_2-\B{\mu}_1|&=\B{\lambda}^{\text{T}}|\B{\mu}|,\mbox{ since }\B{\lambda}_2=\V{0}\\
\Phi_1(x,y)^{\text{T}}(\B{\mu}_2-\B{\mu}_1)+\Phi_2(x,y)^{\text{T}}\B{\mu}_3&=\Phi(x,y)^{\text{T}}\B{\mu}.\end{align*}

The expression in \eqref{step2} is obtained since for any feasible ($\B{\mu},\nu$) in \eqref{step1} we have that
\mbox{($\B{\mu}$, $\tilde{\nu}$)} is feasible if
$$\tilde{\nu}=\underset{(x,y)\in\set{X}\times\set{Y}}{\inf}\big\{\up{h}(y|x)-\Phi(x,y)^{\text{T}}\B{\mu}\big\}$$
and $\tilde{\nu}\geq\nu$. Then, the inequality in \eqref{wduality} follows by weak duality.

For the second step of the proof, if $\set{U}$ satisfies R1, we have that strong duality holds because the dual becomes the Lagrange dual and the constraints in \eqref{primal} are linear affine (see e.g., Chapter 5 in \cite{BoyVan:04}). If $\set{U}$ satisfies R2, we show in the following that strong duality holds because $\V{0}\in\text{int}(\text{dom}\, g-A\text{dom}\, f)$ (see e.g., Chapter 4 in \cite{BorZhu:04}),  where $\text{dom}$ denotes the set where an extended-valued function takes finite values, and $\text{int}$ denotes the interior of a set. 

If $\set{U}$ satisfies R2.1, there exists $R>0$ such that $R<\lambda^{(i)}-|\mathbb{E}_{\up{p}}\Phi^{(i)}(x,y)-\tau^{(i)}|$ for $i=1,2,\ldots,m$. Then, the second step of the proof is obtained by showing that if $0<\varepsilon<R/(R+1+\|\mathbb{E}_{\up{p}}\{\Phi\}\|_2)<1$, we have that the ball with radius $\varepsilon$ centered in $\V{0}\in\mathbb{R}^{r+m+1}$, $B(\V{0},\varepsilon)$ satisfies $B(\V{0},\varepsilon)\subset(\text{dom}\, g-A\text{dom}\, f)\subset\mathbb{R}^{r+m+1}$. 

We have that for any $\V{z}\in B(\V{0},\varepsilon)\subset\mathbb{R}^{2m+1}$, there exist $\B{\xi}_1,\B{\xi}_2\in B(\V{0},R)\subset\mathbb{R}^m$ such that 
\begin{align*}\big(z^{(1)},z^{(2)},\ldots,z^{(m)}\big)&=\mathbb{E}_{\up{p}}\{\Phi\}+\B{\xi}_1-(1-z^{(2m+1)})\mathbb{E}_{\up{p}}\{\Phi\}\\
\big(z^{(m+1)},z^{(m+2)},\ldots,z^{(2m)}\big)&=-\mathbb{E}_{\up{p}}\{\Phi\}+\B{\xi}_2+(1-z^{(2m+1)})\mathbb{E}_{\up{p}}\{\Phi\}\\
z^{(2m+1)}&=1-(1-z^{(2m+1)})\end{align*}
because we have that
$$\|\big(z^{(1)},z^{(2)},\ldots,z^{(m)}\big)-z^{(2m+1)}\mathbb{E}_{\up{p}}\{\Phi\}\|_2\leq\varepsilon(1+\|\mathbb{E}_{\up{p}}\{\Phi\}\|_2)<R$$
$$\|\big(z^{(m+1)},z^{(m+2)},\ldots,z^{(2m)}\big)+z^{(2m+1)}\mathbb{E}_{\up{p}}\{\Phi\}\|_2\leq\varepsilon(1+\|\mathbb{E}_{\up{p}}\{\Phi\}\|_2)<R.$$

Then, the result is obtained observing that $$\big(\mathbb{E}_{\up{p}}\{\Phi\}+\B{\xi}_1,-\mathbb{E}_{\up{p}}\{\Phi\}+\B{\xi}_2,1\big)\in\text{dom}\, g$$ because $R<\lambda^{(i)}-|\mathbb{E}_{\up{p}}\Phi^{(i)}(x,y)-\tau^{(i)}|$ for $i=1,2,\ldots,m$, and $$\big((1-z^{(2m+1)})\mathbb{E}_{\up{p}}\{\Phi\},-(1-z^{(2m+1)})\mathbb{E}_{\up{p}}\{\Phi\},(1-z^{(2m+1)})\big)\in A\text{dom}\, f$$ because $|z^{(2m+1)}|\leq \varepsilon< 1$ and hence $(1-z^{(2m+1)})\up{p}$ is a nonnegative measure.

If $\set{U}$ satisfies R2.2,  we have that $\mathbb{E}_{\up{p}}\{\Phi\}\in\text{int}(\text{Conv}\, S)$, where $S$ denotes the support of $\Phi(x,y)$ if $(x,y)\sim\up{p}$, and $\text{Conv}$ denotes the convex hull of a set. Firstly, $\text{int}(\text{Conv}\, S)\neq\emptyset$ because $S$ and hence $(\text{Conv}\, S)\supseteq S$ are not contained in a proper affine subspace. In the case that $\mathbb{E}_{\up{p}}\{\Phi\}\in \overline{\text{Conv}\, S}\setminus\text{Conv}\,S$, using the Hanh-Banach separation theorem (see e.g., Theorem 2, Sec 5.12 in \cite{Lue:97}), there would exists a hyperplane \mbox{$\Gamma=\{\V{z}; \B{\alpha}^{\text{T}}\V{z}=c\}\subset\mathbb{R}^m$} such that $\mathbb{E}_{\up{p}}\{\Phi\}\in\Gamma$ and $\B{\alpha}^{\text{T}}\V{z}\geq c$ for all $\V{z}\in\text{Conv}\,S$. Then, the real-valued random variable $\B{\alpha}^{\text{T}}\Phi(x,y)-c$ with $(x,y)\sim \up{p}$ is nonnegative with probability one and has expectation zero. Therefore, $\B{\alpha}^{\text{T}}\Phi(x,y)=c$ with probability one, and we would have that the $S$ is contained in the hyperplane $\Gamma$, which leads to a contradiction.

Let $R>0$ be such that 
\begin{enumerate}
\item $R<\lambda^{(i)}-|\mathbb{E}_{\up{p}}\Phi^{(i)}(x,y)-\tau^{(i)}|$ for $i=1,2,\ldots,r$, and 
\item for any $\B{\xi}$ in the Euclidean ball of $\mathbb{R}^m$ centered at $\V{0}$ with radius $R$, $B(\V{0},R)$, we have that
$\mathbb{E}_{\up{p}}\{\Phi\}+\B{\xi}\in\text{Conv}\, S$, that is, there exists $\up{p}_\xi\in\Delta(\set{X}\times\set{Y})$ such that $\mathbb{E}_{\up{p}_\xi}\{\Phi\}=\mathbb{E}_{\up{p}}\{\Phi\}+\B{\xi}$.
\end{enumerate}

As in the previous case, the second step of the proof is obtained by showing that if $0<\varepsilon<R/(R+1+\|\mathbb{E}_{\up{p}}\{\Phi\}\|_2)<1$, we have that $B(\V{0},\varepsilon)\subset(\text{dom}\, g-A\text{dom}\, f)\subset\mathbb{R}^{r+m+1}$. 

If $r=0$, we have that for any $\V{z}\in B(\V{0},\varepsilon)\subset\mathbb{R}^{m+1}$, there exist $\B{\xi}_3\in B(\V{0},R)$ such that 
\begin{align*}\big(z^{(1)},z^{(2)},\ldots,z^{(m)}\big)&=\mathbb{E}_{\up{p}}\{\Phi\}-(1-z^{(m+1)})(\mathbb{E}_{\up{p}}\{\Phi\}+\B{\xi}_3)\\
z^{(m+1)}&=1-(1-z^{(m+1)})\end{align*}
because we have that
$$\frac{\|-\big(z^{(1)},z^{(2)},\ldots,z^{(m)}\big)+z^{(m+1)}\mathbb{E}_{\up{p}}\{\Phi\}\|_2}{1-z^{m+1}}\leq\frac{\varepsilon}{1-\varepsilon}(1+\|\mathbb{E}_{\up{p}}\{\Phi\}\|_2)<R.$$
Then, the result is obtained observing that $$\big(\mathbb{E}_{\up{p}}\{\Phi\},1\big)\in\text{dom}\, g$$ and $$\big((1-z^{(m+1)})(\mathbb{E}_{\up{p}}\{\Phi\}+\B{\xi}_3),(1-z^{(m+1)})\big)\in A\text{dom}\, f$$ because $|z^{(m+1)}|\leq \varepsilon< 1$ and hence $(1-z^{(m+1)})\up{p}_{\B{\xi}_3}$ is a nonnegative measure, for $\up{p}_{\B{\xi}_3}$ satisfying $\mathbb{E}_{\up{p}_{\B{\xi}_3}}\{\Phi\}= \mathbb{E}_{\up{p}}\{\Phi\}+\B{\xi}_3$.

If $0<r<m$, we have that for any $\V{z}\in B(\V{0},\varepsilon)\subset\mathbb{R}^{r+m+1}$, there exist \mbox{$\B{\xi}_1,\B{\xi}_2\in B(\V{0},R)\subset\mathbb{R}^{r}$}, and $\B{\xi}_3\in B(\V{0},R)\subset{R}^{m-r}$ such that 
\begin{align*}\big(z^{(1)},z^{(2)},\ldots,z^{(r)}\big)&=\mathbb{E}_{\up{p}}\{\Phi_1\}+\B{\xi}_1-(1-z^{(r+m+1)})\mathbb{E}_{\up{p}}\{\Phi_1\}\\
\big(z^{(r+1)},z^{(2)},\ldots,z^{(2r)}\big)&=-\mathbb{E}_{\up{p}}\{\Phi_1\}+\B{\xi}_2+(1-z^{(r+m+1)})\mathbb{E}_{\up{p}}\{\Phi_1\}\\
\big(z^{(2r+1)},z^{(2r+2)},\ldots,z^{(r+m)}\big)&=\mathbb{E}_{\up{p}}\{\Phi_2\}-(1-z^{(r+m+1)})(\mathbb{E}_{\up{p}}\{\Phi_2\}+\B{\xi}_{3})\\
z^{(r+m+1)}&=1-(1-z^{(r+m+1)})\end{align*}
because we have that
$$\|\big(z^{(1)},z^{(2)},\ldots,z^{(m)}\big)-z^{(r+m+1)}\mathbb{E}_{\up{p}}\{\Phi_1\}\|_2\leq\varepsilon(1+\|\mathbb{E}_{\up{p}}\{\Phi\}\|_2)<R$$
$$\|\big(z^{(r+1)},z^{(r+2)},\ldots,z^{(2r)}\big)+z^{(r+m+1)}\mathbb{E}_{\up{p}}\{\Phi_1\}\|_2\leq\varepsilon(1+\|\mathbb{E}_{\up{p}}\{\Phi\}\|_2)<R$$
$$\frac{\|-\big(z^{(2r+1)},z^{(2r+2)},\ldots,z^{(r+m)}\big)+z^{(r+m+1)}\mathbb{E}_{\up{p}}\{\Phi_2\}\|_2}{1-z^{r+m+1}}\leq\frac{\varepsilon}{1-\varepsilon}(1+\|\mathbb{E}_{\up{p}}\{\Phi\}\|_2)<R.$$

Then, the result is obtained observing that $$\big(\mathbb{E}_{\up{p}}\{\Phi_1\}+\B{\xi}_1,-\mathbb{E}_{\up{p}}\{\Phi_1\}+\B{\xi}_2,\mathbb{E}_{\up{p}}\{\Phi_2\},1\big)\in\text{dom}\, g$$ because $R<\lambda^{(i)}-|\mathbb{E}_{\up{p}}\Phi^{(i)}(x,y)-\tau^{(i)}|$ for $i=1,2,\ldots,r$, and 
$$(1-z^{(r+m+1)})\big(\mathbb{E}_{\up{p}}\{\Phi_1\},-\mathbb{E}_{\up{p}}\{\Phi_1\},\mathbb{E}_{\up{p}}\{\Phi_2\}+\B{\xi}_{3},1\big)\in A\text{dom}\, f$$ because $|z^{(r+m+1)}|\leq \varepsilon< 1$ and hence $(1-z^{(r+m+1)})\up{p}_{\B{\xi}_3}$ is a nonnegative measure, for $\up{p}_{\B{\xi}_3}$ satisfying $\mathbb{E}_{\up{p}_{\B{\xi}_3}}\{\Phi_1\}= \mathbb{E}_{\up{p}}\{\Phi_1\}$ and $\mathbb{E}_{\up{p}_{\B{\xi}_3}}\{\Phi_2\}= \mathbb{E}_{\up{p}}\{\Phi_2\}+\B{\xi}_3$ that exists because $\|(\V{0}^{\text{T}},\B{\xi}_3^{\text{T}})^{\text{T}}\|_2\leq R$. 

Finally, since strong duality holds and $\set{U}$ is not empty we have that the optimal value in \eqref{duality} is finite and hence the optimal in the dual is attained \citep{BorZhu:04} and the `$\inf$' in \eqref{wduality} becomes `$\min$'.

\end{proof}

\section{Proof of Theorem~\ref{th1}}\label{apd:proof_th_1}
\begin{proof}
Using Lemma~\ref{lemma} we have that
\begin{align*}\inf_{\up{h}\in \text{T}(\set{X},\set{Y})}\sup_{\up{p}\in\set{U}}\ell(\up{h},\up{p})&=\inf_{\up{h},\B{\mu}}\  1-\B{\tau}^{\text{T}}\B{\mu}+\B{\lambda}^{\text{T}}|\B{\mu}|+\sup_{x\in\set{X},y\in\set{Y}}\{\Phi(x,y)^{\text{T}}\B{\mu}-\up{h}(y|x)\}\\
&=\min_{\B{\mu}}\  1-\B{\tau}^{\text{T}}\B{\mu}+\B{\lambda}^{\text{T}}|\B{\mu}|+\inf_{\up{h}}\sup_{x\in\set{X},y\in\set{Y}}\{\Phi(x,y)^{\text{T}}\B{\mu}-\up{h}(y|x)\}
\end{align*}
and 
\begin{align*}
\begin{array}{ccl}\underset{\up{h}}{\inf}\underset{x\in\set{X},y\in\set{Y}}{\sup}\{\Phi(x,y)^{\text{T}}\B{\mu}-\up{h}(y|x)\}=&\underset{\up{h},\nu}{\inf}&\nu\\&\mbox{s.t.}& \Phi(x,y)^{\text{T}}\B{\mu}-\up{h}(y|x)\leq\nu,\  \forall x\in\set{X},y\in\set{Y}.\end{array}
\end{align*}
Then, the result is obtained because \begin{align*}\Phi(x,y)^{\text{T}}\B{\mu}-\up{h}(y|x)\leq\nu, \forall x\in\set{X}, y\in\set{Y}&\Rightarrow \up{h}(y|x)\geq \Phi(x,y)^{\text{T}}\B{\mu}-\nu, \forall x\in\set{X},y\in\set{Y}\\&\Rightarrow \sum_{y\in\set{C}}(\Phi(x,y)^{\text{T}}\B{\mu}-\nu)\leq 1,\  \forall x\in\set{X},\set{C}\subseteq\set{Y}\\
&\Rightarrow \nu\geq \frac{\sum_{y\in\set{C}}\Phi(x,y)^{\text{T}}\B{\mu}-1}{|\set{C}|},\  \forall x\in\set{X},\set{C}\subseteq\set{Y}\\
&\Rightarrow \nu\geq\varphi(\B{\mu})
\end{align*}
with $\varphi(\B{\mu})$ given by \eqref{varphi}. For each $\B{\mu}$, there exist 
classification rules $\up{h}$ satisfying $$\up{h}(y|x)\geq \Phi(x,y)^{\text{T}}\B{\mu}-\varphi(\mu),\  \forall x\in\set{X}, y\in\set{Y}$$
by definition of $\varphi(\mu)$. Hence, such classification rules are solution of 
$$\inf_{\up{h}}\sup_{x\in\set{X},y\in\set{Y}}\{\Phi(x,y)^{\text{T}}\B{\mu}-\up{h}(y|x)\}$$ with optimal value $\varphi(\B{\mu})$.
\end{proof}

\section{Proof of Theorem~\ref{th_empirical}}\label{apd:proof_th_1_1}
\begin{proof}
The result can be proven analogously of that in Theorem~\ref{th1} shown in Appendix~\ref{apd:proof_th_1}. Firstly, for each $\up{h}\in\text{T}(\set{X},\set{Y})$, we have that 

\begin{align}\label{opt11}\begin{array}{ccl}\sup_{\up{p}\in\set{V}} \ell(\up{h},\up{p})=1-&\underset{\V{p}}{\min} &\V{h}^{\text{T}}\V{p}+I_+(\V{p})\\
&\mbox{s.t.}&\sum_{y\in\set{Y}}\up{p}(x_i,y)=\frac{1}{n},\  i=1,2,\ldots,n\\
&&\B{\tau}-\B{\lambda}\preceq\B{\Phi}^{\text{T}}\V{p}\preceq\B{\tau}+\B{\lambda}
\end{array}\end{align}
 where $\V{p}$, $\V{h}$, and $\B{\Phi}$ denote the vectors and matrix with rows $\up{p}(x_i,y)$, $\up{h}(y|x_i)$ and $\Phi(x_i,y)^{\text{T}}$, respectively, for $y\in\set{Y}$, $i=1,2,\ldots,n$, and
$$I_+(\V{p})=\left\{\begin{array}{cc}0&\mbox{if }\V{p}\succeq\V{0}\\\infty&\mbox{otherwise.}\end{array}\right.$$
Optimization problem \eqref{opt11} has Fenchel (Lagrange) dual
$$\begin{array}{ccl}1-&\underset{\B{\mu}_1,\B{\mu}_2,\B{\nu}}{\max} &\big(\B{\tau}-\B{\lambda}\big)^{\text{T}}\B{\mu}_1
-\big(\B{\tau}+\B{\lambda}\big)^{\text{T}}\B{\mu}_2-\frac{1}{n}\sum_{i=1}^n\nu^{(i)} -f^*(\B{\Phi}(\B{\mu}_1-\B{\mu}_2)-\widetilde{\B{\nu}})\\&
\mbox{s.t.}&\B{\mu}_1,\B{\mu}_2\succeq\V{0}\end{array}$$
where $\widetilde{\B{\nu}}$ is the vector in $\mathbb{R}^{n|\set{Y}|}$ with component corresponding with $(x_i,y)$ for $i=1,2,\ldots,n$, $y\in\set{Y}$ given by $\nu^{(i)}$, and $f^*$ is the conjugate function of $f(\V{p})=\V{h}^{\text{T}}\V{p}+I_+(\V{p})$ given by
$$f^*(\V{w})=\sup_{\V{p}\succeq\V{0}}\V{w}^{\text{T}}\V{p}-\V{h}^{\text{T}}\V{p}=\left\{\begin{array}{cc}0&\mbox{if }\V{w}\preceq\V{h}\\\infty&\mbox{otherwise.}\end{array}\right.$$
Therefore, the Lagrange dual above becomes
$$\begin{array}{ccl}1-&\underset{\B{\mu}_1,\B{\mu}_2,\B{\nu}}{\max} &\big(\B{\tau}-\B{\lambda}\big)^{\text{T}}\B{\mu}_1
-\big(\B{\tau}+\B{\lambda}\big)^{\text{T}}\B{\mu}_2-\frac{1}{n}\sum_{i=1}^n\nu^{(i)}\\&
\mbox{s.t.}&\B{\mu}_1,\B{\mu}_2\succeq\V{0}\\&
&\Phi(x_i,y)^{\text{T}}(\B{\mu}_1-\B{\mu}_2)-\nu^{(i)}\leq\up{h}(y|x_i),\  \forall y\in\set{Y},i=1,2,\ldots,n.\end{array}$$
It is easy to see that the solution of such optimization problem $\bar{\B{\mu}}_1,\bar{\B{\mu}}_2$ satisfies that $\mbox{$\bar{\mu}_{1}^{(i)}\bar{\mu}_{2}^{(i)}=0$}$ for any $i$ such that $\lambda_i>0$. Then $\B{\lambda}^{\text{T}}(\bar{\B{\mu}}_1+\bar{\B{\mu}}_2)=\B{\lambda}^{\text{T}}|\bar{\B{\mu}}_1-\bar{\B{\mu}}_2|$ and taking $\B{\mu}=\B{\mu}_1-\B{\mu}_2$ the Lagrange dual above is equivalent to
$$\begin{array}{ccl}1-&\underset{\B{\mu},\B{\nu}}{\max} &\B{\tau}^{\text{T}}\B{\mu}
-\B{\lambda}^{\text{T}}|\B{\mu}|-\frac{1}{n}\sum_{i=1}^n\nu^{(i)}\\&
&\Phi(x_i,y)^{\text{T}}\B{\mu}-\nu^{(i)}\leq\up{h}(y|x_i),\  \forall y\in\set{Y},i=1,2,\ldots,n\end{array}$$
that has the same value as $\sup_{\up{p}\in\set{V}} \ell(\up{h},\up{p})$ since the constraints in \eqref{opt11} are affine and $\set{V}$ is non-empty. 

Therefore, 
\begin{align*}\inf_{\up{h}\in\text{T}(\set{X},\set{Y})}\sup_{\up{p}\in\set{V}} \ell(\up{h},\up{p})=\inf_{\up{h},\B{\mu},\B{\nu}}&\  1-\B{\tau}^{\text{T}}\B{\mu}
+\B{\lambda}^{\text{T}}|\B{\mu}|+\frac{1}{n}\sum_{i=1}^n\nu^{(i)}\\
&\Phi(x_i,y)^{\text{T}}\B{\mu}-\nu^{(i)}\leq\up{h}(y|x_i),\  \forall y\in\set{Y},i=1,2,\ldots,n\end{align*}
and, similarly to the proof for Theorem~\ref{th1}, we have that
\begin{align*}\Phi(x_i,y)^{\text{T}}\B{\mu}&-\nu^{(i)}\leq\up{h}(y|x_i),\  \forall y\in\set{Y},i=1,2,\ldots,n\\
&\Rightarrow\sum_{y\in\set{C}}\Phi(x_i,y)^{\text{T}}\B{\mu}-\nu^{(i)}\leq 1,\  \forall \set{C}\subseteq\set{Y}, i=1,2,\ldots,n\\
&\Rightarrow \nu^{(i)}\geq \frac{\sum_{y\in\set{C}}\Phi(x_i,y)^{\text{T}}\B{\mu}-1}{|\set{C}|}, \  \forall \set{C}\subseteq\set{Y}, i=1,2,\ldots,n\\
&\Rightarrow \nu^{(i)}\geq\upvarphi(\B{\mu},x_i), \  \forall i=1,2,\ldots,n.
\end{align*}
Therefore, for each $\B{\mu}$, we have that any classification rule satisfying 
$$\up{h}(y|x)\geq \Phi(x,y)^{\text{T}}\B{\mu}-\upvarphi(\B{\mu},x),\  \forall x\in\set{X},y\in\set{Y}$$
is solution of 
\begin{align*}\inf_{\up{h},\B{\nu}}&\  \frac{1}{n}\sum_{i=1}^n\nu^{(i)}&\\
&\  \Phi(x_i,y)^{\text{T}}\B{\mu}-\nu^{(i)}\leq\up{h}(y|x_i),\  \forall y\in\set{Y},i=1,2,\ldots,n&\end{align*}
that has optimal value $\frac{1}{n}\sum_{i=1}^n\upvarphi(\B{\mu},x_i)$. Then, the result is obtained because for any $x\in\set{X}$, we have that
$$\sum_{y\in\set{Y}}\big(\Phi(x,y)^{\text{T}}\B{\mu}-\upvarphi(\B{\mu},x)\big)_+=1$$
because otherwise there would exist $\nu_x<\upvarphi(\B{\mu},x)$ such that
$$1=\sum_{y\in\set{Y}}\big(\Phi(x,y)^{\text{T}}\B{\mu}-\nu_x\big)_+=\max_{\set{C}\subseteq\set{Y}}\sum_{y\in\set{C}}\Phi(x,y)^{\text{T}}\B{\mu}-\nu_x$$
which contradicts the definition of $\upvarphi(\B{\mu},x)$.
\end{proof}


\section{Proof of Theorem~\ref{th2}}\label{apd:proof_th_2}
\begin{proof} The first result is a direct consequence of Hoeffding's inequality and the union bound since each component of $\Phi$ is bounded by its corresponding scalar feature. Similarly, the second result is a consequence of the empirical Bernstein inequality in \cite{MauPon:09}.

For the last result, we have that for each $j\in\set{Y}$
\begin{align*}&\left |\frac{1}{n}\sum_{i=1}^n\psi(x_{i})\mathbb{I}\{y_i=j\}-\mathbb{E}\big\{\psi(x)\mathbb{I}\{y=j\}\big\}\right |\\
&=\left |\frac{n_j}{n}\frac{1}{n_j}\sum_{i=1}^n\psi(x_{i})\mathbb{I}\{y_i=j\}-\up{p}^*(y=j)\mathbb{E}\{\psi(x|y=j)\}\right |\\
&=\left |\frac{n_j}{n}\left(\frac{1}{n_j}\sum_{i=1}^n\psi(x_{i})\mathbb{I}\{y_i=j\}-\mathbb{E}\{\psi(x|y=j)\}\right)+\mathbb{E}\{\psi(x|y=j)\}\left(\frac{n_j}{n}-\up{p}^*(y=j)\right)\right |\\
&\leq\frac{n_j}{n}\left |\frac{1}{n_j}\sum_{i=1}^n\psi(x_{i})\mathbb{I}\{y_i=j\}-\mathbb{E}\{\psi(x|y=j)\}\right|+C\left|\frac{n_j}{n}-\up{p}^*(y=j)\right|\end{align*}
For the first term, we have that with probability at least $1-\delta$
$$\left |\frac{1}{n_j}\sum_{i=1}^n\psi(x_{i})\mathbb{I}\{y_i=j\}-\mathbb{E}\{\psi(x|y=j)\}\right|\leq 2\set{R}_{n_j}(\set{F})+\frac{2C\sqrt{\log 2/\delta}}{\sqrt{2n_j}}$$
using the uniform concentration bound in terms of Rademacher complexities (see e.g., \cite{MehRos:18}).
For the second term, we have that with probability at least $1-\delta$
$$\left|\frac{n_j}{n}-\up{p}^*_y(y=j)\right|\leq \frac{\sqrt{\log 2/\delta}}{\sqrt{2n}}$$
using Hoeffding's inequality. Therefore, the result is obtained using the union bound.
\end{proof}

\section{Proof of Theorem~\ref{th-bounds}}\label{apd:proof_th_3}
\begin{proof}
The result for $\overline{R}(\set{U},\up{h})$ is a direct consequence of Lemma~\ref{lemma}, and the second result for $\underline{R}(\set{U},\up{h})$ is obtained analogously since
\begin{align*}\inf_{\up{p}\in\set{U}}\ell(\up{h},\up{p})&=-\sup_{\up{p}\in\set{U}}-1-\int\big(-\up{h}(y|x)\big)\text{d}\up{p}(x,y)\\
&\geq-\inf_{\B{\mu}}-1-\B{\tau}^{\text{T}}\B{\mu}+\B{\lambda}^{\text{T}}|\B{\mu}|+\sup_{x\in\set{X},y\in\set{Y}}\{\Phi(x,y)^{\text{T}}\B{\mu}+\up{h}(y|x)\}\end{align*}
that leads to the expression in \eqref{opt-lower} changing the notation for the variable in the optimization from $\B{\mu}$ to $-\B{\mu}$. 
\end{proof}

\section{Proof of Theorem~\ref{th_6}}\label{apd:proof_th_6}
\begin{proof}
Let $\set{U}_\infty$ be the uncertainty set in \eqref{unc-inf}. It is clear that $\up{p}^*\in\set{U}_\infty$,  then using Theorem~\ref{th-bounds} 
\begin{align}R(\up{h}^{\sset{U}})&\leq \overline{R}(\set{U}_\infty,\up{h}^{\sset{U}})=\underset{\mu}{\min}\  1-\B{\tau}_\infty^{\text{T}}\B{\mu}+\sup_{x\in\set{X}, y\in\set{Y}}\{\Phi(x,y)^{\text{T}}\B{\mu}-\up{h}^{\sset{U}}(y|x)\}\nonumber\\
&\leq 1-\B{\tau}_\infty^{\text{T}}\B{\mu}^*+\sup_{x\in\set{X}, y\in\set{Y}}\{\Phi(x,y)^{\text{T}}\B{\mu}^*-\up{h}^{\sset{U}}(y|x)\}\nonumber\\\label{step}
&\leq 1-\B{\tau}_\infty^{\text{T}}\B{\mu}^*+\underset{x\in\set{X},\set{C}\subseteq\set{Y}}{\sup}\frac{\sum_{y\in\set{C}}\Phi(x,y)^{\text{T}}\B{\mu}^*-1}{|\set{C}|}\\
&=\overline{R}(\set{U})+(\B{\tau}-\B{\tau}_\infty)^{\text{T}}\B{\mu}^*-\B{\lambda}^{\text{T}}|\B{\mu}^*|\nonumber
\end{align}
where \eqref{step} is due to the definition of $\up{h}^{\sset{U}}$.

Analogously,
\begin{align}R(\up{h}^{\sset{U}})&\geq \underline{R}(\set{U}_\infty,\up{h}^{\sset{U}})=\underset{\mu}{\max}\  1-\B{\tau}_\infty^{\text{T}}\B{\mu}+\inf_{x\in\set{X}, y\in\set{Y}}\{\Phi(x,y)^{\text{T}}\B{\mu}-\up{h}^{\sset{U}}(y|x)\}\nonumber\\
&\geq 1-\B{\tau}_\infty^{\text{T}}\underline{\B{\mu}}+\inf_{x\in\set{X}, y\in\set{Y}}\{\Phi(x,y)^{\text{T}}\underline{\B{\mu}}-\up{h}^{\sset{U}}(y|x)\}\nonumber\\
&=\underline{R}(\set{U})+(\B{\tau}-\B{\tau}_\infty)^{\text{T}}\underline{\B{\mu}}+\B{\lambda}^{\text{T}}|\underline{\B{\mu}}|\nonumber.
\end{align}

For inequality \eqref{bound4}, note that from \eqref{step} and using the definition of $\B{\mu}^*$ we have that 
\begin{align}
R(\up{h}^{\sset{U}})&\leq  1-\B{\tau}^{\text{T}}\B{\mu}_\infty+\underset{x\in\set{X},\set{C}\subseteq\set{Y}}{\sup}\frac{\sum_{y\in\set{C}}\Phi(x,y)^{\text{T}}\B{\mu}_\infty-1}{|\set{C}|}+\B{\lambda}^{\text{T}}|\B{\mu}_\infty|-\B{\lambda}^{\text{T}}|\B{\mu}^*|\nonumber+(\B{\tau}-\B{\tau}_\infty)\B{\mu}^*\\
&=R_\Phi+(\B{\tau}_\infty-\B{\tau})^{\text{T}}(\B{\mu}_\infty-\B{\mu}^*)+\B{\lambda}^{\text{T}}(|\B{\mu}_\infty|-|\B{\mu}^*|)\nonumber
\end{align}

Finally, the results in \eqref{bound1} and \eqref{bound2} are obtained analogusly as the previous result using $\set{U}$ instead of $\set{U}_\infty$. 
\end{proof}

\section{Proof of Theorem~\ref{th_7}}\label{apd:proof_th_7}

\begin{proof}
In the first step of the proof we show that if $\set{V}^n$ is the uncertainty set
$$\set{V}^n=\{\up{p}\in\Delta(\set{X}\times\set{Y}):\  \B{\tau}^n_{\infty}-\B{\lambda}_n\preceq\mathbb{E}_{\up{p}}\{\Phi_n\}\preceq\B{\tau}^n_{\infty}+\B{\lambda}_n\}$$
with $\B{\tau}_\infty^n=\mathbb{E}_{\up{p}^*}\{\Phi_n\}$ and $\B{\lambda}_n$ satisfying condition (4) in the theorem's statement, then,
we have that $\{R(\set{V}^n)\}$ tends to $R_{\text{Bayes}}$ with probability one for any underlying distribution $\up{p}^*$. In the second step of the proof we obtain the result using Theorem~\ref{th_6} and the Borel-Cantelli Lemma.

For the first step, we have that $R(\set{V}^n)$ for $n=1,2,\ldots$ is a non-increasing sequence because for $n=1,2,\ldots$ the sequence $D_n$ is non-decreasing and any component of $\B{\lambda}_n$ is non-increasing. In addition, $R(\set{V}^n)$ for $n=1,2,\ldots$ is lower bounded by $R_{\text{Bayes}}$ since $\up{p}^*\in\set{V}^n$ for any $n$. Then, the first step in the proof is obtained  showing that $R_{\text{Bayes}}$ is the largest lower bound and using the monotone convergence theorem. Specifically,  in case there exists $L\in\mathbb{R}$ such that for all $n$
$$R(\set{V}^n)\geq L>R_{\text{Bayes}}=R(\{\up{p}^*\})$$
then, it would exist a distribution $\up{p}'\neq\up{p}^*$ with $\up{p}'\in\set{V}^n$ for all $n$ because $\set{V}^{n_1}\subseteq \set{V}^{n_2}$ if $n_1\geq n_2$. Since $\up{p}',\up{p}^*\in\set{V}^n$ for all $n$, we have that
\begin{align}\label{in}-2\B{\lambda}_n\preceq\mathbb{E}_{\up{p}^*}\{\Phi_n\}-\mathbb{E}_{\up{p}'}\{\Phi_n\}\preceq2\B{\lambda}_n\end{align}
In particular, for all $n$
$$\|\mathbb{E}_{\up{p}^*}\V{e}_y- \mathbb{E}_{\up{p}'}\V{e}_y\|_\infty\leq 2\|\B{\lambda}_n\|_\infty$$
so that $\up{p}^*_y=\up{p}'_y$. Using again \eqref{in} and the definition of $\Phi_n$, denoting $\Psi_n(x)=[\psi_{v_1}(x),\psi_{v_2}(x),\ldots,\psi_{v_{D_n}}(x)]^{\text{T}}$ we get that
$$\frac{1}{D_n}\|\mathbb{E}_{\up{p}^*}\{\V{e}_y\otimes \Psi_n\}-\mathbb{E}_{\up{p}'}\{\V{e}_y\otimes \Psi_n\}\|_2^2\leq 4|\set{Y}|\|\B{\lambda}_n\|_\infty^2\to_{n\to\infty}0$$
because any component of $\B{\lambda}_n$ tends to $0$.

Therefore, for all $y\in\set{Y}$ we have that 
$$\frac{1}{D_n}\Big\|\int_{x\in\set{X}}\Psi_n(x)\text{d}\up{p}^*(x,y)-\int_{x\in\set{X}}\Psi_n(x)\text{d}\up{p}'(x,y)\Big\|_2^2\to_{n\to\infty} 0$$
so that for $j\in\set{Y}$ such that $\up{p}^*_y(j)\neq 0$
\begin{align*}&\frac{1}{D_n}\Big\|\int_{x\in\set{X}}\Psi_n(x)\text{d}\up{p}^*_{x|y=j}-\int_{x\in\set{X}}\Psi_n(x)\text{d}\up{p}'_{x|y=j}\Big\|_2^2\to_{n\to\infty} 0\\
\Rightarrow&\int_{x\in\set{X},x'\in\set{X}}\frac{\Psi_n(x)^{\text{T}}\Psi_n(x')}{D_n}\text{d}(\up{p}^*_{x|y=j}-\up{p}'_{x|y=j})\text{d}(\up{p}^*_{x'|y=j}-\up{p}'_{x'|y=j})\to_{n\to\infty} 0.
\end{align*}
Using the law of large numbers we have that with probability one 
\begin{align}
\frac{\Psi_n(x)^{\text{T}}\Psi_n(x')}{D_n}&\underset{n\to\infty}{\to}k(x,x')\nonumber\\
\Rightarrow
&\int_{x\in\set{X},x'\in\set{X}}k(x,x')\text{d}(\up{p}^*_{x|y=j}-\up{p}'_{x|y=j})\text{d}(\up{p}^*_{x'|y=j}-\up{p}'_{x'|y=j})= 0\nonumber\\
\Rightarrow&\left\|\int k(x,\cdot)\text{d}\up{p}^*_{x|y=j}-\int k(x,\cdot)\text{d}\up{p}'_{x|y=j}\right\|_\set{H}=0\label{aux}\end{align}
for $\set{H}$ the \ac{RKHS} given by kernel $k$. 
But equality \eqref{aux} contradicts $\up{p}^*\neq\up{p}$ because $k$ is a characteristic kernel and $\up{p}^*_y=\up{p}'_y$  for all $y\in\set{Y}$.


As a consequence of the previous result, we also get that $R_{\Phi_n}$ converges with probability one to $R_{\text{Bayes}}$ since the smallest minimax risk satisfies $R_{\Phi_n}=R(\set{U}^n_\infty)$
with  
$$\set{U}_{\infty}^n=\{\up{p}\in\Delta(\set{X}\times\set{Y}):\  \mathbb{E}_{\up{p}}\{\Phi_n(x,y)\}=\B{\tau}_{\infty}^n\}$$ that coincides with $\set{V}^n$ above taking $\B{\lambda}_n=\V{0}$ for all $n$.

For the second step, if $\B{\lambda}_n$ and $D_n$ satisfy the two additional conditions in the theorem's statement, let $N_0$ be an integer such that any component of $\B{\lambda}_n$ is larger than 
$$C\sqrt{\frac{2\log (|\set{Y}|(D_n+1)) +2\log 2n^2}{n}}$$ for any $n\geq N_0$. Such $N_0$ exists since any component of $\B{\lambda}_n\sqrt{n/\log n}$ tends to $\infty$ and $D_n=\set{O}(n^k)$ for some $k>0$. Then, for $n\geq N_0$ we have that $\up{p}^*\in\set{U}_n$ with probability at least $1-1/n^2$ because using Hoeffding's inequality we have that
$$\|\B{\tau}_n^\infty-\B{\tau}_n\|_\infty\leq C\sqrt{\frac{2\log(|\set{Y}|(D_n+1))+2\log(2/(1/n^2))}{n}}$$ 
with probability at least $1-1/n^2$. Therefore, since $\set{V}^n$ satisfies R2.1, using Lemma~\ref{lemma} we have that with probability at least $1-1/n^2$ 
\begin{align*}R(\up{h}_n)&\leq R(\set{U}_n)\leq\inf_{\B{\mu}}1-\B{\tau}_n^{\text{T}}\B{\mu}+\varphi(\B{\mu})+\B{\lambda}_n^{\text{T}}|\B{\mu}|\\
&\leq 1-\B{\tau}_n^{\text{T}}\bar{\B{\mu}}_n+\varphi(\bar{\B{\mu}}_n)+\B{\lambda}_n^{\text{T}}|\bar{\B{\mu}}_n|=R(\set{V}^n)+(\B{\tau}^n_{\infty}-\B{\tau}_n)^{\text{T}}\bar{\B{\mu}}_n
\end{align*}
where $\bar{\B{\mu}}_n$ is the solution of \eqref{opt-prob} for $\B{\tau}=\B{\tau}^n_{\infty}$ and $\B{\lambda}=\B{\lambda}_n$. 

If $\underline{\lambda}_n$ is the smallest component of $\B{\lambda}_n$, we have that $\|\bar{\B{\mu}}_n\|_1\leq 1/\underline{\lambda}_n$ because
$$0\leq 1-(\B{\tau}_\infty^n)^{\text{T}}\bar{\B{\mu}}_n+\varphi(\bar{\B{\mu}}_n)+\B{\lambda}_n^{\text{T}}|\bar{\B{\mu}}_n|=R(\set{V}^n)\leq 1$$
and 
$$1-(\B{\tau}_\infty^n)^{\text{T}}\bar{\B{\mu}}_n+\varphi(\bar{\B{\mu}}_n)\geq R(\set{U}^n_\infty)\geq 0.$$
Therefore, with probability at least $1-1/n^2$
\begin{align*}R(\up{h}_n)\leq R(\set{V}^n)+\|\B{\tau}^n_{\infty}-\B{\tau}_n\|_{\infty}\|\bar{\B{\mu}}_n\|_1\leq R(\set{V}^n)+C\frac{\sqrt{2\log (|\set{Y}|(D_n+1)) +2\log 2n^2}}{\sqrt{n}\underline{\lambda}_n}.
\end{align*}
Let $\varepsilon>0$ and consider $N\geq N_0$ such that for any $n\geq N$ 
$$R(\set{V}^n)< R_{\text{Bayes}}+\frac{\varepsilon}{2}\  \  \mbox{ and }\  \  C\frac{\sqrt{2\log (|\set{Y}|(D_n+1)) +2\log 2n^2}}{\sqrt{n}\underline{\lambda}_n}< \frac{\varepsilon}{2}.$$ Such $N$ exists because 
$$R(\set{V}^n)\to R_{\text{Bayes}}\  \  \mbox{ and }\  \  \frac{\sqrt{n}\underline{\lambda}_n}{\sqrt{\log n}}\to \infty$$
when $n$ tends to infinity, and $D_n=\set{O}(n^k)$. 

Therefore, 
$$\sum_{n\geq 1}\mathbb{P}\{R(\up{h}_n)-R_{\text{Bayes}}\geq \varepsilon\}\leq N-1+\sum_{n\geq N}\frac{1}{n^2}<\infty$$
so that the result follows using the Borel-Cantelli Lemma.
\end{proof}

\section{Proof of Theorem~\ref{th_8}}\label{apd:proof_th_8}

\begin{proof}
We first show that optimization problems \eqref{opt-lower} and \eqref{opt-upper} using $\set{X}_s$ instead of $\set{X}$ are equivalent to those using subsets of $\set{X}$ that cover most of the probability mass of the underlying distribution. We then prove that the probabilities of error in such subsets are near the probabilities of error in all the set $\set{X}$.

Let $\B{\mu}_u$ be a solution of the optimization problem
\begin{align}\label{opt-u}\min_{\B{\mu}}1-\B{\tau}^{\text{T}}\B{\mu}+\max_{x\in\set{X}_s,y\in\set{Y}}\left\{\Phi(x,y)^{\text{T}}\B{\mu}-\up{h}_s(y|x)\right\}+\B{\lambda}^{\text{T}}|\B{\mu}|\end{align}
and $\B{\mu}_l$ be a solution of the optimization problem  
\begin{align}\label{opt-l}\max_{\B{\mu}}1-\B{\tau}^{\text{T}}\B{\mu}+\min_{x\in\set{X}_s,y\in\set{Y}}\left\{\Phi(x,y)^{\text{T}}\B{\mu}-\up{h}_s(y|x)\right\}-\B{\lambda}^{\text{T}}|\B{\mu}|\end{align}
Both $\B{\mu}_u$ and $\B{\mu}_l$ exist because $\set{X}_s$ is finite and $\overline{R}_s(\set{U})$ is finite.

If $\set{X}_u$ and $\set{X}_l$ are the sets
$$\set{X}_u=\Big\{x\in\set{X}:\Phi(x,y)^\text{T}\B{\mu}_u-\up{h}_s(y|x)\leq\max_{x\in\set{X}_s}\big(\Phi(x,y)^{\text{T}}\B{\mu}_u-\up{h}_s(y|x)\big), \forall \  y\in\set{Y}\Big\}$$
$$\set{X}_l=\Big\{x\in\set{X}:\Phi(x,y)^\text{T}\B{\mu}_l-\up{h}_s(y|x)\geq\min_{x\in\set{X}_s}\big(\Phi(x,y)^{\text{T}}\B{\mu}_l-\up{h}_s(y|x)\big), \forall \  y\in\set{Y}\Big\}$$
we have that optimization problem \eqref{opt-u} is equivalent to that obtained substituting $\set{X}_s$ by $\set{X}_u$ because the objective function of the former problem is a lower bound for the latter and both coincide in $\B{\mu}_u$ as a direct consequence of the definition of $\set{X}_u$. Similarly, \eqref{opt-l} is equivalent to that obtained substituting $\set{X}_s$ by $\set{X}_l$. We next proof that with probability at least $1-\delta$ over the randomness of $\set{X}_s$, the sets $\set{X}_u$ and $\set{X}_l$ have probability larger than $1-\varepsilon_s$ with respect to the probability distribution $\up{p}^*(x)$. 

For each $j\in\set{Y}$, let $\B{\mu}_{u,j}$ and $\B{\mu}_{l,j}$  be the vectors of size $|\set{Y}|(m+1)$ formed by concatenating the vectors $[\B{\mu}_u^{\text{T}},1]^{\text{T}}\mathbb{I}\{y=j\}$ and $[\B{\mu}_l^{\text{T}},1]^{\text{T}}\mathbb{I}\{y=j\}$ for $y=1,2,\ldots,|\set{Y}|$, respectively. In addition, for each $j\in\set{Y}$, let  $d_{u,j},d_{l,j}\in\mathbb{R}$ be given by
$$d_{u,j}=\max_{x\in\set{X}_s}\big(\Phi(x,j)^{\text{T}}\B{\mu}_u-\up{h}_s(j|x)\big)$$
$$d_{l,j}=\min_{x\in\set{X}_s}\big(\Phi(x,j)^{\text{T}}\B{\mu}_l-\up{h}_s(j|x)\big).$$

Then, denoting $\V{u}(x)=[\Phi(x,1)^{\text{T}},\up{h}_s(1|x),\Phi(x,2)^{\text{T}},\up{h}_s(2|x),\ldots,\Phi(x,|\set{Y}|)^{\text{T}}\up{h}_s(|\set{Y}||x),]^{\text{T}}\in\mathbb{R}^{|\set{Y}|(m+1)}$ we have that 
$$\set{X}_u=\{x\in\set{X}:\  \V{u}(x)^{\text{T}}\B{\mu}_{u,j}\leq d_{u,j},\forall\  j\in\set{Y}\}$$
$$\set{X}_l=\{x\in\set{X}:\  \V{u}(x)^{\text{T}}\B{\mu}_{l,j}\geq d_{l,j},\forall\  j\in\set{Y}\}.$$

Hence, if $\set{A}$ is the set of half spaces in $\mathbb{R}^{|\set{Y}|(m+1)}$, and $S(\set{A},s)$ denotes the $s$-th shatter coefficient of $\set{A}$ (see e.g., Theorem 12.5 in \cite{DevGyoLug:96}), we have that with probability at least $1-\delta$
$$\mathbb{E}\{\mathbb{I}\{\V{u}(x)^{\text{T}}\B{\mu}_{u,j}\leq d_{u,j}\}\}\geq \frac{1}{s}\sum_{i=1}^s\mathbb{I}\{\V{u}(x_i)^{\text{T}}\B{\mu}_{u,j}\leq d_{u,j}\}-\sqrt{\frac{32(\log 8S(\set{A},s)+\log\frac{|\set{Y}|}{\delta})}{s}}$$
$$\mathbb{E}\{\mathbb{I}\{\V{u}(x)^{\text{T}}\B{\mu}_{l,j}\geq d_{l,j}\}\}\geq \frac{1}{s}\sum_{i=1}^s\mathbb{I}\{\V{u}(x_i)^{\text{T}}\B{\mu}_{l,j}\geq d_{l,j}\}-\sqrt{\frac{32(\log 8S(\set{A},s)+\log\frac{|\set{Y}|}{\delta})}{s}}$$
for all $j\in\set{Y}$. Therefore, using the union bound and the fact that $S(\set{A},s)\leq 2(s-1)^{(m+1)|\set{Y}|}+2\leq 4 s^{(m+1)|\set{Y}|}$ (see e.g., Corollary 13.1 in \cite{DevGyoLug:96}) we get that with probability at least $1-\delta$ over the randomness of $\set{X}_s$, the sets $\set{X}_u$ and $\set{X}_l$ have probability larger than $1-\varepsilon_s$.

For the last step of the proof, let $R|_{\set{Z}}(\up{h})$ denote the risk of rule $\up{h}$ restricted to $\set{Z}\subset \set{X}$, that is,
$$R|_{\set{Z}}(\up{h})=\int_{x\in\set{Z},y\in\set{Y}}\ell(\up{h},(x,y))\text{d}\up{p}^*|_{\set{Z}}(x,y)$$
for $\up{p}^*|_{\set{Z}}$ the probability measure corresponding to $\up{p}^*$ restricted to $\set{Z}$, that is
$$\up{p}^*|_{\set{Z}}(x,y)=\left\{\begin{array}{cc}\frac{\up{p}^*(x,y)}{\up{p}^*(\set{Z})}&\mbox{if } x\in\set{Z}\\
0&\mbox{otherwise.} \end{array}\right.$$

Then, with probability at least $1-\delta$, we have that
\begin{align}\label{ineq1}R|_{\set{X}_l}(\up{h}_s)-\varepsilon_s\leq R(\up{h}_s)\leq R|_{\set{X}_u}(\up{h}_s)+\varepsilon_s\end{align}
because for any rule $\up{h}$ and set $\set{Z}\subset \set{X}$ 
\begin{align*}R(\up{h})&=\int_{x\in\set{X},y\in\set{Y}}\ell(\up{h},(x,y))\text{d}\up{p}^*(x,y)\\
&=\up{p}^*(\set{Z})\int_{x\in\set{Z},y\in\set{Y}}\ell(\up{h},(x,y))\text{d}\up{p}^*|_{\set{Z}}(x,y)+\int_{x\in\set{X}\setminus\set{Z},y\in\set{Y}}\ell(\up{h},(x,y))\text{d}\up{p}^*(x,y)\\
&=R|_{\set{Z}}(\up{h})-(1-\up{p}^*(\set{Z}))R|_{\set{Z}}(\up{h})+\int_{x\in\set{X}\setminus\set{Z},y\in\set{Y}}\ell(\up{h},(x,y))\text{d}\up{p}^*(x,y)
\end{align*}
so that
\begin{align*}R(\up{h})&\leq R|_{\set{Z}}(\up{h})+(1-\up{p}^*(\set{Z}))(1-R|_{\set{Z}}(\up{h}))\leq R|_{\set{Z}}(\up{h})+(1-\up{p}^*(\set{Z}))\\
R(\up{h})&\geq R|_{\set{Z}}(\up{h})-(1-\up{p}^*(\set{Z}))\end{align*}
because $0\leq\ell(\up{h},(x,y))\leq 1$ for any $x$ and $y$. 

Taking
$$\set{U}_1=\{\up{p}\in\Delta(\set{X}_u\times\set{Y}):\  |\mathbb{E}_\up{p}\Phi-\B{\tau}|\preceq\B{\lambda}+2C\varepsilon_s\V{1}\}$$
$$\set{U}_2=\{\up{p}\in\Delta(\set{X}_l\times\set{Y}):\  |\mathbb{E}_\up{p}\Phi-\B{\tau}|\preceq\B{\lambda}+2C\varepsilon_s\V{1}\}$$
we have that $\up{p}^*\in\set{U}$ implies that $\up{p}^*|_{\set{X}_u}\in\set{U}_1$ and $\up{p}^*|_{\set{X}_l}\in\set{U}_2$ with probability at least $1-\delta$. Then, using Theorem~\ref{th-bounds} we have that with probability at least $1-2\delta$
\begin{align}\label{ineq2}R|_{\set{X}_u}(\up{h}_s)& \leq\sup_{\up{p}\in\set{U}_1}\ell(\up{h}_s,\up{p})\leq\inf_{\B{\mu}}1-\B{\tau}^{\text{T}}\B{\mu}+\sup_{x\in\set{X}_u,y\in\set{Y}}\{\Phi(x,y)^{\text{T}}\B{\mu}-\up{h}_s(y|x)\}+\B{\lambda}^{\text{T}}|\B{\mu}|+2C\varepsilon_s\|\B{\mu}\|_1\nonumber\\
&\leq 1-\B{\tau}^{\text{T}}\B{\mu}_u+\sup_{x\in\set{X}_u,y\in\set{Y}}\{\Phi(x,y)^{\text{T}}\B{\mu}_u-\up{h}_s(y|x)\}+\B{\lambda}^{\text{T}}|\B{\mu}_u|+2C\varepsilon_s\|\B{\mu}_u\|_1\nonumber\\
&=1-\B{\tau}^{\text{T}}\B{\mu}_u+\max_{x\in\set{X}_s,y\in\set{Y}}\{\Phi(x,y)^{\text{T}}\B{\mu}_u-\up{h}_s(y|x)\}+\B{\lambda}^{\text{T}}|\B{\mu}_u|+2C\varepsilon_s\|\B{\mu}_u\|_1\nonumber\\
&= \overline{R}_s(\set{U})+\varepsilon_s2C\|\B{\mu}_u\|_1
\end{align}
using the definition of $\set{X}_u$ and Corollary~\ref{cor-upper}. In addition,  we have that with probability at least $1-2\delta$
\begin{align}\label{ineq3}R|_{\set{X}_l}(\up{h}_s)& \geq\inf_{\up{p}\in\set{U}_2}\ell(\up{h}_s,\up{p})\geq\sup_{\B{\mu}}1-\B{\tau}^{\text{T}}\B{\mu}+\inf_{x\in\set{X}_l,y\in\set{Y}}\{\Phi(x,y)^{\text{T}}\B{\mu}-\up{h}_s(y|x)\}-\B{\lambda}^{\text{T}}|\B{\mu}|-2C\varepsilon_s\|\B{\mu}\|_1\nonumber\\
&\geq 1-\B{\tau}^{\text{T}}\B{\mu}_l+\inf_{x\in\set{X}_l,y\in\set{Y}}\{\Phi(x,y)^{\text{T}}\B{\mu}_l-\up{h}_s(y|x)\}-\B{\lambda}^{\text{T}}|\B{\mu}_l|-2C\varepsilon_s\|\B{\mu}_l\|_1\nonumber\\
&=1-\B{\tau}^{\text{T}}\B{\mu}_l+\min_{x\in\set{X}_s,y\in\set{Y}}\{\Phi(x,y)^{\text{T}}\B{\mu}_l-\up{h}_s(y|x)\}-\B{\lambda}^{\text{T}}|\B{\mu}_l|-2C\varepsilon_s\|\B{\mu}_l\|_1\nonumber\\
&= \underline{R}_s(\set{U})-\varepsilon_s2C\|\B{\mu}_l\|_1
\end{align}
using the definition of $\set{X}_l$. Therefore,  the result is obtained since inequalities \eqref{ineq1}, \eqref{ineq2}, \eqref{ineq3} are simultaneously satisfied with probability at least $1-2\delta$.
\end{proof}


\section{Proof of Theorem~\ref{thm:comp.Tao-1}}
\label{apd:proof_thm:comp.Tao-1}
\begin{proof}
Let $\B{\alpha}$, $\V{G}$, and $\V{H}$ be given as in Algorithm~\ref{alg.Tao-1}.  If $g(\B{\mu})$ is the subgradient given by  \eqref{eq:subgrad}, we have that 
  \begin{align*}
    \V{F}\left(\B{\mu}-cg(\B{\mu})\right)+\V{b}&=                                          \V{F}\left(\B{\mu}-c\left(\V{a}+
\B{\lambda}\odot\mbox{sign}(\B{\mu}) +
\text{col}_{i{(\B{\mu})}}(\V{F})\right)\right)+\V{b}\\
&  =
\V{F}\B{\mu}+\V{b}-c\left( \B{\alpha} +\V{H}\mbox{sign}(\B{\mu}_-) +\mbox{col}_{i(\B{\mu})}(\V{G})\right)-
c\V{H}\left(\mbox{sign}(\B{\mu})-\mbox{sign}(\B{\mu}_-)\right)
  \end{align*}
  for any  $\B{\mu}_-\in\mathbb{R}^m$ and $c\in\mathbb{R}$. Then, using induction it is straightforward to show that in Algorithm~\ref{alg.Tao-1} we have that
  \begin{align*}
  \V{y}_{k+1} &= \B{\mu}_k-c_kg(\B{\mu}_k),& \\
  \B{\mu}_{k+1} &=\V{y}_{k+1}+\theta_k(\theta_k^{-1}-1)(\V{y}_{k+1}-\V{y}_k)\\
  \V{w}_{k+1}&=\V{F}\V{y}_{k+1}+\V{b}\\
   \V{v}_{k+1}&=\V{F}\B{\mu}_{k+1}+\V{b}\\ i_{k+1} &=  \arg\max\V{F}\B{\mu}_{k+1}+\V{b} 
  \end{align*}
for all $k$. Therefore the sequences $\{\B{\mu}_k\}$ generated by Algorithms \ref{alg.Tao} and \ref{alg.Tao-1} are identical.

To prove the second claim, note that the computational complexity of Algorithm~\ref{alg.Tao} in each iteration is given by the  multiplication $\V{F}\B{\mu}_{k+1}+\V{b}$ in step 6, which has a time complexity of $\set{O}(pm)$. On the other hand,  the computational complexity  of Algorithm~\ref{alg.Tao-1} in each iteration is determined by steps 7-14 which have a time complexity of $\set{O}(pm\gamma(\B{\Delta}_k))$ where $\gamma(\B{\Delta}_k)$ denotes the fraction of non-zero components of vector  $\B{\Delta}_k=\mbox{sign}(\B{\mu}_{k+1})- \mbox{sign}(\B{\mu}_{k})$. 
\end{proof}

\newpage

\end{document}